\documentclass[11pt]{article}

\usepackage{arxiv}

\usepackage{amsmath,amsfonts,latexsym,fullpage,graphicx,vmargin, mathrsfs}
\usepackage{pstricks}
\usepackage{comment}
\usepackage[inline]{enumitem}
\usepackage{cleveref}

\usepackage{xcolor}
\usepackage{url}

\usepackage{amssymb}
\usepackage{mathtools}
\usepackage{amsmath,amsfonts}

\usepackage{subcaption}
\usepackage{scalerel}
\usepackage{url}
\usepackage{graphicx}
\usepackage{booktabs}
\usepackage{cleveref}
\usepackage{float}
\floatstyle{plaintop}
\restylefloat{table}

\usepackage{multirow}

\DeclareMathOperator{\pers}{Pers}
\DeclareMathOperator{\relu}{ReLU}
\DeclareMathOperator{\supp}{supp}

\DeclareMathOperator{\POT}{POT}
\DeclareMathOperator{\OT}{OT}
\DeclareMathOperator{\Lip}{Lip}

\newcommand{\aug}{\mathrm{aug}}

\newcommand{\PL}{\mathrm{PL}}

\newtheorem{prop}{Proposition}
\newtheorem{defi}{Definition}
\newtheorem{rmk}{Remark}

\newtheorem{lem}{Lemma}
\newtheorem{cor}{Corollary}

\newcommand{\R}{\mathbb{R}}
\newcommand{\N}{\mathbb{N}}
\newcommand{\Ss}{\mathbb{S}}

\newcommand{\virgolette}[1]{``#1''}

\firstpageno{1}

\begin{document}

\title{Persistence Spheres: a Bi-continuous Linear Representation of Measures for Partial Optimal Transport}

\author{Matteo Pegoraro\thanks{Institute of Computing,
       Faculty of Informatics,
       Universitá della Svizzera Italiana}}

%

\maketitle

\begin{abstract}
We improve and extend persistence spheres, introduced in~\cite{pegoraro2025persistence}.
Persistence spheres map an integrable measure $\mu$ on the upper half-plane, including persistence diagrams (PDs) as counting measures, to a function $S(\mu)\in C(\mathbb{S}^2)$, and the map is stable with respect to 1-Wasserstein partial transport distance $\mathrm{POT}_1$. Moreover, to the best of our knowledge, persistence spheres are the first explicit representation used in topological machine learning for which continuity of the inverse on the image is established at every compactly supported target. Recent bounded-cardinality bi-Lipschitz embedding results in partial transport spaces, despite being powerful, are not given by the kind of explicit summary map considered here. Our construction is rooted in convex geometry: for positive measures, the defining ReLU integral is the support function of the lift zonoid. Building on~\cite{pegoraro2025persistence}, we refine the definition to better match the $\mathrm{POT}_1$ deletion mechanism, encoding partial transport via a signed diagonal augmentation. In particular, for integrable $\mu$, the uniform norm between $S(0)$ and $S(\mu)$ depends only on the persistence of $\mu$, without any need of ad-hoc re-weightings, reflecting optimal transport to the diagonal at persistence cost. This yields a parameter-free representation at the level of measures (up to numerical discretization), while accommodating future extensions where $\mu$ is a smoothed measure derived from PDs (e.g., persistence intensity functions~\citep{wu2024estimation}). Across clustering, regression, and classification tasks involving functional data, time series, graphs, meshes, and point clouds, the updated persistence spheres are competitive and often improve upon persistence images, persistence landscapes, persistence splines, and sliced Wasserstein kernel baselines.
\end{abstract}

\begin{keywords}
  lift zonoid, persistence diagrams, vectorization, topological machine learning
\end{keywords}

\section{Introduction}

Topological Data Analysis (TDA) provides geometric descriptors designed to capture qualitative structure in data while being robust to noise and, in many contexts, insensitive to parametrization. Its flagship tool, persistent homology, tracks homological features across a scale parameter, recording when connected components, loops, and higher-dimensional cavities are created and later filled in. The resulting summaries are typically encoded as persistence diagrams (PDs) or barcodes, which have become standard objects for exploratory analysis and for building topologically informed learning pipelines \citep{edelsbrunner2010computational, oudot2015persistence}.

\paragraph{Distances and the non-linear geometry of diagrams.}
From a statistical viewpoint, PDs are most naturally compared by Wasserstein-type distances defined through partial optimal transport (POT), where unmatched mass is optimally sent to the diagonal \citep{divol2021understanding}. These metrics are central to stability results, but they endow the space of diagrams with a strongly non-linear geometry. As a consequence, even elementary operations, such as averaging, regression, or principal component analysis, do not admit straightforward analogues. For instance, averages can be formulated as Wasserstein barycenters \citep{mileyko2011probability}, which can be costly to approximate and may be non-unique, complicating both computation and interpretation.

\paragraph{Topological machine learning via vectorizations.}
A large body of work therefore focuses on mapping PDs into linear spaces where classical statistical tools apply. This line of research underpins topological machine learning \citep{papamarkou2024position}, where vectorized topological features and differentiable topological objectives are used in predictive tasks and representation learning \citep{moor2020topological, waylandmapping}. Existing approaches include explicit embeddings and feature maps, as well as kernel methods \citep{reininghaus2015stable, kusano2018kernel, carriere2017sliced}. Among explicit embeddings, one finds constructions based on descriptive statistics \citep{asaad2022persistent}, algebraic or tropical coordinates \citep{kalivsnik2019tropical, monod2019tropical, di2015comparing}, and functional representations such as landscapes, images, and related summaries \citep{bubenik2015statistical, adams2017persistence, biscio2019accumulated, dong2024persistence, gotovac2025topological}, alongside other geometric encodings \citep{mitra2024geometric}. To the best of our knowledge, however, none of these explicit vectorizations comes with some continuity statement for the inverse map. By contrast, for kernel-induced Hilbert embeddings, the sliced Wasserstein construction yields such a bi-continuity statement on bounded-cardinality subclasses, as we recall below.

At the same time, strong impossibility results show that one cannot hope for a globally faithful linearization of the Wasserstein geometry of PDs. Following the synthesis in \citet{mitra2024geometric}, the picture can be summarized as follows:
(i) the space of diagrams with finitely many points does not admit a bi-Lipschitz embedding into any Hilbert space \citep{mitra2021space};
(ii) even restricting to at most $n$ points, no bi-Lipschitz embedding into any finite-dimensional Euclidean space exists \citep{carriere2019metric};
(iii) on such bounded-cardinality classes, a bi-continuous embedding into a Hilbert space can be achieved via the sliced Wasserstein construction \citep{carriere2017sliced};
(iv) on at most $n$ points one can also obtain coarse embeddings into Hilbert spaces \citep{mitra2021space, mitra2024geometric};
and (v) more recently, bi-Lipschitz embeddings into a Hilbert space have been obtained for diagrams with at most $n$ points \citep{bate2024bi}.
Notably, among these results, \citet{carriere2017sliced, mitra2024geometric} provide explicit geometric constructions.

More precisely, \citet{carriere2017sliced} prove that the sliced Wasserstein distance itself is bi-Lipschitz equivalent to $\POT_1$ under a uniform bound on the number of points, while for the RKHS distance induced by the sliced Wasserstein kernel the comparison with $\POT_1$ is given through continuous control functions on the same bounded-cardinality classes.

\paragraph{Persistence spheres and bi-continuity \citep{pegoraro2025persistence}.}
In \citet{pegoraro2025persistence} we introduced persistence spheres, a functional representation sending a persistence diagram to a real-valued function on $\Ss^2$, with two complementary guarantees: Lipschitz stability with respect to $\POT_1$ and continuity of the inverse on the image. In this sense, persistence spheres provide a form of geometric faithfulness which, to the best of our knowledge, is not available for other explicit representations of persistence diagrams currently used in topological machine learning. The construction is rooted in the lift-zonoid representation of a measure: for a positive integrable measure $\mu$, the map
\[
v \longmapsto \int \relu(\langle v,(1,p)\rangle)\,d\mu(p)
\]
is the support function of the lift zonoid $Z_\mu$ \citep{koshevoy1998lift,hendrych2022note}.

\paragraph{Comparison with \citet{pegoraro2025persistence}.}
Relative to \citet{pegoraro2025persistence}, we strengthen the framework in four directions:
\begin{enumerate}
\item \textbf{POT coherence without re-weighting.} The refined definition matches the $\POT_1$ geometry of deletions: the uniform norm between $S(0)$ and $S(\mu)$ depends only on the persistence of $\mu$, reflecting that unmatched mass is optimally sent to the diagonal at cost equal to its persistence (see \Cref{rmk:pot_to_zero}). As a consequence, the representation can be defined directly at the level of measures, without introducing reweighting schemes or tuning parameters (beyond numerical choices such as the discretization grid used to evaluate the functions).
\item \textbf{A measure-theoretic formulation.} We work with integrable measures on the upper half-plane endowed with $\POT_1$ \citep{divol2021understanding}, covering classical PDs (as discrete measures) and extending seamlessly to smooth summaries such as persistence intensity functions \citep{wu2024estimation}. This unified setting is geared toward statistical modeling of persistence objects, allowing one to treat discrete diagrams and smoothed/estimated representations within a single geometry.
\item \textbf{A comparative analysis of topological summaries.} Beyond introducing the revised persistence-sphere map, we compare how several standard summaries, including persistence landscapes, persistence images, persistence splines, and sliced Wasserstein constructions, deform the underlying $\POT_1$ geometry. This highlights the specific geometric biases induced by different vectorizations and helps explain part of their empirical behavior in the supervised and unsupervised experiments.
\item \textbf{Improved empirical performance.} Repeating the full experimental suite of \citet{pegoraro2025persistence} with the refined definition yields generally improved results across the same case studies, while avoiding any ad hoc reweighting of persistence pairs.
\end{enumerate}

\paragraph{Outline.}
After this introduction, \Cref{sec:convex} recalls the convex-analytic background on support functions and lift zonoids, and \Cref{sec:POT} develops the measure-theoretic setting, including integrable measures, partial optimal transport, and its comparison with ordinary optimal transport on cross-augmented measures. In \Cref{sec:lift_zonoids} we review the classical lift-zonoid transform and its continuity properties, and in \Cref{sec:pers_spheres} we introduce the new signed lift-zonoid formulation of persistence spheres, prove injectivity, and analyze basic qualitative phenomena. The core bi-continuity theory is established in \Cref{sec:uniform}: we first prove uniform stability with respect to $\POT_1$, then derive inverse continuity, including a local H\"older-type inverse bound on fixed compact sets. We then pass to Hilbert-valued formulations in \Cref{sec:hilbert}, where we obtain $L^2(\Ss^2)$ versions of the main continuity results and compare our regime with related Hilbert embeddings from the literature. The empirical part evaluates the revised construction in unsupervised simulations (\Cref{sec:unsupervised}) and on supervised benchmarks (\Cref{sec:supervised}), before \Cref{sec:discussion} concludes. The appendices collect the Sobolev and ReLU details used in the local inverse estimate (\Cref{app:sobolev_relu}), place signed augmentation among other linear summaries (\Cref{sec:augment_discussion}), and illustrate geometric biases induced by common vectorizations (\Cref{sec:deforming_geometry}).

\section{Convex Sets and Support Functions}
\label{sec:convex}

We briefly review the notation and concepts from convex analysis and geometry that will be used throughout.
Standard references include \cite{rockafellar1997convex, salinetti_convex}.

\begin{defi}\label{def:minkowski}
Given convex sets $A,B\subset \R^3$ and a scalar $\lambda\ge 0$, define
\[
A\oplus B \coloneqq \{a+b:\ a\in A,\ b\in B\},
\qquad
\lambda A \coloneqq \{\lambda a:\ a\in A\}.
\]
\end{defi}

\begin{defi}\label{def:support}
Given a compact convex set $A\subset \R^3$, its support function is
\[
h_A:\R^3\to\R,
\qquad
h_A(v)\coloneqq \max_{a\in A}\langle v,a\rangle.
\]
\end{defi}

The restriction $h_A|_{\Ss^{2}}$ determines $h_A$ (hence $A$), and Minkowski operations are linearized by support
functions:
\[
h_{\lambda_1 A\oplus \lambda_2 B}=\lambda_1 h_A+\lambda_2 h_B
\qquad (\lambda_1,\lambda_2\ge 0).
\]

\begin{defi}\label{def:hausdorff}
Given compact subsets $A,B\subset Z$, with $(Z,d_Z)$ a metric space, their Hausdorff distance is
\[
d_H(A,B)\coloneqq
\max\Big\{\sup_{a\in A} d_Z(a,B),\ \sup_{b\in B} d_Z(b,A)\Big\}.
\]
\end{defi}

\begin{prop}\label{prop:convex}
Let $A,B\subset \R^3$ be nonempty compact convex sets. Then
\[
d_H(A,B)=\|h_A-h_B\|_{L^\infty(\Ss^2)}
=\sup_{v\in \Ss^{2}} |h_A(v)-h_B(v)|,
\]
where $d_H$ here denotes the Hausdorff distance induced by the Euclidean norm on $\R^3$.
In particular, the map
\[
A\longmapsto h_A|_{\Ss^2}
\]
is injective.
\end{prop}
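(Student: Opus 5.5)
The plan is to prove the two inequalities $\|h_A-h_B\|_{L^\infty(\Ss^2)}\le d_H(A,B)$ and $d_H(A,B)\le \|h_A-h_B\|_{L^\infty(\Ss^2)}$ separately. Injectivity is then immediate: if $h_A|_{\Ss^2}=h_B|_{\Ss^2}$, then $d_H(A,B)=0$, and since $A,B$ are compact this forces $A=B$.

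\textbf{Step 1: $\sup_v|h_A(v)-h_B(v)|\le d_H(A,B)$.} Set $\varepsilon=d_H(A,B)$ and fix $v\in\Ss^2$. Pick $a\in A$ attaining $h_A(v)$, which exists by compactness. By definition of $d_H$ there is $b\in B$ with $\|a-b\|\le\varepsilon$; again compactness of $B$ makes the infimum a minimum. By Cauchy--Schwarz,
\[
h_A(v)=\langle v,a\rangle\le\langle v,b\rangle+\|v\|\|a-b\|\le h_B(v)+\varepsilon .
\]
Swapping the roles of $A$ and $B$ gives $|h_A(v)-h_B(v)|\le\varepsilon$.

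\textbf{Step 2: $d_H(A,B)\le\delta:=\sup_{v}|h_A(v)-h_B(v)|$.} This is the main step and the only place convexity is used. I would show $A\subset B\oplus \delta \bar{\mathbb B}$, where $\bar{\mathbb B}$ is the closed unit ball; by symmetry this yields the claim.

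Suppose $a\in A$ with $d(a,B)>\delta$. Let $b^*$ be the metric projection of $a$ onto the closed convex set $B$; it exists and is unique. Put $v=(a-b^*)/\|a-b^*\|\in\Ss^2$. The projection characterization $\langle a-b^*,\,b-b^*\rangle\le 0$ for all $b\in B$ gives $h_B(v)=\langle v,b^*\rangle$. Hence
\[
h_A(v)\ge\langle v,a\rangle=\langle v,b^*\rangle+\|a-b^*\|>h_B(v)+\delta,
\]
which contradicts the definition of $\delta$. The projection inequality is the one nontrivial ingredient, and it comes from a standard first-order variational argument along the segment from $b^*$ to $b$. Alternatively, one can invoke the separating hyperplane theorem together with the identity $h_{B\oplus\delta\bar{\mathbb B}}=h_B+\delta$ on $\Ss^2$, which follows from the Minkowski linearization stated just before the proposition.

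Finally, since support functions are positively $1$-homogeneous, $h_A|_{\Ss^2}$ determines $h_A$ on all of $\R^3$. This is consistent with the remark preceding the proposition, though the injectivity statement itself only needs Steps 1--2.
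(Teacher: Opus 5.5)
Your proof is correct. Both steps are sound: in Step 1 you evaluate at a maximizer of $\langle v,\cdot\rangle$ on $A$ and apply Cauchy--Schwarz, and in Step 2 you project onto the closed convex set $B$ and use the obtuse-angle characterization, with $v$ well defined because $\|a-b^*\|=d(a,B)>\delta\ge 0$. Injectivity then follows, since $d_H(A,B)=0$ forces $A=B$ for closed sets.

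The paper does not prove this proposition; it states it as standard background with pointers to the convex-analysis literature. Your argument is the usual textbook proof found there. Your remark that convexity enters only in Step 2 is also right: two points and the segment joining them have the same support function but positive Hausdorff distance.
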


\section{Integrable Measures and Partial Optimal Transport}
\label{sec:POT}

For $r\ge 0$, denote
\[
B_r \coloneqq \{p\in \R^2:\ \|p\|_2\le r\},
\qquad
B_r^c \coloneqq \R^2\setminus B_r.
\]

We will work with integrable measures and uniformly integrable sequences; see, e.g., \cite{hendrych2022note}.

\begin{defi}\label{def:integrable}
Let $Z\subset \R^2$ be a Borel set. A positive Borel measure $\mu$ on $Z$ is called
integrable if it is finite and
\[
\int_Z \|p\|_2\,d\mu(p)<\infty.
\]

If $Z=X$, we denote by $\mathcal M$ the set of integrable positive Borel measures on $X$.

A sequence of integrable measures $\{\mu_n\}_{n\in\N}$ on $Z$ is uniformly integrable if
\[
\lim_{r\to\infty}\ \sup_{n\in\N}\ \int_{Z\cap B_r^c}\|p\|_2\,d\mu_n(p)=0.
\]
\end{defi}

Since $\R^2$ is a locally compact Polish space, every finite Borel measure on $\R^2$ is a Radon measure. In particular,
every integrable measure (Definition~\ref{def:integrable}) is Radon, hence the results of \cite{divol2021understanding}
apply throughout.

To compare measures we use weak and vague convergence; see, e.g., \cite{kallenberg1997foundations}.

\begin{defi}\label{def:weak_vague}
Let $Z\subset\R^2$ be a Borel set, endowed with the relative topology. A sequence of integrable measures $\{\mu_n\}_{n\in\N}$ on $Z$ converges weakly to $\mu$, written $\mu_n\xrightarrow{w}\mu$, if
\[
\int_Z f\,d\mu_n \to \int_Z f\,d\mu
\quad\text{for every continuous bounded } f:Z\to\R.
\]
It converges vaguely to $\mu$, written $\mu_n\xrightarrow{v}\mu$, if
\[
\int_Z f\,d\mu_n \to \int_Z f\,d\mu
\quad\text{for every continuous compactly supported } f:Z\to\R.
\]
\end{defi}

\subsection{Measures and persistence}
We adopt a measure-theoretic perspective on persistence diagrams (PDs), following \cite{divol2021understanding}. Set
\[
X \coloneqq \R^2_{x<y}=\{(x,y)\in\R^2:\ x<y\},
\qquad
\Delta \coloneqq \{(x,y)\in\R^2:\ x=y\},
\qquad
\overline X \coloneqq X\cup\Delta.
\]
For $p=(x,y)\in\R^2$, let $\pi_\Delta(p)$ be its projection onto $\Delta$ in the $\infty$-norm:
\[
\pi_\Delta(p)\in \arg\min_{z\in\Delta}\|p-z\|_{\infty},
\qquad\text{so that}\qquad
\pi_\Delta(x,y)=\Big(d(p),\,d(p)\Big),
\]
with $d(p)\coloneqq \tfrac{x+y}{2}$.
Moreover set
\[
\pers(p)\coloneqq\|p-\pi_\Delta(p)\|_{\infty}=\tfrac{y-x}{2}\qquad (p\in X).
\]

\begin{defi}\label{def:pers_mass}
Let $\mu$ be a measure on $X$ and let $Z\subset X$ be measurable. We define
\[\textstyle
\pers_Z(\mu)\coloneqq \int_Z \pers(p) d\mu(p)= \int_Z \|p-\pi_\Delta(p)\|_{\infty}\, d\mu(p)
=
\frac12\int_Z (y-x)\, d\mu((x,y)).
\]
When $Z=X$, we simply write $\pers(\mu)$.
\end{defi}

Note that, for $p=(x,y)\in X$,
\[
\pers(p)=\frac{y-x}{2}\ \le\ \frac{|y|+|x|}{2}\ =\ \frac{\|p\|_1}{2}\ \le\ \frac{\sqrt2}{2}\,\|p\|_2.
\]
Therefore, for any positive Borel measure $\mu$ on $X$,
\[
\pers(\mu)=\int_X \pers(p)\,d\mu(p)\ \le\ \frac{\sqrt2}{2}\int_X \|p\|_2\,d\mu(p).
\]
So, for every $\mu\in\mathcal{M}$, we have \(\pers(\mu)<\infty\).

\subsection{Optimal Transport and Partial optimal transport.}

If $T,S$ are measurable spaces, $f:T\to S$ is measurable, and $\eta$ is a (Borel) measure on $T$, the pushforward
$f_\#\eta$ is the measure on $S$ defined by
\[
(f_\#\eta)(A)\coloneqq \eta\big(f^{-1}(A)\big)\qquad\text{for every measurable }A\subset S.
\]
For every nonnegative measurable function $g:S\to[0,+\infty]$, one has the identity
\begin{equation}\label{eq:pushforward_identity}
\int_S g(s)\, d(f_\#\eta)(s)=\int_T g\big(f(t)\big)\, d\eta(t).
\end{equation}
For two (positive) measures $\mu,\nu$ on the same measurable space, we write
\[
\mu \le \nu
\quad\Longleftrightarrow\quad
\mu(A)\le \nu(A) \qquad\text{for every measurable }A\subset S.
\]

\smallskip
\noindent\textbf{Partial Optimal Transport.}
Let $\pi_1,\pi_2$ be the canonical projections
\[
\pi_1: X\times X\to X,\quad (u,v)\mapsto u,
\qquad
\pi_2: X\times X\to X,\quad (u,v)\mapsto v.
\]
 A partial transport plan between $\mu,\nu\in\mathcal{M}$
is a Borel measure $\gamma$ on $X\times X$ such that
\[
(\pi_1)_\#\gamma \le \mu,
\qquad
(\pi_2)_\#\gamma \le \nu.
\]
Its cost is
\begin{align*}
\mathcal{C}(\gamma;\mu,\nu)
\coloneqq\;&
\int_{X\times X} \|p-q\|_\infty\, d\gamma(p,q)\\
&\;+\;
\int_{X} \|p-\pi_\Delta(p)\|_\infty\,
d\big(\mu-(\pi_1)_\#\gamma\big)(p)\\
&\;+\;
\int_{X} \|q-\pi_\Delta(q)\|_\infty\,
d\big(\nu-(\pi_2)_\#\gamma\big)(q).
\end{align*}

\begin{defi}\label{def:POT1}
For $\mu,\nu\in\mathcal{M}$ we define the partial optimal transport $1$-Wasserstein distance
\[
\POT_1(\mu,\nu)\coloneqq \inf_{\gamma}\ \mathcal{C}(\gamma;\mu,\nu),
\]
where the infimum ranges over all partial transport plans $\gamma$ between $\mu$ and $\nu$.
\end{defi}

We recall a key convergence criterion for $\POT_1$ from \cite{divol2021understanding}.

\begin{theorem}[Convergence for $\POT_1$]\label{thm:POT_conv}
Let $\mu\in\mathcal{M}$ and let $\{\mu_n\}_{n\in\N}\subset\mathcal{M}$. Then
\[
\POT_1(\mu_n,\mu)\to 0
\quad\Longleftrightarrow\quad
\mu_n\xrightarrow{v}\mu
\ \text{ and }\ 
\pers(\mu_n)\to \pers(\mu).
\]
\end{theorem}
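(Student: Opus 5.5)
The plan is to prove the two implications separately. Throughout, write $\rho(p)\coloneqq\pers(p)=\|p-\pi_\Delta(p)\|_\infty$ for $p\in X$. Note that $\rho$ is $1$-Lipschitz for $\|\cdot\|_\infty$ and extends continuously by zero to $\overline X$.

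\textbf{($\Rightarrow$)} First compare an arbitrary partial plan $\gamma$ between $\mu_n$ and $\mu$ with the two persistence masses. Since $|\rho(p)-\rho(q)|\le\|p-q\|_\infty$, and the unmatched parts contribute exactly their persistence, we get $|\pers(\mu_n)-\pers(\mu)|\le\mathcal C(\gamma;\mu_n,\mu)$. Taking the infimum gives
\[
|\pers(\mu_n)-\pers(\mu)|\le\POT_1(\mu_n,\mu),
\]
which handles the persistence part.

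For vague convergence, it suffices by density to take $f$ Lipschitz with compact support $K\subset X$. Then there is $c>0$ with $\rho\ge c$ on $K$, and we can bound $|f|\le (\|f\|_\infty/c)\,\rho$. Fix a plan $\gamma$ whose cost is within $\varepsilon$ of optimal. Split $\int f\,d\mu_n-\int f\,d\mu$ into a matched part and two unmatched parts.
\begin{itemize}
\item The matched part is $\int (f(p)-f(q))\,d\gamma$. It is controlled by $\Lip(f)\int\|p-q\|_\infty d\gamma$ after using the comparability of norms.
\item Each unmatched part is bounded by $(\|f\|_\infty/c)$ times the corresponding deletion cost.
\end{itemize}
Hence the difference is $O(\POT_1(\mu_n,\mu)+\varepsilon)$, which tends to zero.

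\textbf{($\Leftarrow$)} This is the harder direction, and the strategy is to localize away from the diagonal and away from infinity. Fix $\varepsilon>0$ and choose a compact set $K=\{p\in X:\rho(p)\ge\delta,\ \|p\|\le R\}$. Choose $\delta$ small and $R$ large so that $\int_{X\setminus K}\rho\,d\mu<\varepsilon$; this is possible because $\pers(\mu)<\infty$. Pick $\delta,R$ so that $\mu(\partial K)=0$, which is possible because only countably many level sets of $\rho$ and $\|\cdot\|$ can carry mass.

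The key intermediate claim is that the mass of $\mu_n$ outside $K$ is small in persistence, uniformly in $n$:
\[
\limsup_n\int_{X\setminus K}\rho\,d\mu_n\le \varepsilon.
\]
To prove it, use a continuous compactly supported cutoff $\chi\le \mathbf 1_K$ that approximates $\mathbf 1_K$ from inside. Vague convergence gives $\int\chi\rho\,d\mu_n\to\int\chi\rho\,d\mu$, and together with $\pers(\mu_n)\to\pers(\mu)$ this yields
\[
\int(1-\chi)\rho\,d\mu_n\to\int(1-\chi)\rho\,d\mu\le 2\varepsilon.
\]

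Now build plans as follows. On $K$, the restrictions $\mu_n|_K$ converge weakly to $\mu|_K$, since $\mu(\partial K)=0$. Their total masses converge as well, and $K$ is compact. A classical argument then gives a partial plan between $\mu_n|_K$ and $\mu|_K$ of cost tending to zero. Concretely, transport $\min(m_n,m)$ mass optimally; the 1-Wasserstein distance between normalized measures on a compact set metrizes weak convergence. Send the excess mass $|m_n-m|\to0$ to the diagonal at cost at most $\sup_K\rho\cdot|m_n-m|$. Everything outside $K$ is sent to the diagonal, at cost at most $3\varepsilon$ in the limit. Hence $\limsup_n\POT_1(\mu_n,\mu)\lesssim\varepsilon$.

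The main obstacle is the tightness step near the diagonal and at infinity. There vague convergence alone gives no control, and we must squeeze this control out of the convergence of total persistence. The boundary-null choice of $K$ and the cutoff argument are the points that require care.
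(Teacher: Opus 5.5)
Your proof is correct. The paper itself gives no proof of this statement: it quotes it from \citet{divol2021understanding}, so there is no in-paper argument to compare against, and your argument is an elementary, self-contained substitute.

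In the forward direction, your plan-by-plan bound $|\pers(\mu_n)-\pers(\mu)|\le\POT_1(\mu_n,\mu)$ is the triangle inequality against the null measure, using $\POT_1(\mu,0)=\pers(\mu)$ (\Cref{rmk:pot_to_zero}). You obtain it directly from plans, without first knowing that $\POT_1$ is a metric. The bound $|f|\le(\|f\|_\infty/c)\,\pers$ on a compact subset of $X$ then correctly absorbs the unmatched mass.

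In the converse you build explicit near-optimal plans. On a $\mu$-continuity compact core $K$ you take a rescaled optimal $1$-Wasserstein coupling of the normalized restrictions. The excess core mass and all mass outside $K$ are sent to the diagonal. The tail control is squeezed out of $\pers(\mu_n)\to\pers(\mu)$ through the test function $\chi\rho\in C_c(X)$. This needs neither existence of optimal partial plans nor any compactness argument on spaces of plans. It is the same compact-core-plus-deletion scheme the paper uses in the proof of \Cref{thm:Sunif_to_POT}. There, the tails are controlled by sphere test directions rather than by persistence convergence, and the core by the local H\"older bound rather than by weak convergence of the restrictions.

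Three details to tighten:
\begin{itemize}
\item The reduction ``by density'' to Lipschitz test functions needs $\sup_n\mu_n(K')<\infty$ for every compact $K'\subset X$. This follows from $\mu_n(K')\le\pers(\mu_n)/\min_{K'}\pers$ once persistence convergence is established, so prove that part first, as you in fact do.
\item Your cutoff argument gives $\limsup_n\int_{X\setminus K}\pers\,d\mu_n\le 2\varepsilon$, not $\varepsilon$ as displayed; your final $3\varepsilon$ already reflects this.
\item The normalization on $K$ should treat $\mu(K)=0$ separately. In that case you simply delete $\mu_n|_K$, at cost at most $\sup_K\pers\cdot\mu_n(K)\to0$.
\end{itemize}
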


\smallskip
\noindent\textbf{Optimal Transport.} 
Let $\tilde\mu,\tilde\nu$ be positive Borel measures on $\overline X$ such that
$\tilde\mu(\overline X)=\tilde\nu(\overline X)$. Let $\mathrm{pr}_1,\mathrm{pr}_2$ be the canonical projections
\[
\mathrm{pr}_1:\overline X\times\overline X\to\overline X,\quad (u,v)\mapsto u,
\qquad
\mathrm{pr}_2:\overline X\times\overline X\to\overline X,\quad (u,v)\mapsto v.
\]
We denote by $\Pi(\tilde\mu,\tilde\nu)$ the set of couplings of $\tilde\mu$ and $\tilde\nu$, i.e.,
the set of positive Borel measures $\Gamma$ on $\overline X\times\overline X$ such that
\[
(\mathrm{pr}_1)_\#\Gamma=\tilde\mu,
\qquad
(\mathrm{pr}_2)_\#\Gamma=\tilde\nu.
\]
\begin{defi}\label{def:OT1}
The $1$-Wasserstein distance between $\tilde\mu, \tilde\nu$ positive Borel measures on $\overline X$ is
\[
\OT_1(\tilde\mu,\tilde\nu)
\coloneqq
\inf_{\Gamma\in\Pi(\tilde\mu,\tilde\nu)}
\int_{\overline X\times\overline X}\|u-v\|_\infty\, d\Gamma(u,v).
\]
\end{defi}

\smallskip
\noindent\textbf{Partial Optimal Transport vs Optimal Transport.}

A useful way to view $\POT_1$ is relating it to a standard $\OT_1$ problem after enlarging the space by the diagonal.
In $\POT_1$, any portion of mass that is left unmatched is ``killed'' by sending it to the diagonal, paying the persistence cost $\|p-\pi_\Delta(p)\|_\infty=\pers(p)$.
This can be replicated by an ordinary transport plan if we add to each measure enough mass on the diagonal to serve as a sink (or source) for deletions.
Concretely, if $\mu$ must delete some mass while matching to $\nu$, we compensate by adding to $\nu$ the corresponding diagonal projection $(\pi_\Delta)_\#\mu$, so that all mass is transported in the augmented space $\overline X=X\cup\Delta$.
The resulting augmented measures have equal total mass and can be compared by the usual $1$-Wasserstein distance on $\overline X$.

\begin{defi}\label{def:cross_aug}
Let $\mu,\nu\in\mathcal{M}$. The cross-augmented measure of $\mu$ by $\nu$ is the positive Borel measure on $\overline X$
defined by
\begin{equation}\label{eq:cross_aug_def}
\mu\oplus_\Delta \nu \;\coloneqq\; \mu + (\pi_\Delta)_\#\nu .
\end{equation}
\end{defi}

\begin{prop}\label{prop:cross_aug}
If $\mu,\nu\in\mathcal{M}$, then $\mu\oplus_\Delta \nu$ is an integrable measure on $\overline X$.
\end{prop}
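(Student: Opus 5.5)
We need to show that $\mu\oplus_\Delta\nu=\mu+(\pi_\Delta)_\#\nu$ is a positive Borel measure on $\overline X$ that is finite and has finite first moment. The plan is to treat the two summands separately, since both finiteness and the moment condition are additive. For the first summand, $\mu$ is a measure on $X\subset\overline X$. We regard it as a measure on $\overline X$ by extension by zero, that is, $\mu(A)\coloneqq\mu(A\cap X)$. It is integrable on $\overline X$ because $\mu\in\mathcal M$.

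For the second summand, we first check measurability. The map $\pi_\Delta(x,y)=\big(\tfrac{x+y}{2},\tfrac{x+y}{2}\big)$ is linear, hence continuous, from $X$ to $\Delta\subset\overline X$. Consequently $(\pi_\Delta)_\#\nu$ is a well-defined positive Borel measure on $\overline X$, supported on $\Delta$. Its total mass is $(\pi_\Delta)_\#\nu(\overline X)=\nu(\pi_\Delta^{-1}(\overline X))=\nu(X)<\infty$.

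The key step is the moment bound, which we obtain from the pushforward identity \eqref{eq:pushforward_identity} applied with $g(s)=\|s\|_2$:
\[
\int_{\overline X}\|s\|_2\,d\big((\pi_\Delta)_\#\nu\big)(s)=\int_X\|\pi_\Delta(p)\|_2\,d\nu(p).
\]
For $p=(x,y)$ we have
\[
\|\pi_\Delta(p)\|_2=\tfrac{|x+y|}{\sqrt2}\le\tfrac{|x|+|y|}{\sqrt2}\le\|p\|_2 .
\]
Alternatively, one can note that $\pi_\Delta$ here coincides with the Euclidean orthogonal projection onto the line $\Delta$, which is $1$-Lipschitz and fixes the origin. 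Either way, the integral is bounded by $\int_X\|p\|_2\,d\nu<\infty$.

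Summing the two parts gives finiteness of $\mu\oplus_\Delta\nu$ together with
\[
\int_{\overline X}\|s\|_2\,d(\mu\oplus_\Delta\nu)\le\int_X\|p\|_2\,d\mu+\int_X\|p\|_2\,d\nu<\infty .
\]
There is no real obstacle here. The only points needing care are the formal extension of $\mu$ from $X$ to $\overline X$ and the measurability of $\pi_\Delta$, which follows from its continuity.
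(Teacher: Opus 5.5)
Your proof is correct and follows the paper's argument: you apply the pushforward identity with $g(z)=\|z\|_2$ and the bound $\|\pi_\Delta(p)\|_2\le\|p\|_2$ to control the first moment of $(\pi_\Delta)_\#\nu$, then add the contribution of $\mu$. Your extra checks fill in details the paper leaves implicit, namely finite total mass, measurability of $\pi_\Delta$, and the explicit verification of the projection inequality.
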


\begin{proof}
Using \eqref{eq:pushforward_identity} with $f=\pi_\Delta:X\to\Delta$ and $g(z)=\|z\|_2$, the inequality $\|\pi_\Delta(p)\|_2\le \|p\|_2$ yields
\[
\int_\Delta \|z\|_2\, d(\pi_\Delta)_\#\nu(z)
=\int_X \|\pi_\Delta(p)\|_2\, d\nu(p)
\le \int_X \|p\|_2\, d\nu(p) < \infty.
\]
Therefore,
\[
\int_{\overline X} \|u\|_2\, d(\mu\oplus_\Delta \nu)(u)
=
\int_X \|p\|_2\, d\mu(p)
+\int_\Delta \|z\|_2\, d(\pi_\Delta)_\#\nu(z)
<\infty,
\]
so $\mu\oplus_\Delta \nu$ is integrable on $\overline X$.
\end{proof}

The cross-augmentation construction yields an equivalence (up to universal constants) between $\POT_1$ on $X$ and $\OT_1$ on $\overline X$.
This is crucial because it allows us to treat partial matching and deletions using the classical optimal transport formalism:
in particular we can invoke Kantorovich--Rubinstein duality and the large body of tools developed for $\OT_1$.
Many of the forward estimates in this paper ultimately rely on rewriting differences of Lipschitz sphere functions as integrals against cross-augmented positive measures, so that $\OT_1$ duality becomes available.

\begin{prop}\label{prop:POT_vs_OT_aug}
For all $\mu,\nu\in\mathcal{M}$,
\[
\POT_1(\mu,\nu)\ \le\ \OT_1(\mu\oplus_\Delta \nu,\ \nu\oplus_\Delta \mu)
\ \le\ 2\,\POT_1(\mu,\nu).
\]
\end{prop}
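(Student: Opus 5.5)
Both inequalities go through explicit plans: each partial plan is turned into a coupling of the augmented measures, and each coupling back into a partial plan. The metric facts we need are as follows. For $p\in X$, $\|p-\pi_\Delta(p)\|_\infty=\pers(p)$ and $\pi_\Delta(p)$ minimizes $\|p-z\|_\infty$ over $z\in\Delta$. Also $\pi_\Delta$ is $1$-Lipschitz in $\|\cdot\|_\infty$, since $|d(p)-d(q)|\le\|p-q\|_\infty$. Finally, the triangle inequality gives $\pers(q)\le \|q-p\|_\infty+\pers(p)$.

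\emph{Upper bound $\OT_1\le 2\POT_1$.} Fix a partial plan $\gamma$ with marginals $\gamma_1=(\pi_1)_\#\gamma\le\mu$ and $\gamma_2=(\pi_2)_\#\gamma\le\nu$. We build a coupling $\Gamma$ between $\mu+(\pi_\Delta)_\#\nu$ and $\nu+(\pi_\Delta)_\#\mu$ as a sum of four pieces:
\begin{enumerate*}[label=(\roman*)]
\item $\gamma$ itself, matching $\gamma_1$ to $\gamma_2$;
\item $(\mathrm{id},\pi_\Delta)_\#(\mu-\gamma_1)$, sending unmatched $\mu$-mass to its projection, which uses part of $(\pi_\Delta)_\#\mu$;
\item $(\pi_\Delta,\mathrm{id})_\#(\nu-\gamma_2)$, symmetrically;
\item $(\pi_\Delta\times\pi_\Delta)_\#\gamma$, matching the remaining diagonal masses $(\pi_\Delta)_\#\gamma_2$ (source side) and $(\pi_\Delta)_\#\gamma_1$ (target side).
\end{enumerate*}
The marginals add up correctly. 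The cost of (i)+(ii)+(iii) is exactly $\mathcal C(\gamma;\mu,\nu)$. The cost of (iv) is $\int\|\pi_\Delta p-\pi_\Delta q\|_\infty d\gamma\le\int\|p-q\|_\infty d\gamma\le\mathcal C(\gamma;\mu,\nu)$. Taking the infimum over $\gamma$ gives the factor $2$.

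\emph{Lower bound $\POT_1\le\OT_1$.} Take any coupling $\Gamma$ of the augmented measures and decompose it on $\overline X\times\overline X=(X\cup\Delta)^2$.
\begin{itemize}
\item The restriction to $X\times X$ is a partial plan $\gamma$, with $(\pi_1)_\#\gamma\le\mu$ and $(\pi_2)_\#\gamma\le\nu$, because the $X$-parts of the augmented measures are exactly $\mu$ and $\nu$.
\item Mass of $\mu$ not used by $\gamma$ is sent by $\Gamma$ into $\Delta$. Its transport cost is $\ge\|p-\pi_\Delta p\|_\infty=\pers(p)$ by the minimizing property of $\pi_\Delta$. The same holds for unmatched $\nu$ mass coming from $\Delta$.
\item The $\Delta\times\Delta$ part has nonnegative cost.
\end{itemize}
Hence $\int\|u-v\|_\infty d\Gamma\ge\mathcal C(\gamma;\mu,\nu)\ge\POT_1(\mu,\nu)$, and we take the infimum over $\Gamma$.

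\emph{Main obstacle.} The main obstacle is rigor in the lower bound: checking that $\mu-(\pi_1)_\#\gamma$ equals the first marginal of $\Gamma|_{X\times\Delta}$. This follows since $\Gamma$'s first marginal restricted to $X$ is $\mu$, and the sets $X\times X$ and $X\times\Delta$ partition $X\times\overline X$. Measurability issues are routine since all measures are finite and integrable (Proposition~\ref{prop:cross_aug}), so all costs are finite.
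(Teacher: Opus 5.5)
Your proof is correct and follows the paper's argument essentially step for step. For the lower bound you restrict $\Gamma$ to $X\times X$ and use the minimizing property of $\pi_\Delta$ on the $X\times\Delta$ and $\Delta\times X$ blocks; for the upper bound you use the same four-piece coupling, with the $1$-Lipschitz bound on the diagonal-to-diagonal piece.

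One slip needs fixing. Piece (iv) must be the pushforward of $\gamma$ under the swapped map $(p,q)\mapsto(\pi_\Delta(q),\pi_\Delta(p))$, i.e.\ the paper's $(\pi_\Delta\circ\pi_2,\ \pi_\Delta\circ\pi_1)_\#\gamma$. The literal product map $\pi_\Delta\times\pi_\Delta$ has first marginal $(\pi_\Delta)_\#\gamma_1$ rather than the $(\pi_\Delta)_\#\gamma_2$ you correctly identify as needed on the source side, so with it the marginals do not add up. The cost estimate is unaffected by the swap.
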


\begin{proof}
\noindent\textbf{First inequality.}
Fix a coupling $\Gamma\in\Pi(\mu\oplus_\Delta \nu,\ \nu\oplus_\Delta \mu)$ and decompose it by restriction to the four regions
$X\times X$, $X\times\Delta$, $\Delta\times X$, and $\Delta\times\Delta$, namely
\[
\Gamma_{AB}\coloneqq \mathbf{1}_{A\times B}\,\Gamma,
\qquad A,B\in\{X,\Delta\},
\qquad\text{so that}\qquad
\Gamma=\Gamma_{XX}+\Gamma_{X\Delta}+\Gamma_{\Delta X}+\Gamma_{\Delta\Delta}.
\]
Set $\gamma\coloneqq \Gamma_{XX}$. Since the first marginal of $\Gamma$ equals $\mu\oplus_\Delta \nu$ and the second equals
$\nu\oplus_\Delta \mu$, we have
\[
(\pi_1)_\#\gamma=(\pi_1)_\#\Gamma_{XX}\le (\mu\oplus_\Delta \nu)|_{X}=\mu,
\qquad
(\pi_2)_\#\gamma=(\pi_2)_\#\Gamma_{XX}\le (\nu\oplus_\Delta \mu)|_{X}=\nu,
\]
so $\gamma$ is a partial transport plan between $\mu$ and $\nu$. Moreover, by comparing the $X$--parts of the first and second marginals,
\[
\mu-(\pi_1)_\#\gamma = (\pi_1)_\#\Gamma_{X\Delta},
\qquad
\nu-(\pi_2)_\#\gamma = (\pi_2)_\#\Gamma_{\Delta X}.
\]
Hence, by definition of pushforward,
\[
\int_X \|p-\pi_\Delta(p)\|_\infty\,
d\bigl(\mu-(\pi_1)_\#\gamma\bigr)(p)
=
\int_{X\times\Delta} \|p-\pi_\Delta(p)\|_\infty\, d\Gamma_{X\Delta}(p,z).
\]
Since for every $(p,z)\in X\times\Delta$ one has
\[
\|p-\pi_\Delta(p)\|_\infty
=
\inf_{w\in\Delta}\|p-w\|_\infty
\le \|p-z\|_\infty,
\]
it follows that
\[
\int_X \|p-\pi_\Delta(p)\|_\infty\,
d\bigl(\mu-(\pi_1)_\#\gamma\bigr)(p)
\le
\int_{X\times\Delta}\|p-z\|_\infty\, d\Gamma_{X\Delta}(p,z).
\]
The same argument gives
\[
\int_X \|q-\pi_\Delta(q)\|_\infty\,
d\bigl(\nu-(\pi_2)_\#\gamma\bigr)(q)
\le
\int_{\Delta\times X}\|z-q\|_\infty\, d\Gamma_{\Delta X}(z,q).
\]
Thus, we obtain
\begin{align*}
\mathcal{C}(\gamma;\mu,\nu)
\le\;&
\int_{X\times X}\|p-q\|_\infty\, d\Gamma_{XX}(p,q)
+\int_{X\times\Delta}\|p-z\|_\infty\, d\Gamma_{X\Delta}(p,z)
+\int_{\Delta\times X}\|z-q\|_\infty\, d\Gamma_{\Delta X}(z,q)\\
\le\;&
\int_{\overline X\times\overline X}\|u-v\|_\infty\, d\Gamma(u,v),
\end{align*}
where we drop the nonnegative contribution of $\Gamma_{\Delta\Delta}$. Taking the infimum over all couplings $\Gamma$ yields
\[
\POT_1(\mu,\nu)\le \OT_1(\mu\oplus_\Delta \nu,\ \nu\oplus_\Delta \mu).
\]

\smallskip
\noindent\textbf{Second inequality.}
Let $\gamma$ be any partial transport plan between $\mu$ and $\nu$, and set
\[
\mu'\coloneqq (\pi_1)_\#\gamma,\quad
\nu'\coloneqq (\pi_2)_\#\gamma,\quad
\xi\coloneqq \mu-\mu',\quad
\eta\coloneqq \nu-\nu'.
\]
Define a coupling $\Gamma\in\Pi(\mu\oplus_\Delta \nu,\ \nu\oplus_\Delta \mu)$ by
\[
\Gamma \coloneqq
\gamma
+(\mathrm{Id},\pi_\Delta)_\#\xi
+(\pi_\Delta,\mathrm{Id})_\#\eta
+(\pi_\Delta\circ\pi_2,\ \pi_\Delta\circ\pi_1)_\#\gamma,
\]
where $(\mathrm{Id},\pi_\Delta):X\to X\times\Delta$, $(\pi_\Delta,\mathrm{Id}):X\to\Delta\times X$ and
$(\pi_\Delta\circ\pi_2,\pi_\Delta\circ\pi_1):X\times X\to\Delta\times\Delta$ are the indicated maps.

We compute the first marginal term by term. Using linearity of pushforward,
\[
(\mathrm{pr}_1)_\#\Gamma
=
(\mathrm{pr}_1)_\#\gamma
+(\mathrm{pr}_1)_\#(\mathrm{Id},\pi_\Delta)_\#\xi
+(\mathrm{pr}_1)_\#(\pi_\Delta,\mathrm{Id})_\#\eta
+(\mathrm{pr}_1)_\#(\pi_\Delta\circ\pi_2,\ \pi_\Delta\circ\pi_1)_\#\gamma.
\]
Now:
\begin{itemize}
\item Since $\mu'=(\pi_1)_\#\gamma$ and $\mathrm{pr}_1=\pi_1$ on $X\times X$, we have
\[
(\mathrm{pr}_1)_\#\gamma = \mu'.
\]
\item For the second term, $\mathrm{pr}_1\circ(\mathrm{Id},\pi_\Delta)=\mathrm{Id}$ on $X$, hence
\[
(\mathrm{pr}_1)_\#(\mathrm{Id},\pi_\Delta)_\#\xi = \xi.
\]
\item For the third term, $\mathrm{pr}_1\circ(\pi_\Delta,\mathrm{Id})=\pi_\Delta$ on $X$, hence
\[
(\mathrm{pr}_1)_\#(\pi_\Delta,\mathrm{Id})_\#\eta = (\pi_\Delta)_\#\eta.
\]
\item For the last term, $\mathrm{pr}_1\circ(\pi_\Delta\circ\pi_2,\pi_\Delta\circ\pi_1)=\pi_\Delta\circ\pi_2$ on $X\times X$,
so
\[
(\mathrm{pr}_1)_\#(\pi_\Delta\circ\pi_2,\ \pi_\Delta\circ\pi_1)_\#\gamma
=
(\pi_\Delta\circ\pi_2)_\#\gamma
=
(\pi_\Delta)_\#(\pi_2)_\#\gamma
=
(\pi_\Delta)_\#\nu'.
\]
\end{itemize}
Collecting terms gives
\[
(\mathrm{pr}_1)_\#\Gamma
=
\mu' + \xi + (\pi_\Delta)_\#\nu' + (\pi_\Delta)_\#\eta
=
(\mu'+\xi) + (\pi_\Delta)_\#(\nu'+\eta)
=
\mu + (\pi_\Delta)_\#\nu
=
\mu\oplus_\Delta\nu,
\]
where we used $\mu=\mu'+\xi$ and $\nu=\nu'+\eta$ by definition of $\xi,\eta$.

Similarly $(\mathrm{pr}_2)_\#\Gamma=\nu\oplus_\Delta \mu$, hence $\Gamma$ is indeed a coupling of the cross-augmented measures.

For the transport cost, the first three terms contribute exactly
\[
\int_{X\times X}\|p-q\|_\infty\, d\gamma(p,q)
\;+\;
\int_{X}\|p-\pi_\Delta(p)\|_\infty\, d\xi(p)
\;+\;
\int_{X}\|q-\pi_\Delta(q)\|_\infty\, d\eta(q).
\]
For the last term, integrated on $\Delta\times \Delta$, we use that $\pi_\Delta$ is $1$-Lipschitz for $\|\cdot\|_\infty$, hence for $(p,q)\in X\times X$,
\[
\|\pi_\Delta(q)-\pi_\Delta(p)\|_\infty \le \|q-p\|_\infty.
\]
Therefore,
\[
\int_{\Delta\times\Delta}\|u-v\|_\infty\, d\big[(\pi_\Delta\circ\pi_2,\ \pi_\Delta\circ\pi_1)_\#\gamma\big](u,v)
\le
\int_{X\times X}\|p-q\|_\infty\, d\gamma(p,q).
\]
Combining the bounds yields
\[
\int_{\overline X\times\overline X}\|u-v\|_\infty\, d\Gamma(u,v)
\le
2\,\mathcal{C}(\gamma;\mu,\nu).
\]
Taking the infimum over $\Gamma$ on the left-hand side and then the infimum over partial plans $\gamma$ on the right-hand side
gives
\[
\OT_1(\mu\oplus_\Delta \nu,\ \nu\oplus_\Delta \mu)\le 2\,\POT_1(\mu,\nu).
\]
\end{proof}

\section{Lift Zonoids}\label{sec:lift_zonoids}

Our starting point is the classical lift-zonoid construction
\citep{koshevoy1998lift,hendrych2022note}. For an integrable positive measure
$\mu$ on $\R^2$, the map
\[
v\longmapsto \int_{\R^2}\relu(\langle v,(1,p)\rangle)\,d\mu(p)
\]
is the support function of a compact convex set $Z_\mu\subset\R^3$, called the
lift zonoid of $\mu$. This correspondence is one of the main geometric
ingredients underlying the original persistence-sphere construction of
\citet{pegoraro2025persistence}. In the present paper we will eventually need a
linear extension of this ReLU integral to deal with diagonal augmentation, but
we begin with the classical positive setting, where the convex-geometric picture
is most transparent.

For $p=(x,y)\in\R^2$ we write $(1,p)\coloneqq (1,x,y)\in\R^3$ and
$\relu(t)\coloneqq \max\{0,t\}$.

\begin{defi}[Lift zonoid]\label{def:lift_zonoid}
Let $\mu$ be an integrable measure on $\R^2$. The lift zonoid of $\mu$ is
the unique compact convex set $Z_\mu\subset\R^3$ whose support function satisfies
\[
h_{Z_\mu}(v)
=
\int_{\R^2}\relu\bigl(\langle v,(1,p)\rangle\bigr)\,d\mu(p),
\qquad v\in\R^3.
\]
\end{defi}

A key example, already exploited in \citet{pegoraro2025persistence}, is obtained
when $\mu$ is a discrete measure. In that case the lift-zonoid construction
becomes completely explicit: each atom contributes a line segment in $\R^3$, and
the lift zonoid is obtained by Minkowski summation of these segments. This gives
a concrete geometric interpretation of the ReLU integral and is useful both for
intuition and for later comparisons with persistence-diagram summaries. See also \Cref{fig:lift_zonoid}.

\begin{example}\label{ex:lift_zonoid_discrete}
Let $\mu=\sum_{i=1}^n c_i\,\delta_{p_i}$ with $p_i\in\R^2$ and $c_i>0$. Then
\[
Z_\mu=\bigoplus_{i=1}^n c_i\,[0,(1,p_i)]\subset\R^3,
\]
and its support function is
\[
h_{Z_\mu}(v)=\sum_{i=1}^n c_i\,\relu\bigl(\langle v,(1,p_i)\rangle\bigr).
\]
\end{example}

The lift-zonoid correspondence is especially powerful: the map
$\mu\mapsto Z_\mu$ is injective and enjoys sharp continuity properties. In
particular, Hausdorff convergence of lift zonoids is equivalent to weak
convergence together with uniform integrability of the underlying measures. This
``bi-continuity'' principle is one of the central mechanisms in
\citet{pegoraro2025persistence}, since it allows one to pass back and forth
between measures, support functions, and convex bodies.

\begin{prop}[Convergence of Lift Zonoids]\label{prop:zonoids_conv}
Let $\mu$ be an integrable measure on $\R^2$ and let $\{\mu_n\}_{n\in\N}$ be a
sequence of integrable measures. Then
\[
d_H(Z_{\mu_n},Z_{\mu})\to 0
\quad\Longleftrightarrow\quad
\mu_n\xrightarrow{w}\mu
\ \text{ and }\ 
\{\mu_n\}_{n\in\N}\ \text{is uniformly integrable}.
\]
\end{prop}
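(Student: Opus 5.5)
By \Cref{prop:convex}, $d_H(Z_{\mu_n},Z_\mu)=\sup_{v\in\Ss^2}|h_{Z_{\mu_n}}(v)-h_{Z_\mu}(v)|$. The statement is therefore equivalent to uniform convergence on $\Ss^2$ of the ReLU integrals $h_n(v)=\int\relu(\langle v,(1,p)\rangle)\,d\mu_n(p)$. I will prove the two implications separately, working throughout with the functions $f_v(p)=\relu(\langle v,(1,p)\rangle)$. For $v\in\Ss^2$ these satisfy $|f_v(p)|\le \|(1,p)\|_2\le 1+\|p\|_2$, and $|f_v(p)-f_w(p)|\le\|v-w\|_2(1+\|p\|_2)$.

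\textbf{($\Leftarrow$).} First I show that $\sup_n\int(1+\|p\|)\,d\mu_n<\infty$. Weak convergence gives $\mu_n(\R^2)\to\mu(\R^2)$, so total masses are bounded. Uniform integrability bounds $\int_{B_r^c}\|p\|\,d\mu_n$ uniformly for some $r$, while $\int_{B_r}\|p\|\,d\mu_n\le r\mu_n(\R^2)$.

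For fixed $v$, I split $f_v=f_v\chi_r+f_v(1-\chi_r)$, where $\chi_r$ is a continuous cutoff equal to $1$ on $B_r$ and $0$ off $B_{r+1}$. The first piece is bounded and continuous, so weak convergence handles it. The second piece is bounded by $\int_{B_r^c}(1+\|p\|)\,d\mu_n$. By uniform integrability this is small uniformly in $n$ for large $r$, since $\int_{B_r^c}1\,d\mu_n\le r^{-1}\int_{B_r^c}\|p\|\,d\mu_n$; the same bound holds for $\mu$ because $\mu$ is integrable. This gives pointwise convergence $h_n(v)\to h_\mu(v)$.

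The uniform Lipschitz bound in $v$, with constant $\sup_n\int(1+\|p\|)\,d\mu_n$, makes the family $\{h_n\}$ equicontinuous on the compact set $\Ss^2$. Pointwise convergence then upgrades to uniform convergence.

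\textbf{($\Rightarrow$).} Uniform convergence of $h_n$ gives the following.
\begin{enumerate*}[label=(\roman*)]
\item Taking $v=(1,0,0)$, we get $h_n(v)=\mu_n(\R^2)\to\mu(\R^2)$, so masses converge.
\item Taking $v=(c,u)/\|(c,u)\|$ for $u\in\Ss^1$ and $c\in\R$, we get convergence of $\int\relu(c+\langle u,p\rangle)\,d\mu_n$. These are one-dimensional ``integrated tail'' functions of the pushforwards $\langle u,\cdot\rangle_\#\mu_n$.
\end{enumerate*}

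The main obstacle is uniform integrability. My plan is to control $\int_{\{\langle u,p\rangle>t\}}\langle u,p\rangle\,d\mu_n$ for finitely many directions $u$ that cover $\R^2$, say $u\in\{\pm e_1,\pm e_2\}$; this suffices because $\|p\|_2\le\sum_u\relu(\langle u,p\rangle)$. For one direction, $\int\relu(\langle u,p\rangle-t)\,d\mu_n$ converges uniformly in $t\ge0$ after rescaling: the normalized direction $v_t=(-t,u)/\sqrt{1+t^2}$ lies on $\Ss^2$, and multiplying by $\sqrt{1+t^2}$ amplifies the error. To avoid this I use the limit $\mu$ instead. Choose $t_0$ with $\int\relu(\langle u,p\rangle-t_0)\,d\mu<\varepsilon$, then take $n$ large so that the same integral for $\mu_n$ is $<2\varepsilon$. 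Since $\relu(s-t_0)\ge s/2$ for $s>2t_0$, this controls $\int_{\{\langle u,p\rangle>2t_0\}}\langle u,p\rangle\,d\mu_n$ for all large $n$. The finitely many remaining $n$ are individually integrable, so uniform integrability follows.

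Weak convergence follows from vague convergence together with convergence of masses and tightness, and tightness comes from uniform integrability. For vague convergence I use the injectivity and density mechanism. The span of $\{f_v\}$ contains, via second differences in the offset $c$, the functions $\Lambda(c+\langle u,p\rangle)$ with $\Lambda$ a hat function. Convergence of $\int\Lambda(c+\langle u,p\rangle)\,d\mu_n$ for all $c$ and $u$ gives convergence of the one-dimensional projections, since hat functions are convergence determining given tightness. By Cramér--Wold (characteristic functions of projections), any subsequential weak limit $\nu$, which exists by tightness, has the same projections as $\mu$ and hence equals $\mu$. So $\mu_n\xrightarrow{w}\mu$.
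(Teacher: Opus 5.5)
The paper gives no proof of this proposition. It is quoted as a known result from the lift-zonoid literature \citep{koshevoy1998lift,hendrych2022note}. Your argument is therefore a self-contained replacement rather than a variant of an in-paper route, and it is correct.

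The ($\Leftarrow$) direction is complete. You get a uniform first-moment bound, use the cutoff together with uniform integrability for pointwise convergence, and then the equi-Lipschitz bound in $v$ upgrades this to uniform convergence on $\Ss^2$. In ($\Rightarrow$), anchoring the threshold $t_0$ on the integrability of the limit $\mu$ is the right idea. Only one fixed direction $(-t_0,u)/\sqrt{1+t_0^2}$ and one fixed factor $\sqrt{1+t_0^2}$ then enter, which avoids the amplification problem you point out. In fact this direction only uses pointwise convergence of the $h_n$.

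One step needs an extra line. The inequality $\|p\|_2\le\sum_u\relu(\langle u,p\rangle)$ does not by itself turn your directional tail bounds into a bound on $\int_{B_r^c}\|p\|_2\,d\mu_n$. On $B_r^c$ only one of the $\langle u,p\rangle$ needs to be large, and the remaining summands are controlled only through $\mu_n(B_r^c)$. Use instead $\|p\|_2\le\sqrt2\max_u\langle u,p\rangle$, which gives
\[
\|p\|_2\,\mathbf 1_{B_r^c}(p)\le \sqrt2\sum_{u\in\{\pm e_1,\pm e_2\}}\langle u,p\rangle\,\mathbf 1_{\{\langle u,p\rangle\ge r/\sqrt2\}}(p).
\]
Your estimate then applies as soon as $r/\sqrt2>2t_0(u)$ for all four directions $u$.

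The final identification step can also be shortened. Once uniform integrability and tightness are known, any subsequential weak limit $\nu$ is integrable, by lower semicontinuity of $\int\|p\|\,d(\cdot)$. Your ($\Leftarrow$) direction then gives $h_{Z_\nu}=h_{Z_\mu}$, and injectivity of $\mu\mapsto Z_\mu$ yields $\nu=\mu$. Your hat-function and Cram\'er--Wold argument is in effect a proof of that injectivity. Keeping it makes your proof independent of the cited injectivity result, which is what it buys over simply citing the literature.
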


\begin{figure}[t]
    \centering
    \includegraphics[width = 0.6\textwidth]{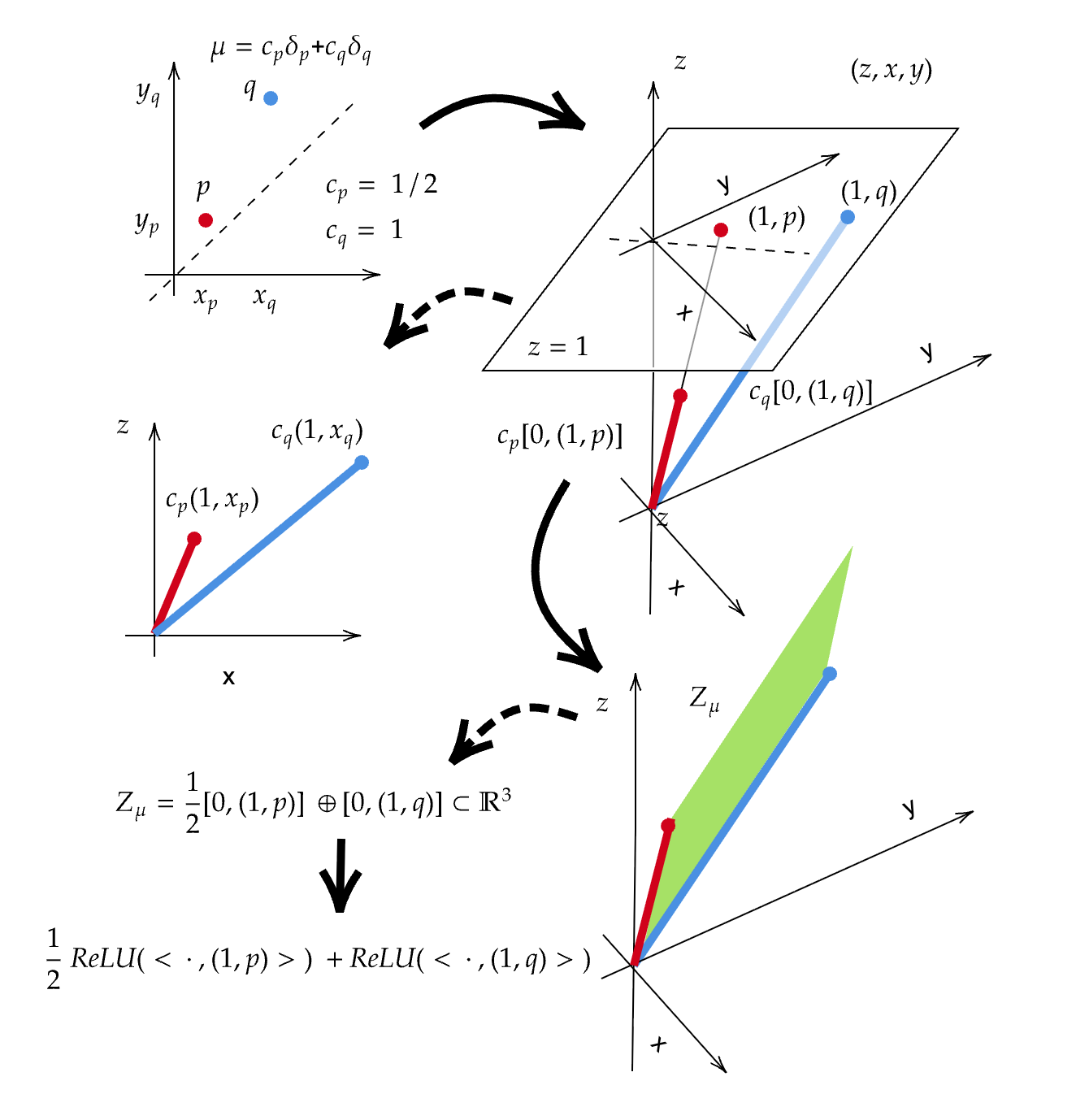}
    \caption{Example of the lift zonoid construction for a discrete measure $\mu=\sum_i c_i\delta_{p_i}$.
    Each atom contributes the segment $c_i[0,(1,p_i)]\subset\R^3$, and $Z_\mu$ is obtained as their Minkowski sum.}
    \label{fig:lift_zonoid}
\end{figure}

The construction developed in the next section is inspired by this classical
picture, but adapted to the partial-transport geometry of persistence diagrams.
There, the natural object is no longer a positive lift zonoid itself, but a
linear ReLU transform applied to the signed augmentation that encodes transport
to the diagonal.

\section{Persistence Spheres via Signed Lift-Zonoid Transforms}\label{sec:pers_spheres}

We now introduce the linear transform underlying the new persistence-sphere
construction. Motivated by \Cref{sec:lift_zonoids}, we keep the same ReLU
integrand as in the lift-zonoid representation, but use it in a purely linear
way so as to accommodate the diagonal augmentation naturally associated with
$\POT_1$.

\begin{defi}[Signed Lift-Zonoid Transform]\label{def:SLZT}
Let $\sigma$ be a finite signed Borel measure on $\R^2$ such that
\[
\int_{\R^2}\|(1,p)\|_2\, d|\sigma|(p)<\infty,
\]
where $|\sigma|$ denotes the total variation measure. The signed
lift-zonoid transform of $\sigma$ is the function
\[
\Lambda(\sigma):\R^3\to\R,
\qquad
\Lambda(\sigma)(v)\coloneqq
\int_{\R^2}\relu\bigl(\langle v,(1,p)\rangle\bigr)\, d\sigma(p).
\]
\end{defi}

\begin{prop}\label{prop:SLZT_linearity}
For $\sigma_1,\sigma_2$ satisfying \Cref{def:SLZT} and
$\lambda_1,\lambda_2\in\R$,
\[
\Lambda(\lambda_1\sigma_1+\lambda_2\sigma_2)
=\lambda_1\,\Lambda(\sigma_1)+\lambda_2\,\Lambda(\sigma_2).
\]
\end{prop}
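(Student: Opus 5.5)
The plan is to reduce the statement to linearity of the integral with respect to the integrating signed measure, after first checking that everything is well defined. The first step is to check that $\sigma\coloneqq\lambda_1\sigma_1+\lambda_2\sigma_2$ again satisfies the integrability condition of \Cref{def:SLZT}. It is a finite signed Borel measure, and its total variation satisfies $|\sigma|\le|\lambda_1|\,|\sigma_1|+|\lambda_2|\,|\sigma_2|$ setwise. Hence
\[
\int_{\R^2}\|(1,p)\|_2\,d|\sigma|(p)\le |\lambda_1|\int_{\R^2}\|(1,p)\|_2\,d|\sigma_1|(p)+|\lambda_2|\int_{\R^2}\|(1,p)\|_2\,d|\sigma_2|(p)<\infty,
\]
so $\Lambda(\sigma)$ is defined.

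Next, fix $v\in\R^3$ and set $f_v(p)\coloneqq\relu(\langle v,(1,p)\rangle)$. This function is continuous, hence Borel. By Cauchy--Schwarz it satisfies $0\le f_v(p)\le\|v\|_2\,\|(1,p)\|_2$. Consequently $f_v\in L^1(|\sigma_1|)\cap L^1(|\sigma_2|)\cap L^1(|\sigma|)$, and each integral $\int f_v\,d\sigma_i$ is a well-defined finite number, given by $\int f_v\,d\sigma_i^+-\int f_v\,d\sigma_i^-$ with finite terms.

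The core step is the identity $\int f\,d(\lambda_1\sigma_1+\lambda_2\sigma_2)=\lambda_1\int f\,d\sigma_1+\lambda_2\int f\,d\sigma_2$ for $f$ integrable with respect to both $|\sigma_1|$ and $|\sigma_2|$. I would prove it in the standard way. It holds for indicators by the definition of the linear combination of measures, and it extends to simple functions by linearity. For nonnegative $f$, approximate by increasing simple functions and pass to the limit. This is justified by dominated convergence applied separately against $\sigma_1^\pm$, $\sigma_2^\pm$ and $\sigma^\pm$, all of which are dominated by the integrable function $f$.

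The only subtle point is that the Jordan decomposition of $\sigma$ is not the combination of those of $\sigma_1$ and $\sigma_2$. I would avoid the issue by integrating against the finite measure $\nu\coloneqq|\sigma_1|+|\sigma_2|$. Every measure in play is absolutely continuous with respect to $\nu$. Writing $\sigma_i=g_i\,\nu$ by Radon--Nikodym, we get $\sigma=(\lambda_1g_1+\lambda_2g_2)\nu$. Then $\int f\,d\sigma=\int f(\lambda_1g_1+\lambda_2g_2)\,d\nu$, and linearity of the Lebesgue integral against the single positive measure $\nu$ concludes. Applying this with $f=f_v$ for every $v$ gives the claimed identity pointwise on $\R^3$.

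I do not expect any genuine obstacle here. The only care needed is in the well-definedness and domination bounds, and the Radon--Nikodym reduction handles them cleanly.
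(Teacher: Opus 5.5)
Your proposal is correct and follows the paper's route. The paper states this proposition without proof, treating it as immediate from linearity of the integral in the signed integrating measure. Your two checks are exactly the details it leaves implicit: that $\lambda_1\sigma_1+\lambda_2\sigma_2$ still satisfies the moment condition of \Cref{def:SLZT} via $|\sigma|\le|\lambda_1||\sigma_1|+|\lambda_2||\sigma_2|$, and the domination $0\le\relu(\langle v,(1,p)\rangle)\le\|v\|_2\|(1,p)\|_2$. Either of your two arguments for the core identity suffices on its own; the simple-function argument already works directly against $\sigma^\pm$, so the Radon--Nikodym reduction is optional.
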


For positive integrable measures, this reduces exactly to the support-function
representation of \Cref{def:lift_zonoid}. Thus the signed transform should be
viewed as a linear extension of the classical lift-zonoid formula rather than as
a different construction.

We define persistence spheres from measures on the upper half-plane $X$ by
augmenting each measure through its diagonal projection $\pi_\Delta$.

\begin{defi}[Augmented Measure]\label{def:augmented_measure}
Let $\mu$ be a positive Borel measure on $X$. We define its augmented
measure as the signed measure on $\overline X$
\[
\mu^{\aug}\coloneqq \mu-(\pi_\Delta)_\#\mu .
\]
\end{defi}

\begin{prop}\label{prop:SLZT_well_defined}
Let $\mu\in\mathcal{M}$. Then $\mu^{\aug}$ satisfies the integrability condition
of \Cref{def:SLZT}, and therefore $\Lambda(\mu^{\aug})$ is well defined.
\end{prop}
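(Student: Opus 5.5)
The plan is to bound the total variation measure of $\mu^{\aug}$ by the sum of $\mu$ and its diagonal pushforward, and then estimate each piece separately using the pushforward identity \eqref{eq:pushforward_identity}.

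First we check that $\mu^{\aug}$ is a finite signed Borel measure on $\overline X\subset\R^2$. Since $\pi_\Delta$ is continuous, hence Borel, $(\pi_\Delta)_\#\mu$ is a positive Borel measure on $\Delta$. Its total mass is $\mu(X)<\infty$, so $\mu^{\aug}$ is the difference of two finite positive measures and is a well-defined finite signed measure. We regard it as a measure on $\R^2$ by extending it by zero. The two pieces live on the disjoint sets $X$ and $\Delta$, so they are mutually singular. Hence $|\mu^{\aug}|=\mu+(\pi_\Delta)_\#\mu=\mu\oplus_\Delta\mu$. Even without using singularity, the inequality $|\mu^{\aug}|\le \mu+(\pi_\Delta)_\#\mu$ holds and already suffices.

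Next we bound the integrand by $\|(1,p)\|_2\le 1+\|p\|_2$ and split the integral into two terms:
\[
\int_{\R^2}\|(1,p)\|_2\,d|\mu^{\aug}|(p)\le \int_{\R^2}(1+\|p\|_2)\,d\mu(p)+\int_{\R^2}(1+\|p\|_2)\,d(\pi_\Delta)_\#\mu(p).
\]
For the first term, finiteness follows from $\mu(X)<\infty$ together with $\int\|p\|_2\,d\mu<\infty$, both given by Definition~\ref{def:integrable}.

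For the second term, we apply \eqref{eq:pushforward_identity} with $g(z)=1+\|z\|_2$. This rewrites the term as $\int_X(1+\|\pi_\Delta(p)\|_2)\,d\mu(p)$. We then use $\|\pi_\Delta(p)\|_2\le\|p\|_2$, the same inequality invoked in the proof of Proposition~\ref{prop:cross_aug}. To verify it, write $p=(x,y)$; then $\|\pi_\Delta(p)\|_2^2=2\bigl(\tfrac{x+y}{2}\bigr)^2=\tfrac{(x+y)^2}{2}\le x^2+y^2$. This bounds the second term by the first, so it is also finite. Alternatively, we can cite Proposition~\ref{prop:cross_aug} directly for the integrability of $\mu\oplus_\Delta\mu$ and add the finite total mass.

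None of these steps is a real obstacle. The only point needing care is identifying, or at least bounding, the total variation $|\mu^{\aug}|$, and mutual singularity of the pieces on $X$ and $\Delta$ handles it. Once $\int\|(1,p)\|_2\,d|\mu^{\aug}|<\infty$, the bound $|\relu(\langle v,(1,p)\rangle)|\le\|v\|_2\|(1,p)\|_2$ shows the integrand is $|\mu^{\aug}|$-integrable for each $v$. Therefore $\Lambda(\mu^{\aug})(v)$ is finite for every $v$, and $\Lambda(\mu^{\aug})$ is well defined.
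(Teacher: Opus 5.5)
Your proof is correct and follows essentially the same route as the paper: bound $|\mu^{\aug}|$ by $\mu+(\pi_\Delta)_\#\mu$, move the diagonal term back to $X$ with the pushforward identity, and use $\|\pi_\Delta(p)\|_2\le\|p\|_2$. The only difference is cosmetic. You pass through $1+\|p\|_2$ instead of applying the identity directly to $g(z)=\|(1,z)\|_2$, and this makes explicit the use of the finite total mass of $\mu$, which the paper leaves implicit.
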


\begin{proof}
Using \eqref{eq:pushforward_identity} with $f=\pi_\Delta:X\to\Delta$ and
$g(z)=\|(1,z)\|_2$, the inequality $\|\pi_\Delta(p)\|_2\le \|p\|_2$ gives
\[
\int_{\Delta}\|(1,z)\|_2\, d(\pi_\Delta)_\#\mu(z)
=
\int_X \|(1,\pi_\Delta(p))\|_2\, d\mu(p)
\le
\int_X \|(1,p)\|_2\, d\mu(p)<\infty.
\]
Hence
\[
\int_{\overline X}\|(1,u)\|_2\, d|\mu^{\aug}|(u)
\le
\int_X \|(1,p)\|_2\, d\mu(p)
+
\int_\Delta \|(1,z)\|_2\, d(\pi_\Delta)_\#\mu(z)
<\infty,
\]
so $\mu^{\aug}$ satisfies \Cref{def:SLZT}.
\end{proof}

\begin{defi}[Persistence sphere]\label{def:PSph}
For $\mu\in\mathcal{M}$, the persistence sphere of $\mu$ is the
restriction to $\Ss^2$ of the signed lift-zonoid transform of its augmentation:
\[
S(\mu):\Ss^2\to\R,
\qquad
S(\mu)\coloneqq \bigl(\Lambda(\mu^{\aug})\bigr)\big|_{\Ss^2}.
\]
\end{defi}

We now relate differences of persistence spheres to cross-augmentation. Given
$\mu,\nu\in\mathcal{M}$, recall $\mu\oplus_\Delta\nu=\mu+(\pi_\Delta)_\#\nu$
from \eqref{eq:cross_aug_def}. A direct computation yields the identity of
signed measures on $\overline X$,
\begin{equation}\label{eq:aug_cross_identity}
\mu^{\aug}-\nu^{\aug}
=
(\mu\oplus_\Delta\nu)-(\nu\oplus_\Delta\mu).
\end{equation}
By linearity of $\Lambda$ (\Cref{prop:SLZT_linearity}), restricting to $\Ss^2$
gives
\begin{equation}\label{eq:S_diff_cross_aug}
S(\mu)-S(\nu)
=
\Bigl(\Lambda(\mu\oplus_\Delta\nu)-\Lambda(\nu\oplus_\Delta\mu)\Bigr)\Big|_{\Ss^2}.
\end{equation}
Consequently, for any norm $\|\cdot\|$ on a function space over $\Ss^2$
(e.g.\ $L^p(\Ss^2)$ or $L^\infty(\Ss^2)$),
\begin{equation}\label{eq:S_norm_cross_aug}
\|S(\mu)-S(\nu)\|
=
\left\|
\Bigl(\Lambda(\mu\oplus_\Delta\nu)-\Lambda(\nu\oplus_\Delta\mu)\Bigr)\Big|_{\Ss^2}
\right\|.
\end{equation}

Note that the measures $\mu\oplus_\Delta\nu$ and $\nu\oplus_\Delta\mu$ are
positive by construction. Moreover, they are integrable on $\overline X$ by
\Cref{prop:cross_aug}. Thus, although persistence spheres are defined through an
augmented signed measure, their differences can be expressed in terms of
positive cross-augmented measures, which will allow us to recover injectivity
and continuity properties by comparison with the classical lift-zonoid setting.

The persistence-sphere construction here differs substantially from the one in
\citet{pegoraro2025persistence}. There, one first reweights the diagram measure
through a persistence-dependent scheme with additional dependence on $\|p\|_2$,
and then applies the lift-zonoid transform, so that the resulting function is
directly the support function of a lift zonoid. In contrast, we avoid
reweighting altogether and encode deletions through diagonal augmentation,
yielding a parameter-free definition at the level of measures (up to numerical
evaluation choices).

As a first step, we obtain injectivity of the novel representation.

\begin{prop}\label{prop:PS_injective}
Let $\mu,\nu\in\mathcal{M}$. If $S(\mu)=S(\nu)$, then $\mu=\nu$.
\end{prop}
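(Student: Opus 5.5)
The plan is to reduce injectivity of $S$ to injectivity of the classical lift-zonoid map, using the cross-augmentation identity \eqref{eq:S_diff_cross_aug}. Suppose $S(\mu)=S(\nu)$. By \eqref{eq:S_diff_cross_aug}, the functions $\Lambda(\mu\oplus_\Delta\nu)$ and $\Lambda(\nu\oplus_\Delta\mu)$ agree on $\Ss^2$. Both measures are positive and integrable on $\overline X\subset\R^2$ (\Cref{prop:cross_aug}), so these functions are the support functions $h_{Z_{\mu\oplus_\Delta\nu}}$ and $h_{Z_{\nu\oplus_\Delta\mu}}$ of \Cref{def:lift_zonoid}. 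Support functions are positively $1$-homogeneous, so agreement on $\Ss^2$ gives agreement on all of $\R^3$. By \Cref{prop:convex} the two lift zonoids coincide, and in particular their Hausdorff distance is zero.

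Next I will recover equality of the underlying measures. Apply \Cref{prop:zonoids_conv} with the constant sequence $\mu_n\coloneqq \mu\oplus_\Delta\nu$ and target $\nu\oplus_\Delta\mu$. Since $d_H=0$, we get $\mu\oplus_\Delta\nu\xrightarrow{w}\nu\oplus_\Delta\mu$ for a constant sequence. Weak limits are unique for finite Borel measures on $\R^2$, so
\[
\mu+(\pi_\Delta)_\#\nu=\nu+(\pi_\Delta)_\#\mu
\]
as measures on $\overline X$.

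Finally, I will separate the parts. $X$ and $\Delta$ are disjoint Borel sets, $\mu,\nu$ are concentrated on $X$, and the pushforwards are concentrated on $\Delta$. Restricting both sides of the identity to $X$ therefore gives $\mu=\nu$.

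The only delicate step is deducing equality of measures from equality of lift zonoids. Going through \Cref{prop:zonoids_conv} with constant sequences handles this cleanly. As a fallback, one can argue directly that the ReLU integrals determine the measure: differentiate $t\mapsto\int\relu(t+\langle w,p\rangle)\,d\sigma$ in $t$ to recover the distribution functions of all one-dimensional projections of the finite signed measure $\sigma=\mu^{\aug}-\nu^{\aug}$, then apply Cramér--Wold to get $\sigma=0$. Either route then finishes with the restriction to $X$.
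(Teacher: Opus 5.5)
Your proposal is correct and follows essentially the paper's argument. The cross-augmentation identity reduces $S(\mu)=S(\nu)$ to equality of the lift-zonoid transforms of the positive integrable measures $\mu\oplus_\Delta\nu$ and $\nu\oplus_\Delta\mu$; injectivity of the lift-zonoid map then gives $\mu+(\pi_\Delta)_\#\nu=\nu+(\pi_\Delta)_\#\mu$; and separating the parts on $X$ and on $\Delta$ yields $\mu=\nu$. The differences are cosmetic: the paper cites lift-zonoid injectivity directly where you extract it from \Cref{prop:zonoids_conv} with a constant sequence, and your restriction to $X$ is a slightly more precise version of the paper's ``disjoint supports'' remark (the two sides are concentrated on disjoint Borel sets, even though their closed supports in $\overline X$ may touch).
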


\begin{proof}
By \Cref{eq:S_norm_cross_aug}, $S(\mu)=S(\nu)$ implies $\Lambda(\mu\oplus_\Delta\nu)=\Lambda(\nu\oplus_\Delta\mu)$, and injectivity of the lift-zonoid transform on integrable measures
\citep{koshevoy1998lift,hendrych2022note}, in turn, gives  $\mu\oplus_\Delta\nu=\nu\oplus_\Delta\mu$ as positive measures on $\overline X$.
Thus, $\mu-\nu = (\pi_\Delta)_\#(\nu-\mu)$. As the measures on the two sides of the equality have disjoint supports, they must be zero for every measurable set.
\end{proof}

\subsection{A Convenient Change of Coordinates}

With the persistence-sphere map now in place, we next introduce a change of coordinates adapted to the structure of $\POT$. This reformulation is not essential for the definition itself, but it simplifies the expressions appearing below and clarifies the geometric role of persistence.

We want to separate diagonal and off--diagonal variability.
For $v=(v_0,v_1,v_2)\in\Ss^2$ define
\begin{equation}\label{eq:st_def}
s(v):=v_1+v_2,
\qquad
t(v):=v_2-v_1,
\end{equation}
which represent, respectively, the components of $(v_1,v_2)$ along $(1,1)$ (diagonal direction) and $(-1,1)$
(off--diagonal direction). For $p=(x,y)\in X$ recall that
\begin{equation}\label{eq:dpers_def}
d(p)=\frac{x+y}{2},
\qquad
\pers(p)=\frac{y-x}{2},
\end{equation}
so that $d$ is the coordinate along the diagonal and $\pers$ is the distance to the diagonal (persistence).
With this notation, the ReLU arguments appearing in the definition of persistence spheres decompose as
\begin{equation}\label{eq:innerprod_decomp_st}
\langle v,(1,p)\rangle
=
v_0+s(v)\,d(p)+t(v)\,\pers(p),
\qquad
\langle v,(1,\pi_\Delta(p))\rangle
=
v_0+s(v)\,d(p).
\end{equation}
In particular, the pointwise integrand in the definition of $S$ is
\begin{equation}\label{eq:Sv_integrand}
\phi_v(p)\coloneqq
\relu\!\big(v_0+d(p)\,s(v)+\pers(p)\,t(v)\big)
-\relu\!\big(v_0+d(p)\,s(v)\big).
\end{equation}

\Cref{eq:innerprod_decomp_st} will be used repeatedly in the upcoming proofs, since it makes explicit how
directions $v$ probe diagonal location ($s(v)$) and how they probe persistence ($t(v)$).
It also makes transparent an important qualitative feature of persistence spheres: under large one-sided
translations along the diagonal, the representation eventually forgets the exact $d$-location and retains only
the persistence contribution, as the two $d$-contributions delete each other.

Lastly we recall that $\relu$ is $1$-Lipschitz:

\begin{equation}\label{eq:relu_1lip}
|\relu(a)-\relu(b)|\le |a-b|\qquad\forall a,b\in\R.
\end{equation}

\begin{prop}[Diagonal drifts]\label{prop:diag_drift_limits}
Let $\{\mu_n\}\subset\mathcal M$ be a sequence of integrable measures and assume there exists $M<\infty$ such that
\begin{equation}\label{eq:supp_pers_uniform_prop}
\sup_n\ \sup_{p\in\supp(\mu_n)} \pers(p)\le M.
\end{equation}

\smallskip
\noindent\textbf{(A$_+$) Drift to $+\infty$ along the diagonal.}
Assume
\begin{equation}\label{eq:diag_drift_plus_prop}
\forall R>0,\qquad \mu_n\!\big(\{d\le R\}\big)\longrightarrow 0.
\end{equation}
Then for every fixed $v\in\Ss^2$ with $s(v)\neq 0$,
\begin{equation}\label{eq:diag_drift_plus_asympt_prop}
S(\mu_n)(v)\;-\;\mathbf 1_{\{s(v)>0\}}\,t(v)\,\pers(\mu_n)\longrightarrow 0.
\end{equation}
In particular, if $\pers(\mu_n)\to P\in[0,\infty)$, then for every such $v$,
\[
S(\mu_n)(v)\longrightarrow
\begin{cases}
0, & s(v)<0,\\[1mm]
t(v)\,P, & s(v)>0.
\end{cases}
\]

\smallskip
\noindent\textbf{(A$_-$) Drift to $-\infty$ along the diagonal.}
Assume
\begin{equation}\label{eq:diag_drift_minus_prop}
\forall R>0,\qquad \mu_n\!\big(\{d\ge -R\}\big)\longrightarrow 0.
\end{equation}
Then for every fixed $v\in\Ss^2$ with $s(v)\neq 0$,
\begin{equation}\label{eq:diag_drift_minus_asympt_prop}
S(\mu_n)(v)\;-\;\mathbf 1_{\{s(v)<0\}}\,t(v)\,\pers(\mu_n)\longrightarrow 0.
\end{equation}
In particular, if $\pers(\mu_n)\to P\in[0,\infty)$, then for every such $v$,
\[
S(\mu_n)(v)\longrightarrow
\begin{cases}
t(v)\,P, & s(v)<0,\\[1mm]
0, & s(v)>0.
\end{cases}
\]
\end{prop}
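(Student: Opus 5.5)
We work pointwise in $v$ and use the integrand $\phi_v$ from \eqref{eq:Sv_integrand}. By Definition~\ref{def:PSph} and the pushforward identity \eqref{eq:pushforward_identity}, $S(\mu_n)(v)=\int_X \phi_v(p)\,d\mu_n(p)$. The aim is to compare $\phi_v(p)$ with $\mathbf 1_{\{s(v)>0\}}\,t(v)\pers(p)$ and show that the discrepancy is supported in a region where $d(p)$ is bounded. We treat case (A$_+$) in detail; case (A$_-$) follows in the same way with the roles of the signs of $s(v)$ exchanged, or equivalently by the reflection $d\mapsto -d$, $s\mapsto -s$.

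Fix $v$ with $s(v)\neq 0$. By \eqref{eq:innerprod_decomp_st}, the two ReLU arguments are $a=v_0+s(v)d(p)+t(v)\pers(p)$ and $b=v_0+s(v)d(p)$. On the support of $\mu_n$ we have $|t(v)\pers(p)|\le |t(v)|M\le \sqrt2 M$. Set $R_v:=(|v_0|+\sqrt2M)/|s(v)|$.

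If $s(v)>0$ and $d(p)>R_v$, then $a>0$ and $b>0$, so $\phi_v(p)=a-b=t(v)\pers(p)$ exactly. If $s(v)<0$ and $d(p)>R_v$, then $a<0$ and $b<0$, so $\phi_v(p)=0$. In both cases $\phi_v(p)=\mathbf 1_{\{s(v)>0\}}t(v)\pers(p)$ on $\{d>R_v\}\cap\supp\mu_n$.

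The error therefore reduces to
\[
\Bigl|S(\mu_n)(v)-\mathbf 1_{\{s(v)>0\}}t(v)\pers(\mu_n)\Bigr|
\le \int_{\{d\le R_v\}}\bigl|\phi_v(p)-\mathbf 1_{\{s(v)>0\}}t(v)\pers(p)\bigr|\,d\mu_n(p).
\]
By \eqref{eq:relu_1lip}, $|\phi_v(p)|\le |t(v)|\pers(p)\le\sqrt2 M$, so the integrand is bounded by $2\sqrt2M$. The right-hand side is then at most $2\sqrt2M\,\mu_n(\{d\le R_v\})$, which tends to $0$ by \eqref{eq:diag_drift_plus_prop}. This proves \eqref{eq:diag_drift_plus_asympt_prop}.

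The "in particular" statement follows immediately: $t(v)\pers(\mu_n)\to t(v)P$ when $s(v)>0$, and the limit is $0$ when $s(v)<0$.

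For (A$_-$), the same computation applies on $\{d<-R_v\}$. There both arguments are positive when $s(v)<0$ and both are negative when $s(v)>0$, and we invoke \eqref{eq:diag_drift_minus_prop}.

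The main point requiring care is the uniform persistence bound \eqref{eq:supp_pers_uniform_prop}. It is what makes the threshold $R_v$ independent of $n$ and of $p$, and what gives a uniform bound on the integrand. Without it, one would need uniform integrability of $\pers$ under $\mu_n$, and hence a truncation argument. No measure-theoretic subtlety arises beyond noting that $\mu_n(\{d\le R\})$ is finite, which holds because $\mu_n$ is finite.
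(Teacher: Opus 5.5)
Your proof is correct and follows essentially the same route as the paper: you fix $v$ and choose an $n$-independent threshold $R_v$, made possible by the uniform persistence bound. Beyond $R_v$ both ReLU arguments share the sign of $s(v)$, so $\phi_v=\mathbf 1_{\{s(v)>0\}}t(v)\pers$ there, and the remainder is bounded by a constant times $\mu_n(\{d\le R_v\})$. The differences are cosmetic: the paper takes $R_v=(|v_0|+|t(v)|M+1)/|s(v)|$, whose $+1$ also guarantees $R_v>0$ so the hypothesis (stated for $R>0$) applies literally. Your $R_v$ can vanish, which is harmless because $R\mapsto\mu_n(\{d\le R\})$ is monotone.
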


\begin{proof}
We prove (A$_+$); the proof of (A$_-$) is analogous.

Fix $v\in\Ss^2$ with $s(v)\neq 0$, and abbreviate
\[
s:=s(v),\qquad t:=t(v).
\]
We choose
\[
R_v:=\frac{|v_0|+|t|M+1}{|s|}
\]
so that, whenever $d(p)\ge R_v$, one has
\[
|s|\,d(p)\ge |v_0|+|t|M+1.
\]

Let now $p\in X$ satisfy $d(p)\ge R_v$ and suppose $s>0$.

Recall that \Cref{eq:innerprod_decomp_st},
\[
\langle v,(1,\pi_\Delta(p))\rangle
=
v_0+s\,d(p),
\qquad
\langle v,(1,p)\rangle
=
v_0+s\,d(p)+t\,\pers(p).
\]
Since $d(p)\ge R_v$, we have
\[
s\,d(p)\ge sR_v=|v_0|+|t|M+1.
\]
Therefore
\[
\langle v,(1,\pi_\Delta(p))\rangle
=
v_0+s\,d(p)
\ge v_0+sR_v
\ge
|t|M+1
>0,
\]
and, using also $\pers(p)\le M$,
\[
\langle v,(1,p)\rangle
=
v_0+s\,d(p)+t\,\pers(p)
\ge
v_0+sR_v-|t|\,\pers(p)
\ge
v_0+sR_v-|t|M
\ge 
1
>0.
\]

Hence both ReLU arguments are positive and therefore
\[
\phi_v(p)
=
\langle v,(1,p)\rangle-\langle v,(1,\pi_\Delta(p))\rangle
=
t\,\pers(p).
\]

If instead $s<0$, then again using \eqref{eq:supp_pers_uniform_prop},
\[
\langle v,(1,\pi_\Delta(p))\rangle
v_0+s,d(p)
\le v_0+sR_v
v_0-|v_0|-|t|M-1
\le -|t|M-1
<0,
\]
and
\[
\langle v,(1,p)\rangle
=
v_0+s\,d(p)+t\,\pers(p)
\le v_0+s R_v + |t|M\le-1 <0.
\]
Hence both ReLU arguments are negative and therefore
\[
\phi_v(p)=0.
\]

Thus, for every $p$ with $d(p)\ge R_v$,
\[
\phi_v(p)=\mathbf 1_{\{s>0\}}\,t\,\pers(p).
\]
It follows that
\[
S(\mu_n)(v)-\mathbf 1_{\{s>0\}}\,t\,\pers(\mu_n)
=
\int_{\{d<R_v\}}\Big(\phi_v(p)-\mathbf 1_{\{s>0\}}\,t\,\pers(p)\Big)\,d\mu_n(p).
\]
Using \eqref{eq:Sv_integrand} and the $1$-Lipschitz property of $\relu$,
\[
|\phi_v(p)|\le |t|\,\pers(p)\le |t|M,
\]
hence
\[
\Big|\phi_v(p)-\mathbf 1_{\{s>0\}}\,t\,\pers(p)\Big|
\le 2|t|M
\qquad\text{for all }p\in X.
\]
Therefore
\[
\big|S(\mu_n)(v)-\mathbf 1_{\{s>0\}}\,t\,\pers(\mu_n)\big|
\le
2|t|M\ \mu_n(\{d<R_v\}).
\]
Since \eqref{eq:diag_drift_plus_prop} holds for every $R>0$, in particular for $R=R_v$, the right-hand side tends to $0$.
This proves \eqref{eq:diag_drift_plus_asympt_prop}. The final convergence statement follows immediately if
$\pers(\mu_n)\to P$.

The proof of (A$_-$) is identical, exchanging the roles of the two tails.
\end{proof}

Moreover, for every $\mu$, the persistence sphere $S(\mu)$ satisfies a sign constraint on the region $\{t(v)<0\}=\{v_1>v_2\}$.

\begin{lem}[A general sign constraint]\label{lem:sign_constraint_t_nonpos}
Let $\mu\in\mathcal M$. Then for every $v\in\Ss^2$ such that $t(v)\le 0$,
\[
S(\mu)(v)\le 0.
\]
\end{lem}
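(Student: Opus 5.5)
The plan is a pointwise argument on the integrand $\phi_v$ from \eqref{eq:Sv_integrand}, followed by integration against the positive measure $\mu$. Write the sphere function through the decomposition \eqref{eq:innerprod_decomp_st}. By the pushforward identity \eqref{eq:pushforward_identity} applied to $(\pi_\Delta)_\#\mu$, for every $v\in\Ss^2$ we have
\[
S(\mu)(v)=\int_X \phi_v(p)\,d\mu(p),
\]
where
\[
\phi_v(p)=\relu\bigl(a(p)+t(v)\pers(p)\bigr)-\relu\bigl(a(p)\bigr),\qquad a(p)\coloneqq v_0+s(v)\,d(p).
\]
Splitting the integral is legitimate because both ReLU terms are $\mu$-integrable by \Cref{prop:SLZT_well_defined}.

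The key step is the pointwise sign. For $p\in X$ we have $\pers(p)=\tfrac{y-x}{2}>0$. Hence, when $t(v)\le 0$, the first argument satisfies
\[
a(p)+t(v)\pers(p)\le a(p).
\]
Since $\relu$ is nondecreasing, it follows that $\phi_v(p)\le 0$ for every $p\in X$. Integrating this nonpositive function against the positive measure $\mu$ gives $S(\mu)(v)\le 0$.

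I do not expect a genuine obstacle. The only point needing care is justifying that $S(\mu)(v)$ equals the integral of $\phi_v$ against $\mu$. This is exactly what the pushforward identity gives, once integrability is known from \Cref{prop:SLZT_well_defined}. As a byproduct, the $1$-Lipschitz bound \eqref{eq:relu_1lip} gives the quantitative estimate
\[
0\ge S(\mu)(v)\ge t(v)\,\pers(\mu)\qquad\text{for } t(v)\le 0.
\]
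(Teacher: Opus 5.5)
Your proposal is correct and follows essentially the same route as the paper: for $t(v)\le 0$ and $\pers(p)>0$, the first ReLU argument in $\phi_v(p)$ is at most the second, so monotonicity of $\relu$ gives $\phi_v\le 0$ pointwise, and integrating against the positive measure $\mu$ gives $S(\mu)(v)\le 0$. Your integrability justification for writing $S(\mu)(v)=\int_X\phi_v\,d\mu$ and the quantitative byproduct $t(v)\,\pers(\mu)\le S(\mu)(v)\le 0$ are both valid additions that the paper leaves implicit.
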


\begin{proof}
By \Cref{eq:Sv_integrand}, if $t(v)\le 0$ then for every $p\in X$ the first ReLU argument is bounded above by
the second one. Hence
\[
\phi_v(p)\le 0\qquad\forall p\in X,
\]
and therefore
\[
S(\mu)(v)=\int_X \phi_v(p)\,d\mu(p)\le 0.
\]
\end{proof}

\Cref{prop:diag_drift_limits} makes precise a flattening phenomenon for persistence spheres under one-sided
diagonal drift. Under a uniform bound on persistence, the sign of the ReLU arguments in
\Cref{eq:innerprod_decomp_st} stabilizes for every fixed direction with $s(v)\neq 0$, and the sphere values
eventually depend only on the total persistence. In this regime, differences purely in the diagonal coordinate
$d$ are asymptotically erased: once mass is translated far enough along $(1,1)$ in one direction, the
representation retains only whether the chosen direction detects that tail, and, if so, through the scalar
factor $t(v)\pers(\mu_n)$. The counting-measure case is made explicit in
\Cref{cor:counting_shift_pointwise_limit}.

Note that this is not an isolated pathology of persistence spheres. Any use of a linear representation
together with the linear operations of its target space necessarily alters some aspects of the original
partial-transport geometry. For instance, barycenters in the representation space---that is, arithmetic
averages---generally do not correspond to Wasserstein barycenters of the underlying measures. What is specific to persistence
spheres is the particularly transparent form that this effect takes under diagonal drift, which can be described
explicitly in terms of the coordinates $s(v)$ and $t(v)$. We return to this broader theme in
\Cref{sec:deforming_geometry}, where persistence spheres are compared with other standard summaries from this
viewpoint.

By contrast, \Cref{lem:sign_constraint_t_nonpos} is a general one-sided fact: directions with $t(v)\le 0$
always yield a nonpositive contribution, independently of any asymptotic regime.

When $P>0$, the pointwise limits on the two open hemispheres given by
\Cref{prop:diag_drift_limits} cannot be the restrictions of a continuous function on $\Ss^2$. For drift to
$+\infty$, they equal $0$ on $\{s(v)<0\}$ and $t(v)P$ on $\{s(v)>0\}$; for drift to $-\infty$, the two roles are
reversed. At every point of the great circle $\{s(v)=0\}$ with $t(v)\neq0$, the limits from the two hemispheres
therefore disagree. Consequently, a sequence satisfying either drift regime with $\pers(\mu_n)\to P>0$ cannot
have persistence spheres converging uniformly on $\Ss^2$, since a uniform limit of continuous functions is
continuous. When $P=0$, this discontinuity argument does not apply. This observation helps explain why large
one-sided diagonal drift must be excluded, or quantitatively controlled, in the uniform inverse-continuity
arguments developed later.

\begin{cor}\label{cor:counting_shift_pointwise_limit}
Let $\{p_1,\dots,p_N\}$ be a finite set and $\mu=\sum_{i=1}^N\delta_{p_i}$.
Define $\mu_k:=\sum_{i=1}^N\delta_{p_i+(k,k)}$.
Then for every $v\in\Ss^2$ with $s(v)\neq 0$,
\[
\lim_{k\to\infty} S(\mu_k)(v)
=
\begin{cases}
0, & s(v)<0,\\[1mm]
t(v)\,\pers(\mu), & s(v)>0.
\end{cases}
\]
\end{cor}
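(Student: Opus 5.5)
The plan is to obtain the corollary as a direct specialization of part (A$_+$) of \Cref{prop:diag_drift_limits} to the sequence $\{\mu_k\}_{k\in\N}$. Everything reduces to checking the two hypotheses of that proposition and identifying the limit $P$.

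\textbf{Step 1: uniform persistence bound.} Translation by $(k,k)$ is parallel to the diagonal, so it preserves persistence: for $p=(x,y)$,
\[
\pers\bigl(p+(k,k)\bigr)=\tfrac{(y+k)-(x+k)}{2}=\pers(p).
\]
Hence
\[
\sup_k\ \sup_{q\in\supp(\mu_k)}\pers(q)=\max_i \pers(p_i)=:M<\infty,
\]
which is \eqref{eq:supp_pers_uniform_prop}. Each $\mu_k$ is a finite sum of Dirac masses, so it lies in $\mathcal M$. Here I implicitly take $p_i\in X$, as the definition of $\mathcal M$ requires.

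\textbf{Step 2: one-sided drift.} We have $d(p_i+(k,k))=d(p_i)+k$. Fix $R>0$. Once $k>R-\min_i d(p_i)$, no atom of $\mu_k$ lies in $\{d\le R\}$, so $\mu_k(\{d\le R\})=0$. This gives \eqref{eq:diag_drift_plus_prop}.

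\textbf{Step 3: identify the limit.} By linearity, $\pers(\mu_k)=\sum_i\pers(p_i+(k,k))=\sum_i\pers(p_i)=\pers(\mu)$ for every $k$. So $\pers(\mu_k)\to P:=\pers(\mu)$ trivially. The ``in particular'' clause of (A$_+$) then yields, for each $v$ with $s(v)\neq0$, the limit $0$ when $s(v)<0$ and $t(v)\pers(\mu)$ when $s(v)>0$.

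There is no real obstacle here; the only point needing care is the invariance of $\pers$ under diagonal translation, which Step 1 already uses.

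Alternatively, one can bypass the proposition by a direct argument. For $k$ large, every atom satisfies $d\ge R_v$, so the proof of \Cref{prop:diag_drift_limits} gives $\phi_v=\mathbf 1_{\{s>0\}}\,t\,\pers$ on each atom. Then $S(\mu_k)(v)$ equals the claimed value exactly for all sufficiently large $k$.
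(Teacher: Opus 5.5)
Your proposal is correct and follows the paper's intended route: the corollary is the counting-measure specialization of part (A$_+$) of \Cref{prop:diag_drift_limits}. You verify the hypotheses correctly: the uniform bound $M=\max_i\pers(p_i)$, the drift condition via $d(p_i+(k,k))=d(p_i)+k$, and $\pers(\mu_k)=\pers(\mu)$ for every $k$. Your closing remark is also correct and slightly stronger than the statement: for all sufficiently large $k$, the sign analysis in that proof shows $S(\mu_k)(v)$ equals the limit exactly.
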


The next lemma quantifies, in a particularly simple one-point setting, how quickly the diagonal-coordinate
variability is washed out by large translations along $(1,1)$. More precisely, if two one-point diagrams
differ only by a fixed offset in the $d$-direction and are both shifted by $(k,k)$, then the corresponding
persistence spheres become eventually identical in each fixed direction $v$, and their uniform discrepancy
decays at worst like $O(1/k)$. In this sense, the lemma provides a quantitative companion to the pointwise
asymptotics of \Cref{prop:diag_drift_limits,cor:counting_shift_pointwise_limit}.

\begin{lem}\label{lem:uniform_decay}
Fix $p\in X$ and set $d:=d(p)$ and $P:=\pers(p)>0$. Let $h>0$ be fixed and, for $k\ge 1$, define
\[
\mu_k:=\delta_{p+(k,k)},
\qquad
\nu_k:=\delta_{p+(k+h,k+h)}.
\]
Then for every $k\ge 2(|d|+h)$,
\begin{equation}\label{eq:large_k_decay}
\big\|S(\mu_k)-S(\nu_k)\big\|_\infty \ \le\ \frac{2h\,\big(1+\sqrt2\,P\big)}{k}.
\end{equation}
\end{lem}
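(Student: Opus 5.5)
The plan is to reduce everything to a one-variable function of the diagonal coordinate and then split into two regimes according to the size of $|s(v)|$. Translating by $(k,k)$ leaves persistence unchanged and shifts $d$ by $k$. Since $\mu_k$ and $\nu_k$ are Dirac masses, \Cref{eq:Sv_integrand} gives, for every $v\in\Ss^2$,
\[
S(\mu_k)(v)-S(\nu_k)(v)=f_v(d+k)-f_v(d+k+h),
\]
where we set
\[
f_v(D)\coloneqq \relu\big(v_0+s(v)D+t(v)P\big)-\relu\big(v_0+s(v)D\big).
\]
The whole question is therefore how much $f_v$ varies on the interval $I_k\coloneqq[d+k,\,d+k+h]$. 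Because $k\ge 2(|d|+h)\ge 2|d|$, every $D\in I_k$ satisfies $D\ge k-|d|\ge k/2>0$. We also record $|v_0|\le 1$ and $|t(v)|\le\sqrt2$, both valid on $\Ss^2$.

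\emph{Regime 1: $|s(v)|\le 2(1+\sqrt2 P)/k$.} Here I would use a Lipschitz bound in $D$. Write $a=v_0+s(v)D$. Then $f_v$ is piecewise linear with slope
\[
s(v)\big(\mathbf 1_{\{a+t(v)P>0\}}-\mathbf 1_{\{a>0\}}\big)\in\{-s(v),0,s(v)\}.
\]
So $f_v$ is $|s(v)|$-Lipschitz; note that this is sharper than the naive constant $2|s(v)|$ obtained from \eqref{eq:relu_1lip} applied to each ReLU separately. Consequently
\[
|f_v(d+k)-f_v(d+k+h)|\le |s(v)|\,h\le \frac{2h(1+\sqrt2P)}{k}.
\]

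\emph{Regime 2: $|s(v)|> 2(1+\sqrt2P)/k$.} Here I would reuse the sign-stabilization mechanism from the proof of \Cref{prop:diag_drift_limits}. For $D\in I_k$ we get
\[
|s(v)|D\ge |s(v)|\,\tfrac{k}{2}>1+\sqrt2P\ge |v_0|+|t(v)|P.
\]
Hence both ReLU arguments $v_0+s(v)D$ and $v_0+s(v)D+t(v)P$ share the sign of $s(v)$ throughout $I_k$. It follows that $f_v\equiv t(v)P$ on $I_k$ if $s(v)>0$, and $f_v\equiv 0$ on $I_k$ if $s(v)<0$. In either case the difference vanishes, which also recovers the claim that the two spheres are eventually identical in each fixed direction with $s(v)\neq 0$.

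Taking the supremum over $v\in\Ss^2$ across the two regimes yields \eqref{eq:large_k_decay}. The only delicate point is obtaining the constant $2$ rather than $4$. This depends on two things: using the exact slope set $\{-s(v),0,s(v)\}$ of the ReLU difference in Regime 1, and placing the threshold at $|s(v)|k/2=1+\sqrt2P$ so that the two regimes match exactly. The hypothesis involving $h$ is only needed to ensure that the whole interval $I_k$ lies in $\{D\ge k/2\}$.
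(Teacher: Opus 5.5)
Your proof is correct and follows the same route as the paper's. The $1$-Lipschitz property of $a\mapsto\relu(a+t(v)P)-\relu(a)$ gives the bound $h|s(v)|$, and sign stabilization forces the difference to vanish unless $|s(v)|\le 2(1+\sqrt2P)/k$. The paper phrases this second step as a contrapositive: the segment $[a_k,a_k+hs]$ must meet the kink interval. You instead bound $|v_0+s(v)D|$ from below uniformly on $I_k$.

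One correction to your closing remark. Since $h>0$, the minimum of $I_k$ is $d+k$, so $I_k\subset\{D\ge k/2\}$ already follows from $k\ge 2|d|$. Your argument therefore proves the bound for all $k\ge 2|d|$, and the $h$ in the hypothesis is not needed for it. The $h$ is only needed in the paper's variant, where it enters through the estimate $|a_k|\le |t(v)|P+h|s(v)|$.
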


\begin{proof}
Fix $v\in\Ss^2$ and abbreviate $s:=s(v)$ and $t:=t(v)$. Set
\[
K:=t\,P,\qquad a_k:=v_0+s(d+k),\qquad F_K(a):=\relu(a+K)-\relu(a),
\]
so that
\[
S(\mu_k)(v)=F_K(a_k),\qquad S(\nu_k)(v)=F_K(a_k+hs),\qquad \Gamma_k(v)=F_K(a_k)-F_K(a_k+hs).
\]
Since $F_K$ is $1$--Lipschitz,
\begin{equation}\label{eq:Delta_lip_two}
|\Gamma_k(v)|\le h|s|.
\end{equation}

For \eqref{eq:large_k_decay}, note that $F_K$ has breakpoints at $a=0$ and $a=-K$, hence it is constant (with value either $0$ or $K$) outside the
interval
\[
J_K:=\big[\min\{-K,0\},\,\max\{-K,0\}\big]\subset[-|K|,|K|].
\]
Therefore, if $\Gamma_k(v)\neq 0$, the segment $[a_k,a_k+hs]$ intersects $J_K$, so there exists
$\lambda\in[0,1]$ such that
\[
z:=a_k+\lambda hs\in J_K\subset[-|K|,|K|].
\]
Hence
\[
|a_k|
\le |a_k-z|+|z|
\le \lambda h|s|+|K|
\le h|s|+|K|,
\]
and so
\begin{equation}\label{eq:ak_near_kink_two}
|a_k|\le |K|+h|s|.
\end{equation}
On the other hand,
\[
|a_k|=|v_0+s(d+k)|\ge |s|\,|d+k|-|v_0|\ge |s|\,(k-|d|)-1.
\]
Combining with \eqref{eq:ak_near_kink_two} gives, whenever $\Gamma_k(v)\neq 0$,
\[
|s|\,(k-|d|)-1\le |K|+h|s|
\qquad\Longrightarrow\qquad
(k-|d|-h)|s|\le 1+|K|.
\]
If $k\ge 2(|d|+h)$, then $k-|d|-h\ge k/2$, so
\[
|s|\le \frac{2(1+|K|)}{k}.
\]
Plugging into \eqref{eq:Delta_lip_two} yields
\[
|\Gamma_k(v)|\le h|s|\le \frac{2h(1+|K|)}{k}.
\]
Finally, $|K|=|t|P\le \sqrt2\,P$ since $|t(v)|\le \|(v_1,v_2)\|_1\le \sqrt2$, so
\[
|\Gamma_k(v)|\le \frac{2h(1+\sqrt2 P)}{k}.
\]
\end{proof}

\section{Uniform Convergence Results}
\label{sec:uniform}

As suggested by \Cref{prop:convex}, the most natural metric to employ to compare persistence spheres is the sup norm. Thus we start obtaining bi-continuity results relying on such norm, even if it does not meet our final goal of embedding measures in a Hilbert space. 

\subsection{Stability}

In this section we establish the forward continuity direction (stability) of persistence spheres: closeness in \(\POT_1\) implies uniform closeness of the associated sphere functions. The key point is that $S(\mu)$ is built from a ReLU transform, so the difference $S(\mu)-S(\nu)$ can be rewritten as the action of a fixed Lipschitz test family on two cross-augmented measures. This brings the problem into the scope of Kantorovich–Rubinstein duality for \(\OT_1\), and yields a global Lipschitz bound \(\|S(\mu)-S(\nu)\|_\infty \lesssim \POT_1(\mu,\nu)\) (\Cref{prop:S_Linfty_POT}). We also emphasize that this choice of metric is essentially forced: as shown in \citet{skraba2020wasserstein} in the broader setting of Wasserstein-stable linear representations of persistence diagrams, one cannot expect stability for linear operators on diagram measures with respect to other Wasserstein distances.

\begin{theorem}[$\POT_1$ Convergence $\Rightarrow$ Uniform Spheres]\label{prop:S_Linfty_POT}
For all $\mu,\nu\in\mathcal{M}$,
\[
\|S(\mu)-S(\nu)\|_\infty \;\le\; 2 \sqrt{2}\,\POT_1(\mu,\nu).
\]
\end{theorem}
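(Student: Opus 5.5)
We plan to rewrite the difference $S(\mu)-S(\nu)$ as an integral of a fixed Lipschitz function against the difference of two positive measures of equal mass, and then apply Kantorovich--Rubinstein duality, followed by \Cref{prop:POT_vs_OT_aug}. Fix $v\in\Ss^2$. By \eqref{eq:S_diff_cross_aug},
\[
S(\mu)(v)-S(\nu)(v)=\int_{\overline X} f_v\,d(\mu\oplus_\Delta\nu)-\int_{\overline X} f_v\,d(\nu\oplus_\Delta\mu),
\qquad f_v(u)\coloneqq \relu\bigl(\langle v,(1,u)\rangle\bigr).
\]
Both cross-augmented measures are positive and integrable on $\overline X$ by \Cref{prop:cross_aug}. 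They also have the same total mass $\mu(X)+\nu(X)$, since $\pi_\Delta$ preserves mass. Hence $\OT_1$ between them is well defined, and integrability ensures both integrals of $f_v$ are finite, since $f_v$ has linear growth.

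Next we compute the Lipschitz constant of $f_v$ with respect to $\|\cdot\|_\infty$ on $\overline X$. Using \eqref{eq:relu_1lip},
\[
|f_v(u)-f_v(u')|\le |\langle (v_1,v_2),u-u'\rangle|\le (|v_1|+|v_2|)\,\|u-u'\|_\infty\le \sqrt2\,\|u-u'\|_\infty,
\]
because $|v_1|+|v_2|\le\sqrt2\,\|(v_1,v_2)\|_2\le\sqrt2$. Kantorovich--Rubinstein duality for equal-mass integrable measures (or simply integrating the Lipschitz bound against any coupling $\Gamma$ and taking the infimum) then gives
\[
|S(\mu)(v)-S(\nu)(v)|\le\sqrt2\,\OT_1(\mu\oplus_\Delta\nu,\nu\oplus_\Delta\mu).
\]
We prefer the coupling version: for any $\Gamma\in\Pi$, the difference equals $\int (f_v(u)-f_v(u'))\,d\Gamma(u,u')$, which avoids invoking the full duality.

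Finally, \Cref{prop:POT_vs_OT_aug} bounds the $\OT_1$ term by $2\POT_1(\mu,\nu)$. Taking the supremum over $v\in\Ss^2$ yields the constant $2\sqrt2$.

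The only delicate point is justifying the splitting of integrals, which requires integrability of $f_v$ against each marginal. This follows from $|f_v(u)|\le\|(1,u)\|_2$ together with \Cref{prop:cross_aug}. Everything else is routine.
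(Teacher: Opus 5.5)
Your proposal is correct and follows the paper's proof essentially step for step: the cross-augmentation identity, the $\sqrt2$-Lipschitz bound on $u\mapsto\relu(\langle v,(1,u)\rangle)$ with respect to $\|\cdot\|_\infty$, the resulting bound by $\sqrt2\,\OT_1$, and then \Cref{prop:POT_vs_OT_aug} for the extra factor $2$. Your choice to integrate the Lipschitz bound against an arbitrary coupling, instead of citing Kantorovich--Rubinstein duality, is a harmless refinement, since only that easy inequality of the duality is actually used.
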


\begin{proof}
Recall the cross-augmentation identity \eqref{eq:aug_cross_identity}:
\[
\mu^{\aug}-\nu^{\aug}=(\mu\oplus_\Delta\nu)-(\nu\oplus_\Delta\mu),
\]
hence by linearity of $\Lambda$ (Proposition~\ref{prop:SLZT_linearity}),
\begin{equation}\label{eq:Sdiff_Lambda_cross}
S(\mu)-S(\nu)
=
\Big(\Lambda(\mu\oplus_\Delta\nu)-\Lambda(\nu\oplus_\Delta\mu)\Big)\Big|_{\Ss^2}.
\end{equation}

\smallskip
\noindent\textbf{Step 1: establish a Lipschitz test family.}
For $v\in \Ss^2$, define $\psi_v:\overline X\to\R$ by
\[
\psi_v(u)\coloneqq \relu\big(\langle v,(1,u)\rangle\big).
\]
Note that
\begin{equation}\label{eq:sphere_Lip_norm}
\|(v_1,v_2)\|_1\le \sqrt{2}\qquad\text{for all }v=(v_0,v_1,v_2)\in \Ss^2.
\end{equation}
Then $\psi_v$ is $\sqrt{2}$-Lipschitz on $(\overline X,\|\cdot\|_\infty)$. Indeed, using
$|\relu(a)-\relu(b)|\le |a-b|$ and \eqref{eq:sphere_Lip_norm}, for $u,w\in\overline X$,
\[
|\psi_v(u)-\psi_v(w)|
\le |\langle v,(1,u)\rangle-\langle v,(1,w)\rangle|
=|\langle (v_1,v_2),u-w\rangle|
\le \|(v_1,v_2)\|_1\,\|u-w\|_\infty
\le \sqrt{2}\|u-w\|_\infty.
\]

\smallskip
\noindent\textbf{Step 2: apply Kantorovich--Rubinstein duality for $\OT_1$.}
Using the dual formulation of $\OT_1$ (Kantorovich--Rubinstein),
\[
\OT_1(\tilde\mu,\tilde\nu)
=
\sup_{\Lip(f)\le 1}\ \int_{\overline X} f\,d(\tilde\mu-\tilde\nu),
\]
where $\Lip(f)\le 1$ is with respect to $\|\cdot\|_\infty$ on $\overline X$.
Apply this with $\tilde\mu=\mu\oplus_\Delta\nu$ and $\tilde\nu=\nu\oplus_\Delta\mu$. Since each $\psi_v$
is $\sqrt{2}$-Lipschitz, we obtain for every $v\in\Ss^2$:
\begin{align*}
\big|\Lambda(\mu\oplus_\Delta\nu)(v)-\Lambda(\nu\oplus_\Delta\mu)(v)\big|
&=
\left|\int_{\overline X} \psi_v(u)\, d\big((\mu\oplus_\Delta\nu)-(\nu\oplus_\Delta\mu)\big)(u)\right|\\
&\le \sqrt{2} \OT_1(\mu\oplus_\Delta\nu,\ \nu\oplus_\Delta\mu).
\end{align*}
Taking the supremum over $v\in\Ss^2$ and using \eqref{eq:Sdiff_Lambda_cross} gives
\[
\|S(\mu)-S(\nu)\|_\infty
\le
\sqrt{2} \OT_1(\mu\oplus_\Delta\nu,\ \nu\oplus_\Delta\mu).
\]
Finally, Proposition~\ref{prop:POT_vs_OT_aug} yields
\[
\OT_1(\mu\oplus_\Delta\nu,\ \nu\oplus_\Delta\mu)\le 2\,\POT_1(\mu,\nu),
\]
which concludes the proof.
\end{proof}

\subsection{Inverse Continuity}

The proof of inverse continuity is more delicate, holds only for convergence to compactly supported measures, and is organized into several steps. We begin by testing persistence spheres in suitable directions, which allows us to detect geometric regimes implied by uniform convergence. We then prove a local inverse estimate for measures supported on a fixed compact set. These ingredients are finally assembled to derive the global inverse-continuity result.

\subsubsection{Test Directions}\label{sec:test}

The inverse-continuity argument relies on a small set of directions $v\in\Ss^2$ for which $\phi_v$ has a simple form, so that $S(\mu)(v)$ directly controls specific portions of $\mu$. 
We start with two basic estimates: one reads off total persistence from a single direction, and the other quantifies how much $S(\eta)$ changes when $\eta$ is truncated to a measurable subset. 
We then construct two families of directions: one detecting mass far out in the $d$-direction (\Cref{lem:directions}), and one detecting both low- and high-persistence contributions (\Cref{lem:directions_lowhighpers}).

\begin{lem}[Total Persistence]\label{lem:pers_eval}
Let $v_{\pers}=(0,-1/\sqrt2,\,1/\sqrt2)\in\Ss^2$. Then for every $\mu\in\mathcal M$,
\[
S(\mu)(v_{\pers})=\sqrt2\,\pers(\mu),
\qquad\text{hence}\qquad
\pers(\mu)=\frac1{\sqrt2}S(\mu)(v_{\pers}).
\]
Consequently, if $\|S(\mu_n)-S(\mu)\|_\infty\to0$, then $\pers(\mu_n)\to\pers(\mu)$ and $\sup_n\pers(\mu_n)<\infty$.
\end{lem}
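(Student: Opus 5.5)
The plan is a direct computation of the integrand $\phi_v$ from \eqref{eq:Sv_integrand} at the chosen direction, followed by an evaluation argument that uses only the sup norm.

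\textbf{Computing $S(\mu)(v_{\pers})$.} For $v_{\pers}=(0,-1/\sqrt2,1/\sqrt2)$ the coordinates \eqref{eq:st_def} give
\[
v_0=0,\qquad s(v_{\pers})=v_1+v_2=0,\qquad t(v_{\pers})=v_2-v_1=\sqrt2 .
\]
By the decomposition \eqref{eq:innerprod_decomp_st}, for every $p\in X$ the two ReLU arguments are
\[
\langle v_{\pers},(1,p)\rangle=\sqrt2\,\pers(p),\qquad \langle v_{\pers},(1,\pi_\Delta(p))\rangle=0 .
\]
Since $\pers(p)>0$ on $X$, this yields $\phi_{v_{\pers}}(p)=\relu(\sqrt2\,\pers(p))-\relu(0)=\sqrt2\,\pers(p)$.

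We then integrate against $\mu$. By \Cref{def:PSph} and the pushforward identity \eqref{eq:pushforward_identity}, we have $S(\mu)(v)=\int_X\phi_v\,d\mu$, which is the representation already used in the proof of \Cref{lem:sign_constraint_t_nonpos}. Hence
\[
S(\mu)(v_{\pers})=\sqrt2\int_X\pers(p)\,d\mu(p)=\sqrt2\,\pers(\mu).
\]
This quantity is finite because $\mu\in\mathcal M$, as shown after \Cref{def:pers_mass}. Dividing by $\sqrt2$ gives the stated formula for $\pers(\mu)$.

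\textbf{The consequence.} Evaluating at a single point is dominated by the sup norm, so
\[
|\pers(\mu_n)-\pers(\mu)|=\tfrac1{\sqrt2}\,|S(\mu_n)(v_{\pers})-S(\mu)(v_{\pers})|\le \tfrac1{\sqrt2}\,\|S(\mu_n)-S(\mu)\|_\infty\to0 .
\]
A convergent sequence of reals is bounded. Since $\pers(\mu)<\infty$, it follows that $\sup_n\pers(\mu_n)<\infty$.

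The only points that need care are checking that the diagonal term vanishes identically, which holds because $s(v_{\pers})=0$ and $v_0=0$, and justifying the splitting of the integral of the augmented signed measure into $\int_X\phi_v\,d\mu$. That splitting is legitimate because both pieces are integrable, by \Cref{prop:SLZT_well_defined}. I do not expect any genuine obstacle here.
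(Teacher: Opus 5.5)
Your proof is correct and follows the paper's argument: you compute $s(v_{\pers})=0$ and $t(v_{\pers})=\sqrt2$, note that the diagonal ReLU term vanishes so that $\phi_{v_{\pers}}=\sqrt2\,\pers$, integrate against $\mu$, and then get convergence and boundedness of $\pers(\mu_n)$ by evaluating the sup-norm discrepancy at the single direction $v_{\pers}$. Your justification of $S(\mu)(v)=\int_X\phi_v\,d\mu$ via the pushforward identity and \Cref{prop:SLZT_well_defined} makes explicit a step the paper leaves implicit.
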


\begin{proof}
Let $p=(x,y)\in X$. For $v_{\pers}=(0,-1/\sqrt2,1/\sqrt2)$ we have
\[
s(v_{\pers})=0,
\qquad
t(v_{\pers})=\sqrt2.
\]
Therefore, by \Cref{eq:innerprod_decomp_st},
\[
\langle v_{\pers},(1,p)\rangle
= \sqrt2\,\pers(p),
\qquad
\langle v_{\pers},(1,\pi_\Delta(p))\rangle=0.
\]
Since $\pers(p)>0$ on $X$, both arguments of $\relu$ are nonnegative, hence \Cref{eq:Sv_integrand} yields
\[
\phi_{v_{\pers}}(p)=\sqrt2\,\pers(p).
\]
Integrating against $\mu$ gives $S(\mu)(v_{\pers})=\sqrt2\,\pers(\mu)$, and the remaining claims follow immediately.
\end{proof}

\begin{lem}[Truncation]\label{lem:sphere_trunc}
For any $\eta\in\mathcal M$ and any measurable $A\subset X$,
\[
\|S(\eta)-S(\eta|_A)\|_\infty \ \le\ \sqrt2\int_{A^c}\pers(p)\,d\eta(p).
\]
\end{lem}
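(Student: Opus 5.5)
By linearity of the augmentation and of $\Lambda$, the whole statement reduces to a pointwise bound on the integrand $\phi_v$. Since $\eta = \eta|_A + \eta|_{A^c}$ and pushforward is linear, we have $\eta^{\aug} - (\eta|_A)^{\aug} = (\eta|_{A^c})^{\aug}$. By \Cref{prop:SLZT_linearity} it follows that
\[
S(\eta)-S(\eta|_A) = S(\eta|_{A^c}).
\]
All three measures are integrable, since they are dominated by $\eta$, so every term is well defined by \Cref{prop:SLZT_well_defined}. It therefore suffices to show that $\|S(\xi)\|_\infty \le \sqrt2\,\pers(\xi)$ for every $\xi\in\mathcal M$, and then apply this with $\xi=\eta|_{A^c}$.

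For this bound, fix $v\in\Ss^2$ and write $S(\xi)(v)=\int_X \phi_v(p)\,d\xi(p)$. This uses the pushforward identity \eqref{eq:pushforward_identity} to turn the diagonal part of $\xi^{\aug}$ into the second ReLU term of \eqref{eq:Sv_integrand}. The two ReLU arguments in \eqref{eq:Sv_integrand} differ by $t(v)\pers(p)$, so the $1$-Lipschitz property \eqref{eq:relu_1lip} gives $|\phi_v(p)|\le |t(v)|\,\pers(p)$. Moreover $|t(v)|=|v_2-v_1|\le \|(v_1,v_2)\|_1\le\sqrt2$ by \eqref{eq:sphere_Lip_norm}.

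Integrating, we get $|S(\xi)(v)|\le \sqrt2\int_X\pers\,d\xi$. Taking the supremum over $v$ and substituting $\xi=\eta|_{A^c}$ yields exactly $\sqrt2\int_{A^c}\pers(p)\,d\eta(p)$.

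There is no real obstacle here. The only point needing care is the bookkeeping that the augmentation commutes with restriction and addition, i.e.\ that $(\pi_\Delta)_\#$ is additive, which is immediate from the definition of pushforward.
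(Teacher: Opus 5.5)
Your proposal is correct and follows essentially the same route as the paper: the pointwise bound $|\phi_v(p)|\le |t(v)|\,\pers(p)\le\sqrt2\,\pers(p)$, obtained from the $1$-Lipschitz property of $\relu$, is integrated over $A^c$ and then a supremum is taken over $v\in\Ss^2$. The only difference is that you make explicit the linearity step $S(\eta)-S(\eta|_A)=S(\eta|_{A^c})$. The paper uses this step implicitly when it writes the difference directly as $\int_{A^c}\phi_v\,d\eta$.
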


\begin{proof}
Fix $v\in\Ss^2$ and $p\in X$. By \Cref{eq:Sv_integrand} and the $1$-Lipschitz property \Cref{eq:relu_1lip},
\[
|\phi_v(p)|
\le \big|\pers(p)\,t(v)\big|
\le |t(v)|\,\pers(p)
\le \|(v_1,v_2)\|_1\,\pers(p)
\le \sqrt2\,\pers(p),
\]
where we used $|t(v)|=|v_2-v_1|\le |v_1|+|v_2|=\|(v_1,v_2)\|_1$ and \Cref{eq:sphere_Lip_norm}.
Therefore,
\[
|S(\eta)(v)-S(\eta|_A)(v)|
=
\Big|\int_{A^c}\phi_v(p)\,d\eta(p)\Big|
\le
\sqrt2\int_{A^c}\pers(p)\,d\eta(p),
\]
and taking the supremum over $v\in\Ss^2$ gives the claim.
\end{proof}

\begin{rmk}\label{rmk:pot_to_zero}
A particularly simple instance of partial transport is transport to the null
measure. Since the only partial plan from $\eta|_A$ to $0$ is the zero plan,
all mass is left unmatched and therefore sent to the diagonal. Hence, for every
$\eta\in\mathcal M$ and every measurable $A\subset X$,
\[
\POT_1(\eta|_A,0)=\int_A \pers(p)\,d\eta(p).
\]

In the same spirit, combining \Cref{lem:pers_eval} with
\Cref{lem:sphere_trunc} yields an exact formula for the distance from the null
persistence sphere:
\[
\|S(\mu)-S(0)\|_\infty=\sqrt2\,\pers(\mu),
\qquad \mu\in\mathcal M.
\]
Indeed, \Cref{lem:pers_eval} gives the lower bound
\[
\|S(\mu)\|_\infty \ge S(\mu)(v_{\pers})=\sqrt2\,\pers(\mu),
\]
while \Cref{lem:sphere_trunc}, applied with $A=\varnothing$, gives the matching
upper bound
\[
\|S(\mu)-S(0)\|_\infty\le \sqrt2\,\pers(\mu).
\]
Thus the uniform distance from $S(0)$ is determined exactly by the total
persistence of $\mu$.
\end{rmk}

The next lemma constructs, for a compactly supported measure $\mu$, a direction
$v\in\Ss^2$ that kills the contribution of the fixed compact support
$\supp\mu$ and isolates persistence mass escaping far away in the $d$-coordinate.
Indeed, using \Cref{eq:innerprod_decomp_st}, we choose $v_0<0$ and small
positive coefficients $s(v),t(v)$ so that both ReLU arguments are nonpositive on
$\supp\mu$, hence $S(\mu)(v)=0$. On the other hand, once $d(p)$ is large enough,
the term $v_0+s(v)d(p)$ becomes positive; in that regime both ReLU arguments are
active and their difference is exactly $t(v)\pers(p)$. The direction $v$ thus
suppresses the compact core and detects only persistence mass drifting to
$+\infty$ along the $d$-axis.

\begin{lem}[Far Away]\label{lem:directions}
Let $\mu\in\mathcal M$ have compact support in $X$. Choose $R_0,M_0>0$ such that
\[
\supp\mu \subset \bigl\{p\in X : |d(p)|\le R_0 \text{ and } \pers(p)\le M_0\bigr\}.
\]
Set
\[
s_0:=\frac{1}{2R_0},\qquad t_0:=\frac{1}{2M_0},
\qquad
\tilde v:=\Big(-1,\ \frac{s_0-t_0}{2},\ \frac{s_0+t_0}{2}\Big)\in\R^3,
\qquad
v:=\frac{\tilde v}{\|\tilde v\|_2}\in\Ss^2.
\]
Then:
\begin{enumerate}
\item $s(v)>0$ and $t(v)>0$.
\item For every $p\in\supp\mu$,
\[
\langle v,(1,p)\rangle\le 0
\quad\text{and}\quad
\langle v,(1,\pi_\Delta(p))\rangle<0,
\]
hence $S(\mu)(v)=0$.
\item There exists $R_*(\mu)>0$ such that for every $p\in X$ with $d(p)\ge R_*$,
\[
\phi_v(p)=\relu(\langle v,(1,p)\rangle)-\relu(\langle v,(1,\pi_\Delta(p))\rangle)
=\pers(p)\,t(v).
\]
\end{enumerate}
\end{lem}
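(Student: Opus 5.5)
Everything reduces to the decomposition \eqref{eq:innerprod_decomp_st} once we compute $s(v)$ and $t(v)$ for the normalized vector. Write $c:=\|\tilde v\|_2>0$. By \eqref{eq:st_def}, $s(\tilde v)=\tilde v_1+\tilde v_2=s_0$ and $t(\tilde v)=\tilde v_2-\tilde v_1=t_0$. Since $s,t$ are linear, $s(v)=s_0/c$, $t(v)=t_0/c$ and $v_0=-1/c$. Item 1 follows at once from $s_0,t_0>0$.

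For item 2, take $p\in\supp\mu$, so $|d(p)|\le R_0$ and $0<\pers(p)\le M_0$. By \eqref{eq:innerprod_decomp_st},
\[
c\,\langle v,(1,\pi_\Delta(p))\rangle=-1+s_0\,d(p)\le -1+\tfrac12=-\tfrac12<0,
\]
and
\[
c\,\langle v,(1,p)\rangle=-1+s_0\,d(p)+t_0\,\pers(p)\le -1+\tfrac12+\tfrac12=0 .
\]
Both ReLU terms therefore vanish, so $\phi_v\equiv0$ on $\supp\mu$. Integrating, $S(\mu)(v)=\int_X\phi_v\,d\mu=0$, because $\mu$ gives no mass to the complement of its support.

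For item 3, set $R_*:=1/s_0=2R_0$. If $d(p)\ge R_*$, then $c\langle v,(1,\pi_\Delta(p))\rangle=-1+s_0 d(p)\ge0$. Also $c\langle v,(1,p)\rangle=-1+s_0d(p)+t_0\pers(p)>0$, since $t_0\pers(p)>0$ on $X$. Both ReLU arguments are thus nonnegative, and each ReLU equals its argument. Subtracting them gives $\phi_v(p)=t(v)\pers(p)$, again by \eqref{eq:innerprod_decomp_st}.

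None of these steps is a real obstacle. The only point needing care is the bookkeeping with the normalization $c$: positive scaling of $v$ changes neither the signs of the ReLU arguments nor the form of the identity in item 3. It is also worth noting that $R_*$ depends on $\mu$ only through $R_0$.
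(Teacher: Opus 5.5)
Your proof is correct and follows the paper's argument essentially step for step. You compute $s(\tilde v)=s_0$ and $t(\tilde v)=t_0$, use \eqref{eq:innerprod_decomp_st} to bound both ReLU arguments on $\supp\mu$, and note that both arguments become nonnegative once $v_0+s(v)d(p)\ge 0$. The only difference is that you give the threshold explicitly as $R_*=2R_0$ (an argument equal to $0$ there is harmless because $\relu(0)=0$), whereas the paper only asserts that some $R_*$ exists because $s(v)>0$.
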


\begin{proof}
\noindent\textbf{Proof of (1).} By construction,
\[
s(\tilde v)=(\tilde v)_1+(\tilde v)_2=s_0>0,
\qquad
t(\tilde v)=(\tilde v)_2-(\tilde v)_1=t_0>0.
\]
Normalization by the positive scalar $\|\tilde v\|_2$ preserves signs, hence $s(v)>0$ and $t(v)>0$.

\smallskip
\noindent\textbf{Proof of (2).} Fix $p\in\supp\mu$. Then $|d(p)|\le R_0$ and $\pers(p)\le M_0$.
Using \Cref{eq:innerprod_decomp_st} with $v=\tilde v$ gives
\[
\langle \tilde v,(1,p)\rangle
= -1 + d(p)\,s(\tilde v) + \pers(p)\,t(\tilde v),
\qquad
\langle \tilde v,(1,\pi_\Delta(p))\rangle
= -1 + d(p)\,s(\tilde v).
\]
Since $s(\tilde v)=s_0$ and $t(\tilde v)=t_0$, we have
\[
|d(p)\,s(\tilde v)|\le R_0 s_0=\frac12,
\qquad
|\pers(p)\,t(\tilde v)|\le M_0 t_0=\frac12,
\]
hence
\[
\langle \tilde v,(1,p)\rangle \le -1+\frac12+\frac12=0,
\qquad
\langle \tilde v,(1,\pi_\Delta(p))\rangle \le -1+\frac12=-\frac12<0.
\]
After normalization the inequalities preserve sign, so the same bounds hold for $v$.
In particular, $\relu(\langle v,(1,p)\rangle)=0$ and $\relu(\langle v,(1,\pi_\Delta(p))\rangle)=0$ for all $p\in\supp\mu$,
hence $\phi_v(p)=0$ on $\supp\mu$ by \Cref{eq:Sv_integrand}. Integrating against $\mu$ yields $S(\mu)(v)=0$.

\smallskip
\noindent\textbf{Proof of (3).} By \Cref{eq:innerprod_decomp_st},
\[
\langle v,(1,\pi_\Delta(p))\rangle = v_0 + d(p)\,s(v).
\]
Since $s(v)>0$ by (1), this quantity tends to $+\infty$ as $d(p)\to+\infty$.
Thus there exists $R_*(\mu)>0$ such that for all $p$ with $d(p)\ge R_*(\mu)$,
\[
\langle v,(1,\pi_\Delta(p))\rangle>0.
\]
Moreover,
\[
\langle v,(1,p)\rangle
=
\langle v,(1,\pi_\Delta(p))\rangle+\pers(p)\,t(v)
\ge
\langle v,(1,\pi_\Delta(p))\rangle>0,
\]
since $\pers(p)\ge 0$ and $t(v)>0$. Hence on $\{d\ge R_*(\mu)\}$ both ReLU arguments are positive, and therefore
\[
\phi_v(p)
=
\langle v,(1,p)\rangle-\langle v,(1,\pi_\Delta(p))\rangle
=
\pers(p)\,t(v).
\]
\end{proof}

\begin{rmk}[$d\to -\infty$]\label{rmk:directions_minus}
Define $\mathcal R:\R^3\to\R^3$ by
\[
\mathcal R(v_0,v_1,v_2):=(v_0,-v_2,-v_1).
\]
Then $s(\mathcal R v)=-s(v)$ and $t(\mathcal R v)=t(v)$, and
\[
\langle \mathcal R v,(1,p)\rangle
= v_0+(-d(p))\,s(v)+\pers(p)\,t(v).
\]
In particular, if $v$ is given by \Cref{lem:directions}, then $\mathcal R v$ satisfies the analogue of
\Cref{lem:directions}(2)--(3) on the $d\to-\infty$ side:
$S(\mu)(\mathcal R v)=0$, and there exists $R_*'(\mu)>0$ such that for $d(p)\le -R_*'$,
\[
\phi_{\mathcal R v}(p)=\pers(p)\,t(v).
\]
\end{rmk}

The next lemma introduces a second family of test directions, this time aimed at probing persistence levels independently of the diagonal coordinate. In contrast with the previous construction, we now impose $s(v)=0$, so that the ReLU arguments no longer depend on $d(p)$ and are controlled only by $\pers(p)$. By choosing the remaining parameters appropriately, one obtains directions $v_\delta$ for which the integrand vanishes below a prescribed persistence threshold $\delta$ and grows linearly above it, namely as
\[
t_\delta(\pers-\delta)_+.
\]
In this way, sphere evaluations in these directions isolate the contribution of the high-persistence tail, while suitable combinations also control the mass concentrated at low persistence.

\begin{lem}[Low and High Persistence]\label{lem:directions_lowhighpers}
For $\delta\ge 0$ define
\[
t_\delta:=\sqrt{\frac{2}{1+2\delta^2}},
\qquad
v_\delta:=\Big(-\delta\,t_\delta,\ -\frac{t_\delta}{2},\ \frac{t_\delta}{2}\Big)\in \Ss^2.
\]
Then for every $\eta\in\mathcal M$ and every $\delta\ge 0$,
\begin{equation}\label{eq:phi_hinge_identity}
S(\eta)(v_\delta)=t_\delta\int_X (\pers-\delta)_+\,d\eta,
\end{equation}
\begin{equation}\label{eq:tail_le_hinge_global}
\int_{\{\pers\ge 2\delta\}}\pers\,d\eta
\le
2\int_X(\pers-\delta)_+\,d\eta
=
\frac{2}{t_\delta}\,S(\eta)(v_\delta),
\end{equation}
and
\begin{equation}\label{eq:lowpers_hat_bound}
\int_{\{\pers<\delta\}} \pers\,d\eta
\ \le\
\int_X g_\delta(\pers)\,d\eta
\ =\
\frac{1}{\sqrt2}S(\eta)(v_0)
-2\,\frac{1}{t_\delta}S(\eta)(v_\delta)
+\frac{1}{t_{2\delta}}S(\eta)(v_{2\delta}),
\end{equation}
where
\[
g_\delta(r):= r - 2(r-\delta)_+ + (r-2\delta)_+,\qquad r\ge 0.
\]
\end{lem}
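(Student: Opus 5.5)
The plan is a direct computation. First I check that $v_\delta\in\Ss^2$: $\|v_\delta\|_2^2=\delta^2t_\delta^2+t_\delta^2/2=t_\delta^2(1+2\delta^2)/2=1$. Then I read off the coordinates \eqref{eq:st_def}: $s(v_\delta)=0$ and $t(v_\delta)=t_\delta$. Substituting into \eqref{eq:innerprod_decomp_st} gives
\[
\langle v_\delta,(1,p)\rangle=t_\delta(\pers(p)-\delta),
\qquad
\langle v_\delta,(1,\pi_\Delta(p))\rangle=-\delta t_\delta\le 0 .
\]
Hence the second ReLU in \eqref{eq:Sv_integrand} vanishes, and $\phi_{v_\delta}(p)=t_\delta(\pers(p)-\delta)_+$. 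Integrating against $\eta$ yields \eqref{eq:phi_hinge_identity}. The integral is finite because $(\pers-\delta)_+\le\pers$ and $\pers(\eta)<\infty$ for $\eta\in\mathcal M$. As a consistency check, $\delta=0$ gives $t_0=\sqrt2$ and $v_0=v_{\pers}$, which recovers \Cref{lem:pers_eval}.

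For \eqref{eq:tail_le_hinge_global}, I use the pointwise bound $\pers\,\mathbf 1_{\{\pers\ge2\delta\}}\le 2(\pers-\delta)_+$. On $\{\pers\ge 2\delta\}$ we have $\delta\le \pers/2$, so $\pers-\delta\ge\pers/2$. Off this set the left side is $0$ and the right side is nonnegative. Integrating gives the inequality, and \eqref{eq:phi_hinge_identity} turns it into the stated equality.

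For \eqref{eq:lowpers_hat_bound}, I write $\pers=\int$-identity as $\pers=(\pers-0)_+$ on $X$. Applying \eqref{eq:phi_hinge_identity} with $\delta=0$, $\delta$ and $2\delta$ and using linearity of the integral gives
\[
\int g_\delta(\pers)\,d\eta=\tfrac{1}{\sqrt2}S(\eta)(v_0)-\tfrac{2}{t_\delta}S(\eta)(v_\delta)+\tfrac{1}{t_{2\delta}}S(\eta)(v_{2\delta}).
\]
All three integrals are finite, so splitting is legitimate.

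It remains to check $\pers\,\mathbf 1_{\{\pers<\delta\}}\le g_\delta(\pers)$ pointwise, which I do by cases. For $r<\delta$, $g_\delta(r)=r$. For $\delta\le r<2\delta$, $g_\delta(r)=2\delta-r\ge0$. For $r\ge2\delta$, $g_\delta(r)=0$. So $g_\delta$ is a nonnegative hat function equal to $r$ on $[0,\delta)$, and integrating the pointwise bound finishes the proof.

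The only mild obstacle is bookkeeping. The notation $v_0$ here means $v_\delta$ at $\delta=0$, not a coordinate, and the case $\delta=0$ is degenerate (both sides vanish), but it is covered by the same computation.
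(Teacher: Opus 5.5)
Your proof is correct and follows the paper's argument essentially step by step. You compute $s(v_\delta)=0$ and $t(v_\delta)=t_\delta$ to obtain the hinge integrand $t_\delta(\pers-\delta)_+$, use the pointwise bounds $r\mathbf 1_{\{r\ge 2\delta\}}\le 2(r-\delta)_+$ and $r\mathbf 1_{\{r<\delta\}}\le g_\delta(r)$, and expand $g_\delta$ via the identity at $0,\delta,2\delta$. Your explicit checks that $\|v_\delta\|_2=1$ and that the three hinge integrals are finite, so that splitting the integral is legitimate, are small details the paper leaves implicit.
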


\begin{proof}
By construction
\[
s(v_\delta)=v_{\delta,1}+v_{\delta,2}=0,
\qquad
t(v_\delta)=v_{\delta,2}-v_{\delta,1}=t_\delta>0.
\]
Hence for every $p\in X$,
\[
\langle v_\delta,(1,p)\rangle=v_{\delta,0}+t_\delta\,\pers(p),
\qquad
\langle v_\delta,(1,\pi_\Delta(p))\rangle=v_{\delta,0}.
\]
Since $v_{\delta,0}=-\delta t_\delta\le 0$, we have $\relu(v_{\delta,0})=0$, and therefore \Cref{eq:Sv_integrand} yields
\[
\phi_{v_\delta}(p)
=
\relu\!\big(v_{\delta,0}+t_\delta\,\pers(p)\big)-\relu(v_{\delta,0})
=
\relu\!\big(t_\delta(\pers(p)-\delta)\big)
=
t_\delta(\pers(p)-\delta)_+.
\]
Integrating against $\eta$ gives \Cref{eq:phi_hinge_identity}.
For $\delta=0$ one has $v_0=v_{\pers}$ and $t_0=\sqrt2$, so \Cref{eq:phi_hinge_identity} reduces to
$S(\eta)(v_0)=\sqrt2\int_X \pers\,d\eta$, consistent with \Cref{lem:pers_eval}.

For any $r\ge 0$,
\[
r\,\mathbf 1_{\{r\ge 2\delta\}}\le 2(r-\delta)_+.
\]
Indeed, if $r<2\delta$ the left-hand side is $0$, while if $r\ge 2\delta$ then
$(r-\delta)_+=r-\delta\ge r/2$, i.e.\ $r\le 2(r-\delta)_+$.
Applying this inequality with $r=\pers(p)$ and integrating yields
\[
\int_{\{\pers\ge 2\delta\}}\pers\,d\eta
\le
2\int_X(\pers-\delta)_+\,d\eta,
\]
and the final identity in \Cref{eq:tail_le_hinge_global} follows from \Cref{eq:phi_hinge_identity}.

Define
\[
g_\delta(r):= r - 2(r-\delta)_+ + (r-2\delta)_+,\qquad r\ge 0.
\]
A direct check gives
\[
g_\delta(r)=
\begin{cases}
r, & 0\le r\le \delta,\\
2\delta-r, & \delta\le r\le 2\delta,\\
0, & r\ge 2\delta,
\end{cases}
\qquad\text{hence}\qquad
g_\delta(r)\ge r\,\mathbf 1_{\{r<\delta\}}.
\]
Therefore,
\[
\int_{\{\pers<\delta\}}\pers\,d\eta
\le \int_X g_\delta(\pers)\,d\eta.
\]
Finally, expanding $g_\delta$ and using \Cref{eq:phi_hinge_identity} with parameters $0$, $\delta$, and $2\delta$ yields
\begin{align*}
\int_X g_\delta(\pers)\,d\eta
&=
\int_X \pers\,d\eta
-2\int_X(\pers-\delta)_+\,d\eta
+\int_X(\pers-2\delta)_+\,d\eta\\
&=
\frac{1}{\sqrt2}S(\eta)(v_0)
-2\,\frac{1}{t_\delta}S(\eta)(v_\delta)
+\frac{1}{t_{2\delta}}S(\eta)(v_{2\delta}),
\end{align*}
where we also used \Cref{lem:pers_eval} to rewrite $\int_X\pers\,d\eta=\frac1{\sqrt2}S(\eta)(v_0)$.
This proves \Cref{eq:lowpers_hat_bound}.
\end{proof}

\subsubsection{Vanishing of the Tails}

We now combine the test directions above with the assumption $\|S(\mu_n)-S(\mu)\|_\infty\to0$, where $\mu$ has compact support in $X$. The goal is to show that $\mu_n$ cannot carry a persistent amount of mass either far away in the $d$-direction, or very near the diagonal, or at very large persistence.

\begin{lem}[Far-away persistence vanishes]\label{lem:d_tail}
Assume $\|S(\mu_n)-S(\mu)\|_\infty\to0$, where $\mu$ has compact support in $X$.
Then there exists $R_\mu>0$, depending only on $\mu$, such that for every
$R\ge R_\mu$,
\[
\limsup_{n\to\infty}\int_{\{|d|\ge R\}} \pers\,d\mu_n =0.
\]
In fact, for every such $R$,
\[
\limsup_{n\to\infty}\int_{\{d\ge R\}} \pers\,d\mu_n =0,
\qquad
\limsup_{n\to\infty}\int_{\{d\le -R\}} \pers\,d\mu_n =0.
\]
\end{lem}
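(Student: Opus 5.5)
We use the test direction $v$ of \Cref{lem:directions} together with its reflection $\mathcal R v$ from \Cref{rmk:directions_minus}. Set $R_\mu:=\max\{R_*(\mu),R_*'(\mu)\}$. By \Cref{lem:directions}(2), $S(\mu)(v)=0$, and likewise $S(\mu)(\mathcal R v)=0$. Uniform convergence then gives $S(\mu_n)(v)\to 0$ and $S(\mu_n)(\mathcal R v)\to0$. Note that $R_\mu$ depends only on $\mu$, through $R_0,M_0$.

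The key structural fact concerns the sign of the integrand $\phi_v$. Since $t(v)>0$, we have
\[
\langle v,(1,p)\rangle=\langle v,(1,\pi_\Delta(p))\rangle+t(v)\pers(p)\ge \langle v,(1,\pi_\Delta(p))\rangle,
\]
and because $\relu$ is nondecreasing, $\phi_v(p)\ge 0$ for every $p\in X$. This is the mirror image of \Cref{lem:sign_constraint_t_nonpos}. We therefore split
\[
S(\mu_n)(v)=\int_{\{d\ge R\}}\phi_v\,d\mu_n+\int_{\{d<R\}}\phi_v\,d\mu_n .
\]
The second term is nonnegative, and by \Cref{lem:directions}(3) the first equals $t(v)\int_{\{d\ge R\}}\pers\,d\mu_n$ whenever $R\ge R_*(\mu)$. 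Hence
\[
0\le t(v)\int_{\{d\ge R\}}\pers\,d\mu_n\le S(\mu_n)(v)\longrightarrow 0 .
\]
Since $t(v)>0$ is fixed, the $\limsup$ of the $d\ge R$ tail vanishes.

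The $d\le -R$ side follows identically using $\mathcal R v$. Here $t(\mathcal R v)=t(v)>0$, so $\phi_{\mathcal R v}\ge0$, and \Cref{rmk:directions_minus} supplies the identity $\phi_{\mathcal R v}=t(v)\pers$ on $\{d\le -R_*'\}$. Adding the two one-sided bounds yields the statement for $\{|d|\ge R\}$.

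The only point needing care is the nonnegativity of $\phi_v$ on the region $\{d<R\}$. Without it, the compact-core contribution of $\mu_n$ could cancel the tail. Monotonicity of $\relu$ together with $t(v)>0$ settles this, so I expect no real obstacle. The remaining check is that \Cref{rmk:directions_minus} indeed gives $S(\mu)(\mathcal R v)=0$; this holds because $\supp\mu$ is symmetric under the bound $|d|\le R_0$.
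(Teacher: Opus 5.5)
Your proposal is correct and follows essentially the same argument as the paper: it uses the direction $v$ of \Cref{lem:directions} and its reflection $\mathcal R v$, nonnegativity of $\phi_v$ from $t(v)>0$, the bound $S(\mu_n)(v)\ge t(v)\int_{\{d\ge R\}}\pers\,d\mu_n$ for $R\ge R_*(\mu)$, and $R_\mu:=\max\{R_*,R_*'\}$. One phrasing point: $S(\mu)(\mathcal R v)=0$ holds because the constraint $|d(p)|\le R_0$ (not $\supp\mu$ itself) is invariant under $d\mapsto -d$, and this sign flip is exactly how the reflection acts.
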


\begin{proof}
Choose $R_0,M_0>0$ such that
\[
\supp\mu\subset\{\,|d|\le R_0,\ \pers\le M_0\,\}.
\]
Let $v\in\Ss^2$ be the direction given by \Cref{lem:directions}, and let
$R_+:=R_*(\mu)$ be the threshold from \Cref{lem:directions}(3). Then
$S(\mu)(v)=0$, hence by uniform convergence
\[
S(\mu_n)(v)\longrightarrow 0.
\]
Moreover, for every $p\in X$,
\[
\phi_v(p)
=
\relu\!\bigl(v_0+s(v)d(p)+t(v)\pers(p)\bigr)
-\relu\!\bigl(v_0+s(v)d(p)\bigr)
\ge 0,
\]
since $t(v)>0$ and $\pers(p)\ge 0$.

Fix any $R\ge R_+$. Since \(\{d\ge R\}\subset\{d\ge R_+\}\), \Cref{lem:directions}(3)
gives
\[
\phi_v(p)=t(v)\pers(p)\qquad\text{for all }p\in\{d\ge R\}.
\]
Therefore
\[
S(\mu_n)(v)
=
\int_X \phi_v\,d\mu_n
\ge
\int_{\{d\ge R\}}\phi_v\,d\mu_n
=
t(v)\int_{\{d\ge R\}}\pers\,d\mu_n.
\]
Taking \(\limsup\) and using \(S(\mu_n)(v)\to 0\), we obtain
\[
\limsup_{n\to\infty}\int_{\{d\ge R\}}\pers\,d\mu_n =0
\qquad\forall R\ge R_+.
\]

Now apply the same argument to the reflected direction \(\mathcal R v\) from
\Cref{rmk:directions_minus}. Let \(R_->0\) be the corresponding threshold. Then for every
\(R\ge R_-\),
\[
\limsup_{n\to\infty}\int_{\{d\le -R\}}\pers\,d\mu_n =0.
\]

Finally, set
\[
R_\mu:=\max\{R_+,R_-\}.
\]
Then for every \(R\ge R_\mu\),
\[
\int_{\{|d|\ge R\}}\pers\,d\mu_n
=
\int_{\{d\ge R\}}\pers\,d\mu_n
+
\int_{\{d\le -R\}}\pers\,d\mu_n,
\]
and taking \(\limsup\) yields
\[
\limsup_{n\to\infty}\int_{\{|d|\ge R\}}\pers\,d\mu_n =0.
\]
\end{proof}

\begin{lem}[Low Persistence Vanishes]\label{lem:lowpers_vanish}
Assume $\|S(\mu_n)-S(\mu)\|_\infty\to0$, where $\mu$ has compact support in $X$.
Let $\delta>0$ be such that $\mu(\{\pers\le 2\delta\})=0$. Then
\[
\int_{\{\pers<\delta\}}\pers\,d\mu_n \longrightarrow 0.
\]
\end{lem}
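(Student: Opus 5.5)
The plan is to apply the low-persistence bound \eqref{eq:lowpers_hat_bound} of \Cref{lem:directions_lowhighpers} to $\eta=\mu_n$. Its right-hand side is a fixed finite linear combination of sphere evaluations, in the three directions $v_0$, $v_\delta$, $v_{2\delta}$. Uniform convergence $\|S(\mu_n)-S(\mu)\|_\infty\to0$ gives pointwise convergence in each of these directions, and the coefficients $1/\sqrt2$, $2/t_\delta$, $1/t_{2\delta}$ are finite and independent of $n$. Hence
\[
\frac{1}{\sqrt2}S(\mu_n)(v_0)-\frac{2}{t_\delta}S(\mu_n)(v_\delta)+\frac{1}{t_{2\delta}}S(\mu_n)(v_{2\delta})
\;\longrightarrow\;
\frac{1}{\sqrt2}S(\mu)(v_0)-\frac{2}{t_\delta}S(\mu)(v_\delta)+\frac{1}{t_{2\delta}}S(\mu)(v_{2\delta}).
\]

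Next I will identify the limit. By the equality in \eqref{eq:lowpers_hat_bound}, now applied with $\eta=\mu$, it equals $\int_X g_\delta(\pers)\,d\mu$. Recall that $g_\delta(r)=0$ for $r\ge 2\delta$ and $0\le g_\delta\le \delta$. The integrand is therefore supported on $\{\pers<2\delta\}\subset\{\pers\le 2\delta\}$, and this set is $\mu$-null by hypothesis. So the limit is $0$.

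Finally, the inequality in \eqref{eq:lowpers_hat_bound} gives
\[
0\le\int_{\{\pers<\delta\}}\pers\,d\mu_n\le \int_X g_\delta(\pers)\,d\mu_n.
\]
The right-hand side tends to $0$, which proves the claim.

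There is no real obstacle here. The only points to check are that $g_\delta\ge0$, so the sandwich is legitimate, and that the hypothesis on $\mu$ kills the whole support of $g_\delta$, not only $\{\pers<\delta\}$. This is exactly why the assumption is stated with $2\delta$. Compact support of $\mu$ is not even needed for this step beyond $\mu\in\mathcal M$.
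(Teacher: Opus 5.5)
Your proposal is correct and follows essentially the same route as the paper's proof. Both apply the bound \eqref{eq:lowpers_hat_bound} to $\mu_n$, pass to the limit through pointwise convergence at the fixed directions $v_0,v_\delta,v_{2\delta}$, and identify the limit as $\int_X g_\delta(\pers)\,d\mu=0$ from the hypothesis $\mu(\{\pers\le 2\delta\})=0$.
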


\begin{proof}
By \Cref{lem:directions_lowhighpers}, for every $n$,
\[
0\le \int_{\{\pers<\delta\}}\pers\,d\mu_n
\le
\int_X g_\delta(\pers)\,d\mu_n
=
\frac{1}{\sqrt2}S(\mu_n)(v_0)
-2\,\frac{1}{t_\delta}S(\mu_n)(v_\delta)
+\frac{1}{t_{2\delta}}S(\mu_n)(v_{2\delta}).
\]
Since $\mu(\{\pers\le 2\delta\})=0$, we have $g_\delta(\pers)=0$ $\mu$-a.e., and therefore
\[
\int_X g_\delta(\pers)\,d\mu=0.
\]
Uniform convergence implies pointwise convergence at the fixed directions
$v_0,v_\delta,v_{2\delta}$, so the right-hand side converges to
\[
\frac{1}{\sqrt2}S(\mu)(v_0)
-2\,\frac{1}{t_\delta}S(\mu)(v_\delta)
+\frac{1}{t_{2\delta}}S(\mu)(v_{2\delta})
=
\int_X g_\delta(\pers)\,d\mu
=0.
\]
Hence, $\int_{\{\pers<\delta\}}\pers\,d\mu_n\to0$.
\end{proof}

\begin{lem}[High Persistence Vanishes]\label{lem:highpers_vanish}
Assume $\|S(\mu_n)-S(\mu)\|_\infty\to0$, where $\mu$ has compact support in $X$.
If $\mu=0$, then, for every $M>0$,
\[
\limsup_{n\to\infty}\int_{\{\pers\ge M\}}\pers\,d\mu_n=0.
\]
If $\mu\ne0$, let
\[
M_\mu:=\sup\{\pers(p):p\in\supp\mu\}<\infty.
\]
Then for every $M>2M_\mu$,
\[
\limsup_{n\to\infty}\int_{\{\pers\ge M\}}\pers\,d\mu_n=0.
\]
\end{lem}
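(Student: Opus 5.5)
The plan is to use the hinge directions $v_\delta$ of \Cref{lem:directions_lowhighpers} with $\delta:=M/2$, together with the tail bound \eqref{eq:tail_le_hinge_global}:
\[
\int_{\{\pers\ge M\}}\pers\,d\mu_n=\int_{\{\pers\ge 2\delta\}}\pers\,d\mu_n\le \frac{2}{t_\delta}\,S(\mu_n)(v_\delta).
\]
Since $v_\delta$ is a fixed direction and $\|S(\mu_n)-S(\mu)\|_\infty\to0$, we get pointwise convergence $S(\mu_n)(v_\delta)\to S(\mu)(v_\delta)$. The problem therefore reduces to showing that $S(\mu)(v_\delta)=0$ for this choice of $\delta$.

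By \eqref{eq:phi_hinge_identity},
\[
S(\mu)(v_\delta)=t_\delta\int_X(\pers-\delta)_+\,d\mu .
\]
We consider the two cases of the statement.

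\textbf{Case $\mu=0$.} The right-hand side vanishes trivially, for any $M>0$ (so any $\delta=M/2>0$).

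\textbf{Case $\mu\neq0$.} For $M>2M_\mu$ we have $\delta=M/2>M_\mu$. Every $p\in\supp\mu$ satisfies $\pers(p)\le M_\mu<\delta$, so $(\pers-\delta)_+=0$ $\mu$-a.e. (since $\mu$ is concentrated on its support). Hence $S(\mu)(v_\delta)=0$. The finiteness $M_\mu<\infty$ follows from compactness of $\supp\mu$ and continuity of $\pers$.

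Combining the two steps gives
\[
\limsup_{n\to\infty}\int_{\{\pers\ge M\}}\pers\,d\mu_n\le\frac{2}{t_\delta}\lim_{n\to\infty} S(\mu_n)(v_\delta)=0.
\]
The left-hand side is nonnegative, so the limit superior equals $0$.

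There is no real obstacle here. The only point needing care is matching the threshold: the factor $2$ in \eqref{eq:tail_le_hinge_global} is exactly why the statement requires $M>2M_\mu$ rather than $M>M_\mu$. One could sharpen this with a different hinge combination, but that is not needed.
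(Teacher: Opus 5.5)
Your proof is correct and follows the paper's argument: both test $\mu_n$ against the hinge direction $v_\delta$, use $S(\mu)(v_\delta)=0$, and conclude with the tail bound \eqref{eq:tail_le_hinge_global}. The differences are cosmetic: the paper fixes $\delta=M_\mu$ once, so that one direction serves every $M>2M_\mu$ via $\{\pers\ge M\}\subset\{\pers\ge 2\delta\}$, and it handles $\mu=0$ through the total-persistence bound of \Cref{lem:pers_eval}, whereas you take $\delta=M/2$ and treat both cases uniformly.
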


\begin{proof}
If $\mu=0$, then \Cref{lem:pers_eval} gives $\pers(\mu_n)\to0$, and
\[
0\le \int_{\{\pers\ge M\}}\pers\,d\mu_n\le\pers(\mu_n)
\]
for every $M>0$. We may therefore assume $\mu\ne0$.

Set $\delta:=M_\mu$ and consider the direction $v_\delta$ from \Cref{lem:directions_lowhighpers}.
Since $\pers\le \delta$ on $\supp\mu$, we have $(\pers-\delta)_+=0$ $\mu$-a.e., hence by \Cref{eq:phi_hinge_identity},
\[
S(\mu)(v_\delta)=0.
\]
Uniform convergence gives $S(\mu_n)(v_\delta)\to 0$, and therefore
\[
\int_X(\pers-\delta)_+\,d\mu_n=\frac1{t_\delta}S(\mu_n)(v_\delta)\longrightarrow 0.
\]
By \Cref{eq:tail_le_hinge_global},
\[
\int_{\{\pers\ge 2\delta\}}\pers\,d\mu_n
\le 2\int_X(\pers-\delta)_+\,d\mu_n \longrightarrow 0.
\]
If $M>2M_\mu=2\delta$, then $\{\pers\ge M\}\subset\{\pers\ge 2\delta\}$, so the claim follows.
\end{proof}

\subsubsection{A Quantitative Inverse Estimate on Compact Sets}\label{sec:compact_rates}

We now restrict to measures supported in a fixed compact set $K\subset X$. In this regime, one can obtain an explicit H\"older-type inverse estimate: the $\POT_1$ distance is controlled by a power of the sphere discrepancy. This is the quantitative compact-core input that will be used in the global inverse-continuity argument below.

To derive such a rate we use recent approximation results for shallow ReLU networks, in particular the Sobolev-to-variation-space embedding proved in \citet{mao2024approximation}. Related approximation results in the uniform norm also appear in \citet{siegel2025optimal}. Indeed, by construction, differences of persistence spheres can be written as differences of integrated ReLU ridge functions. Since $\POT_1$ is controlled by an $\OT_1$ distance between the corresponding cross-augmented measures, Kantorovich--Rubinstein duality reduces the problem to testing the signed measure against $1$-Lipschitz functions. We then approximate arbitrary Lipschitz test functions on a compact set by ReLU ridge combinations, which turns the sphere discrepancy $\|S(\mu)-S(\nu)\|_\infty$ into an explicit upper bound on $\POT_1(\mu,\nu)$.

\begin{theorem}[Local H\"older Bound]\label{prop:compact_core_holder_siegel}
Let $K\subset X$ be compact and set
\[
\delta_K:=\inf\{\|p-z\|_\infty:\ p\in K,\ z\in\Delta\}>0,
\qquad
\overline K:=K\cup \pi_\Delta(K)\subset \overline X.
\]
For $\mu,\nu\in\mathcal M$ supported in $K$, define
\[
M:=\pers(\mu)+\pers(\nu),
\qquad
\varepsilon:=\|S(\mu)-S(\nu)\|_\infty .
\]
Then there exists a constant $C_K<\infty$, depending only on $K$, such that
\[
\POT_1(\mu,\nu)\ \le\ C_K\, M^{3/5}\,\varepsilon^{2/5}.
\]
Moreover, an admissible explicit choice of $C_K$ is constructed in the proof.
\end{theorem}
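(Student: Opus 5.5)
We bound $\POT_1$ by the $\OT_1$ distance between the cross-augmented measures and then approximate Lipschitz test functions by ReLU ridge combinations. By \Cref{prop:POT_vs_OT_aug}, $\POT_1(\mu,\nu)\le \OT_1(\mu\oplus_\Delta\nu,\nu\oplus_\Delta\mu)$. Both cross-augmented measures are positive, supported in the compact set $\overline K$, and have equal total mass $\mu(X)+\nu(X)$. Since $\pers\ge\delta_K$ on $K$, this mass is at most $M/\delta_K$. Kantorovich--Rubinstein duality then writes the $\OT_1$ distance as a supremum of $\int_{\overline K} f\,d\sigma$ over $1$-Lipschitz $f$ on $(\overline K,\|\cdot\|_\infty)$, where $\sigma:=(\mu\oplus_\Delta\nu)-(\nu\oplus_\Delta\mu)=\mu^{\aug}-\nu^{\aug}$ by \eqref{eq:aug_cross_identity}. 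Because $\sigma$ has zero total mass, we may normalize $f(u_0)=0$ at a base point, so that $\|f\|_{\infty,\overline K}\le \operatorname{diam}(\overline K)$.

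The key observation is that, for any ridge combination $g(u)=\int_{\Ss^2}\relu(\langle v,(1,u)\rangle)\,d\rho(v)$ with $\rho$ a finite signed measure on $\Ss^2$, Fubini gives
\[
\Big|\int g\,d\sigma\Big| = \Big|\int_{\Ss^2}\big(S(\mu)-S(\nu)\big)(v)\,d\rho(v)\Big| \le \|\rho\|_{TV}\,\varepsilon .
\]
Hence, for every $1$-Lipschitz $f$ and every such $g$,
\[
\int f\,d\sigma \le \|f-g\|_{\infty,\overline K}\,|\sigma|(\overline K) + \|\rho\|_{TV}\,\varepsilon \le \frac{2M}{\delta_K}\,\|f-g\|_{\infty,\overline K} + \|\rho\|_{TV}\,\varepsilon .
\]
Only the mass $|\sigma|(\overline K)\le 2M/\delta_K$ is used here, so $M$ enters linearly in the approximation term.

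The main obstacle is a quantitative approximation of $1$-Lipschitz functions on a compact set of $\R^2$ by functions in the ReLU variation space $\mathcal K_1$, with the error traded off against the variation norm. I plan to do this by mollification followed by the Sobolev embedding of \citet{mao2024approximation}. First extend $f$ to a Lipschitz function on $\R^2$, then cut it off smoothly on a ball containing $\overline K$. Convolving with a mollifier at scale $h$ gives $f_h$ with $\|f-f_h\|_\infty\lesssim h$ and $\|f_h\|_{H^s}\lesssim_K h^{1-s}$ for smoothness index $s=(d+3)/2=5/2$ in dimension $d=2$, the regime in which $H^{s}\hookrightarrow\mathcal K_1$ holds for ReLU. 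The embedding then yields $f_h=g$ with $\|\rho\|_{TV}\lesssim_K h^{-3/2}$. One point needs care: the representation in that embedding may include affine terms $a+\langle b,u\rangle$ in addition to ReLU ridges. These are harmless, since an affine function can be written as a difference of two ReLUs of the same lifted argument, $\relu(z)-\relu(-z)=z$, with variation norm controlled by $|a|+|b|$. Alternatively, the constant is already annihilated by $\sigma$, because $\sigma$ has zero total mass.

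Combining these gives $\POT_1 \lesssim_K M h + h^{-3/2}\varepsilon$. Optimizing in $h$ at $h=(\varepsilon/M)^{2/5}$ yields $\POT_1\le C_K M^{3/5}\varepsilon^{2/5}$. If the optimal $h$ exceeds the admissible mollification range (large $\varepsilon/M$), the trivial bound $\POT_1\le M$ from transporting everything to the diagonal is already of the required form after enlarging $C_K$. Tracking the constants through the cutoff, the mollifier, the embedding constant and $\delta_K$ gives an explicit admissible $C_K$.
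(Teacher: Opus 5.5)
Your proposal is correct and follows essentially the same route as the paper's proof. The shared steps are: reduction to $\OT_1$ between the cross-augmented measures, Kantorovich--Rubinstein duality, mollification at scale $h$ with an $h^{-3/2}$ bound on the $W^{5/2}$ norm, the Sobolev-to-ReLU-variation embedding of \citet{mao2024approximation} combined with Fubini to reach $\varepsilon$, the mass bound $|\sigma|(\overline K)\le 2M/\delta_K$, and the same optimization, with the trivial bound $\POT_1\le M$ outside the admissible range. The one step the paper handles more carefully is the passage from Mao et al.'s closed-hull variation norm to an exact pointwise representation $f_h(u)=\int\relu(\langle v,(1,u)\rangle)\,d\rho(v)$ with controlled $\|\rho\|_{TV}$, which you take for granted; this is needed because $\sigma$ may be atomic, so $L^2$ approximation alone does not suffice, and the paper justifies it (along with the rescaling from $B_1$ to $B_{R_K}$) in \Cref{lem:sobolev_relu_control} via weak-$*$ compactness of parameter measures.
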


\begin{proof}
We use the notation $\mathring B_R$, the Sobolev restriction spaces
$W^s(L^2(B_R))$, the integer-order weak-derivative norms
$W^{s,2}(B_R)$, and the interpolation estimate
\eqref{eq:ball_sobolev_interpolation} introduced in
\Cref{app:sobolev_relu}. The Sobolev-to-ReLU estimate used below, including
its constant $A_R$, is stated and proved there in
\Cref{lem:sobolev_relu_control}.

\smallskip
\noindent\textbf{Step 1: reduce to $\OT_1$.}

Let
\[
\eta:=\mu^{\aug}-\nu^{\aug},
\]
which is a finite signed measure supported in $\overline K$.
By \eqref{eq:aug_cross_identity} and Proposition~\ref{prop:POT_vs_OT_aug},
\[
\eta=(\mu\oplus_\Delta \nu)-(\nu\oplus_\Delta \mu),
\qquad
\POT_1(\mu,\nu)\ \le\ \OT_1(\mu\oplus_\Delta \nu,\nu\oplus_\Delta \mu).
\]
The two cross-augmented measures have the same total mass
$\mu(K)+\nu(K)$, and hence
\begin{equation}\label{eq:eta_zero_mass}
\eta(\overline X)=0.
\end{equation}
By Kantorovich--Rubinstein duality for $\OT_1$,
\begin{equation}\label{eq:KR_aug_rewrite}
\OT_1(\mu\oplus_\Delta \nu,\nu\oplus_\Delta \mu)
=
\sup_{\Lip(f)\le 1}\ \int_{\overline X} f\,d\eta ,
\end{equation}
where the Lipschitz constant is computed with respect to $\|\cdot\|_\infty$ on $\overline X$.

\smallskip
\noindent\textbf{Step 2: mollify the Kantorovich--Rubinstein test.}

Fix $f$ with $\Lip(f)\le 1$ (with respect to $\|\cdot\|_\infty$ on $\overline X$).
Since $\eta$ is supported in $\overline K$, only the values of $f$ on $\overline K$ matter.
By \eqref{eq:eta_zero_mass}, replacing $f$ by $f-f(0)$ does not
change $\int f\,d\eta$. We may therefore assume, without loss of generality,
that $f(0)=0$. By the McShane extension theorem
\citep{mcshane1934extension}, $f$ admits an extension from $\overline X$ to a
$1$-Lipschitz function on $\R^2$ (with respect to $\|\cdot\|_\infty$), still
denoted by $f$. The extension still satisfies $f(0)=0$, and consequently
\begin{equation}\label{eq:normalized_lipschitz_growth}
|f(z)|\le\|z\|_\infty
\qquad\text{for every }z\in\R^2.
\end{equation}

Set
\[
R_K:=1+\sup_{u\in \overline K}\|u\|_2.
\]
Then $\overline K\subset \mathring B_{R_K}$, where
$\mathring B_{R_K}:=\{u\in\R^2:\|u\|_2<R_K\}$ is the interior of
$B_{R_K}$, and $\mathring B_{R_K}$ is a bounded open Lipschitz domain.

Let $\rho\in C_c^\infty(\R^2)$ be a standard mollifier, $\rho\ge0$, $\int\rho=1$, and set
\[
\rho_r(x)=r^{-2}\rho(x/r),
\qquad
f_r=\rho_r*f.
\]
Then $f_r\in C^\infty(\R^2)$ and, for every $x\in\overline K$,
\[
|f_r(x)-f(x)|
=
\Big|\int_{\R^2}\rho_r(y)\bigl(f(x-y)-f(x)\bigr)\,dy\Big|
\le
\int_{\R^2}\rho_r(y)\,\|y\|_\infty\,dy
=
r\int_{\R^2}\rho(z)\|z\|_\infty\,dz,
\]
so that
\begin{equation}\label{eq:moll_sup}
\|f-f_r\|_{L^\infty(\overline K)}\le c_{\rm moll}\,r,
\qquad
c_{\rm moll}:=\int_{\R^2}\rho(z)\|z\|_\infty\,dz.
\end{equation}

Since \eqref{eq:moll_sup} already controls the mollification error $f-f_r$
uniformly over the admissible functions $f$, our aim is now to bound
$\|f_r\|_{W^{5/2}(L^2(B_{R_K}))}$ with explicit dependence on $r$. We begin
with the zeroth-order Sobolev term, where the normalization is essential.
After the change of variables $y=rz$,
\eqref{eq:normalized_lipschitz_growth}
gives, for $x\in B_{R_K}$ and $0<r\le1$,
\[
|f_r(x)|
\le
\int_{\R^2}\rho(z)\|x-rz\|_\infty\,dz
\le
R_K+c_{\rm moll}.
\]
Thus
\begin{equation}\label{eq:moll_L2_zero_order}
\|f_r\|_{L^2(B_{R_K})}
\le |B_{R_K}|^{1/2}(R_K+c_{\rm moll}),
\qquad 0<r\le1.
\end{equation}

We next estimate the derivatives of $f_r$.
Since $f$ is $1$-Lipschitz with respect to $\|\cdot\|_\infty$, Rademacher's theorem implies that $f$ is differentiable a.e., and at every differentiability point one has
\[
\|\nabla f(x)\|_1\le 1.
\]
In particular,
\[
\|\partial_i f\|_{L^\infty(\R^2)}\le 1,\qquad i=1,2.
\]
For a multi-index $\alpha\in\N^2$ with $|\alpha|=s\ge1$, choose $i\in\{1,2\}$ with $\alpha_i\ge1$
and set $\beta:=\alpha-e_i$ (so $|\beta|=s-1$).
By commutation of convolution with weak derivatives and integration by parts
\citep{evans2010pde},
\[
D^\alpha f_r = D^\alpha(\rho_r*f)=(D^\alpha\rho_r)*f=(D^\beta\rho_r)*(\partial_i f).
\]
Hence, using the $L^1$--$L^\infty$ Young inequality,
\[
\|D^\alpha f_r\|_{L^\infty(\R^2)}
\le \|D^\beta\rho_r\|_{L^1(\R^2)}\,\|\partial_i f\|_{L^\infty(\R^2)}
\le \|D^\beta\rho_r\|_{L^1(\R^2)},
\]
and therefore
\begin{equation}\label{eq:Dalpha_L2}
\|D^\alpha f_r\|_{L^2(B_{R_K})}
\le |B_{R_K}|^{1/2}\,\|D^\alpha f_r\|_{L^\infty(\R^2)}
\le |B_{R_K}|^{1/2}\,\|D^\beta\rho_r\|_{L^1(\R^2)}.
\end{equation}
By scaling, for every $|\beta|=s-1$,
\[
\|D^\beta\rho_r\|_{L^1(\R^2)}=r^{-(s-1)}\|D^\beta\rho\|_{L^1(\R^2)}.
\]
Combining with \eqref{eq:Dalpha_L2} yields, for each $|\alpha|=s$,
\[
\|D^\alpha f_r\|_{L^2(B_{R_K})} \le C_{K,\rho,s}\, r^{-(s-1)},
\]
with $C_{K,\rho,s}:=|B_{R_K}|^{1/2}\max_{|\beta|=s-1}\|D^\beta\rho\|_{L^1}$.
Summing over the finitely many $|\alpha|=s$ gives
\begin{equation}\label{eq:Ds_bound}
\|D^s f_r\|_{L^2(B_{R_K})} \le C'_{K,\rho,s}\, r^{-(s-1)},
\qquad
\text{where }\ \|D^s h\|_{L^2(B_{R_K})}^2:=\sum_{|\alpha|=s}\|D^\alpha h\|_{L^2(B_{R_K})}^2.
\end{equation}
Using the weak-derivative norm \eqref{eq:integer_sobolev_norm}, and combining
\eqref{eq:moll_L2_zero_order} with the derivative bounds above, yields constants
\(C^{(2)}_{K,\rho},C^{(3)}_{K,\rho}<\infty\) such that
\[
\|f_r\|_{W^{2,2}(B_{R_K})}\le C^{(2)}_{K,\rho}\,r^{-1},
\qquad
\|f_r\|_{W^{3,2}(B_{R_K})}\le C^{(3)}_{K,\rho}\,r^{-2}.
\]
Here and below these estimates are asserted for $0<r\le1$.

Applying the interpolation inequality
\eqref{eq:ball_sobolev_interpolation} to $h=f_r$ yields
\[
\|f_r\|_{W^{5/2}(L^2(B_{R_K}))}
\le c'_{\rm moll,K}\,r^{-3/2},
\qquad 0<r\le1,
\]
for a suitable constant $c'_{\rm moll,K}<\infty$ depending only on $K$ and
the mollifier.
Combining this with \eqref{eq:moll_sup}, we obtain
\begin{equation}\label{eq:moll_bounds}
\|f-f_r\|_{L^\infty(\overline K)}\le c_{\rm moll}\,r,
\qquad
\|f_r\|_{W^{5/2}(L^2(B_{R_K}))}\le c'_{\rm moll,K}\, r^{-3/2},
\qquad 0<r\le1.
\end{equation}

\smallskip
\noindent\textbf{Step 3: bound the smooth part.}

Apply \Cref{lem:sobolev_relu_control} with $R=R_K$, $h=f_r$, and the
finite signed measure $\eta$, which is supported in
$\overline K\subset B_{R_K}$. The sphere-form estimate
\eqref{eq:sobolev_sphere_integral_control}, together with
\eqref{eq:moll_bounds}, gives
\[
\Big|\int_{\overline K} f_r\,d\eta\Big|
\le
\sqrt{1+R_K^2}\,A_{R_K}\,c'_{\rm moll,K}\,r^{-3/2}
\sup_{v\in \Ss^2}
\Big|\int_{\overline K}\relu(\langle v,(1,u)\rangle)\,d\eta(u)\Big|.
\]
Since $\eta=\mu^{\aug}-\nu^{\aug}$, by construction
\[
\int_{\overline K}\relu(\langle v,(1,u)\rangle)\,d\eta(u)
=
S(\mu)(v)-S(\nu)(v),
\]
so the last supremum equals $\varepsilon$. Therefore
\begin{equation}\label{eq:smooth_bound_rewrite}
\Big|\int_{\overline K} f_r\,d\eta\Big|
\le
\sqrt{1+R_K^2}\,A_{R_K}\,c'_{\rm moll,K}\,r^{-3/2}\,\varepsilon.
\end{equation}

\smallskip
\noindent\textbf{Step 4: bound the mollification error.}

By \eqref{eq:moll_bounds},
\[
\Big|\int_{\overline K}(f-f_r)\,d\eta\Big|
\le
\|f-f_r\|_{L^\infty(\overline K)}\,\|\eta\|_{\rm TV}
\le
c_{\rm moll}\,r\,\|\eta\|_{\rm TV}.
\]
Since $\mu,\nu$ are supported in $K$ and $\pers(p)\ge \delta_K$ on $K$, we have
\[
\mu(K)\le \delta_K^{-1}\pers(\mu),
\qquad
\nu(K)\le \delta_K^{-1}\pers(\nu).
\]
Moreover,
\[
\|\eta\|_{\rm TV}
\le
\|\mu^{\aug}\|_{\rm TV}+\|\nu^{\aug}\|_{\rm TV}
\le
2\mu(K)+2\nu(K),
\]
because $\mu^{\aug}=\mu-(\pi_\Delta)_\#\mu$ and similarly for $\nu^{\aug}$.
Hence
\[
\|\eta\|_{\rm TV}
\le
\frac{2}{\delta_K}\big(\pers(\mu)+\pers(\nu)\big)
=
\frac{2}{\delta_K}M,
\]
and therefore
\begin{equation}\label{eq:error_bound_rewrite}
\Big|\int_{\overline K}(f-f_r)\,d\eta\Big|
\le
\frac{2}{\delta_K}\,c_{\rm moll}\,r\,M.
\end{equation}

\smallskip
\noindent\textbf{Step 5: optimize the mollification scale.}

Combining \eqref{eq:smooth_bound_rewrite} and \eqref{eq:error_bound_rewrite}, we find
\[
\Big|\int_{\overline K} f\,d\eta\Big|
\le
\underbrace{\sqrt{1+R_K^2}\,A_{R_K}\,c'_{\rm moll,K}}_{=:\kappa_K}\,r^{-3/2}\varepsilon
+
\underbrace{\frac{2}{\delta_K}\,c_{\rm moll}}_{=:\lambda_K}\,rM.
\]
This estimate holds for $0<r\le1$. If $M=0$, then $\mu=\nu=0$ and there is
nothing to prove; if $\varepsilon=0$, letting $r\downarrow0$ gives the claim.
Assume henceforth that $M,\varepsilon>0$, and set
\[
r_*:=\left(\frac{3\kappa_K\varepsilon}{2\lambda_KM}\right)^{2/5}.
\]
If $r_*\le1$, a direct minimization shows that, for every
$\kappa,\lambda>0$,
\begin{equation}\label{eq:Copt_rewrite}
\inf_{r>0}\big(\kappa\,r^{-3/2}+\lambda\,r\big)
=
C_{\mathrm{opt}}\,\kappa^{2/5}\lambda^{3/5},
\qquad
C_{\mathrm{opt}}:=\frac{5}{2}\Big(\frac{3}{2}\Big)^{-3/5}.
\end{equation}
The objective above has coefficients $\kappa_K\varepsilon$ and
$\lambda_KM$. Applying \eqref{eq:Copt_rewrite} to this objective, then taking
the supremum in \eqref{eq:KR_aug_rewrite} and using
$\POT_1\le\OT_1$ from Proposition~\ref{prop:POT_vs_OT_aug}, gives
\[
\POT_1(\mu,\nu)
\le C_{\mathrm{opt}}\kappa_K^{2/5}\lambda_K^{3/5}
M^{3/5}\varepsilon^{2/5}.
\]

If $r_*>1$, then
\[
M<\frac{3\kappa_K}{2\lambda_K}\,\varepsilon.
\]
Using the zero partial plan gives $\POT_1(\mu,\nu)\le M$, and hence
\[
\POT_1(\mu,\nu)
\le
\left(\frac{3\kappa_K}{2\lambda_K}\right)^{2/5}
M^{3/5}\varepsilon^{2/5}.
\]
It follows in either case that
\[
\POT_1(\mu,\nu)
\le C_KM^{3/5}\varepsilon^{2/5},
\]
where one may take
\begin{equation}\label{eq:CK_explicit}
C_K:=\max\left\{
C_{\mathrm{opt}}\kappa_K^{2/5}\lambda_K^{3/5},
\left(\frac{3\kappa_K}{2\lambda_K}\right)^{2/5}
\right\}.
\end{equation}
Finally,
\[
\POT_1(\mu,\nu)\le \OT_1(\mu\oplus_\Delta \nu,\nu\oplus_\Delta \mu),
\]
so the thesis follows.
\end{proof}

\begin{rmk}\label{rmk:CK_explicit}
In \eqref{eq:CK_explicit},
\[
\kappa_K:=\sqrt{1+R_K^2}\,A_K\,c'_{\rm moll,K},
\qquad
\lambda_K:=\frac{2}{\delta_K}\,c_{\rm moll},
\]
where
\[
R_K:=1+\sup_{u\in \overline K}\|u\|_2,
\qquad
B_{R_K}:=\{p\in \R^2:\|p\|_2\le R_K\},
\qquad
C_{\mathrm{opt}}:=\frac{5}{2}\Big(\frac{3}{2}\Big)^{-3/5}.
\]
Here \(c_{\rm moll}\) is the constant from \eqref{eq:moll_sup}, and
\(c'_{\rm moll,K}\) is any constant such that
\[
\|f_r\|_{W^{5/2}(L^2(B_{R_K}))}\le c'_{\rm moll,K}\,r^{-3/2}
\]
for the normalized mollifications used in the proof. Moreover,
\[
A_K=A_{R_K}=\frac{A_0\,C_{{\rm sc},R_K}}{R_K},
\]
where \(A_0\) is the unit-ball constant in \eqref{eq:mao_unit} and
\(C_{{\rm sc},R_K}\) is the scaling constant from \eqref{eq:scale_frac}.
In particular, once the mollifier is fixed, all quantities entering \eqref{eq:CK_explicit}
depend only on \(K\).
\end{rmk}

\subsubsection{Main Inverse-Continuity Result}

We can now complete the inverse-continuity proof. The vanishing lemmas show that, under uniform convergence of persistence spheres toward a compactly supported target $\mu$, the persistence mass of $\mu_n$ outside a suitable compact core becomes negligible. Once restricted to such a fixed compact core, the local H\"older estimate above gives quantitative control of $\POT_1$ in terms of the residual sphere discrepancy. This yields the desired global convergence.

\begin{theorem}[Uniform Spheres $\Rightarrow$ $\POT_1$ Convergence]\label{thm:Sunif_to_POT}
Let $\mu\in\mathcal M$ have compact support in $X$, and let $\{\mu_n\}\subset\mathcal M$.
If $\|S(\mu_n)-S(\mu)\|_\infty\to0$, then $\POT_1(\mu_n,\mu)\to0$.
\end{theorem}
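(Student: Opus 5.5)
The plan is to split each $\mu_n$ into a compact core and a remainder whose total persistence vanishes asymptotically. The core is handled by \Cref{prop:compact_core_holder_siegel}; the remainder is absorbed by \Cref{lem:sphere_trunc} and by the transport-to-zero bound of \Cref{rmk:pot_to_zero}.

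\textbf{Choice of the core and vanishing of the remainder.} If $\mu=0$, \Cref{lem:pers_eval} gives $\pers(\mu_n)\to 0$. Since the zero plan is admissible, $\POT_1(\mu_n,0)\le\pers(\mu_n)\to0$, and we are done. Assume now $\mu\neq 0$. Since $\supp\mu$ is compact in $X$, we can pick $\delta>0$ with $\mu(\{\pers\le 2\delta\})=0$. Take $R\ge \max\{R_\mu,R_0\}$ from \Cref{lem:d_tail}, so that $|d|\le R$ on $\supp\mu$, and take $M>2M_\mu$ as in \Cref{lem:highpers_vanish}. Define the compact set
\[
K:=\{p\in X:\ |d(p)|\le R,\ \delta\le \pers(p)\le M\},
\]
which contains $\supp\mu$ and satisfies $\delta_K\ge\delta>0$. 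The complement $K^c$ is covered by three regions: $\{|d|>R\}$, $\{\pers<\delta\}$ and $\{\pers>M\}$. Applying \Cref{lem:d_tail}, \Cref{lem:lowpers_vanish} and \Cref{lem:highpers_vanish} to these regions gives
\[
\tau_n:=\int_{K^c}\pers\,d\mu_n\longrightarrow 0 .
\]

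\textbf{Reduction to the core.} Set $\mu_n^K:=\mu_n|_K$. By \Cref{lem:sphere_trunc},
\[
\|S(\mu_n^K)-S(\mu)\|_\infty\le \|S(\mu_n)-S(\mu)\|_\infty+\sqrt2\,\tau_n=:\varepsilon_n\to0 .
\]
Both $\mu_n^K$ and $\mu$ are supported in $K$, so \Cref{prop:compact_core_holder_siegel} gives
\[
\POT_1(\mu_n^K,\mu)\le C_K\bigl(\pers(\mu_n^K)+\pers(\mu)\bigr)^{3/5}\varepsilon_n^{2/5}.
\]
By \Cref{lem:pers_eval}, $\sup_n\pers(\mu_n)<\infty$, so the persistence factor is bounded uniformly in $n$ and the right-hand side tends to $0$.

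\textbf{Conclusion.} We use the triangle inequality for $\POT_1$, which is a metric on $\mathcal M$ (see \cite{divol2021understanding}). Together with $\POT_1(\mu_n,\mu_n^K)\le \int_{K^c}\pers\,d\mu_n=\tau_n$, obtained from the plan that matches $\mu_n^K$ to itself and sends $\mu_n|_{K^c}$ to the diagonal, this gives $\POT_1(\mu_n,\mu)\le \tau_n+\POT_1(\mu_n^K,\mu)\to0$.

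The main obstacle is ensuring that $K$ is fixed independently of $n$ while all three tails vanish simultaneously. The delicate point is the low-persistence region: we need $\delta_K>0$, because the Hölder constant $C_K$ degenerates as $\delta_K\to 0$. This is precisely why $\delta$ must be chosen using the compact support of $\mu$ away from the diagonal.
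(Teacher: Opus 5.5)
Your proposal is correct and follows essentially the same route as the paper's proof. It uses the same fixed core $K=\{|d|\le R,\ \delta\le \pers\le M\}$ built from \Cref{lem:d_tail,lem:lowpers_vanish,lem:highpers_vanish}, truncation via \Cref{lem:sphere_trunc}, the local H\"older bound of \Cref{prop:compact_core_holder_siegel} with the persistence factor bounded through \Cref{lem:pers_eval}, and the triangle inequality combined with the deletion estimate for $\mu_n|_{K^c}$.
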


\begin{proof}
If $\mu=0$, then \Cref{lem:pers_eval} and the assumed uniform convergence give
\[
\pers(\mu_n)=\frac1{\sqrt2}S(\mu_n)(v_{\pers})\longrightarrow0.
\]
By \Cref{rmk:pot_to_zero},
\[
\POT_1(\mu_n,0)=\pers(\mu_n)\longrightarrow0.
\]
Hence, in the remainder of the proof, we may assume $\mu\ne0$.

\smallskip
\noindent\textbf{Step 1: choose a fixed compact core.}
Since $\supp\mu\subset X$ is compact, there exist
\[
\delta_\mu:=\inf_{p\in\supp\mu}\pers(p)>0,
\qquad
M_\mu:=\sup_{p\in\supp\mu}\pers(p)<\infty,
\qquad
R_\mu:=\sup_{p\in\supp\mu}|d(p)|<\infty.
\]
Choose
\[
0<\delta<\frac{\delta_\mu}{2},
\qquad
M>2M_\mu.
\]
Then
\[
\mu(\{\pers\le 2\delta\})=0,
\qquad
\supp\mu\subset \{|d|\le R_\mu,\ \delta\le \pers\le M_\mu\}.
\]

By the strengthened form of \Cref{lem:d_tail}, there exists $R>R_\mu$,
depending only on $\mu$, such that
\[
\limsup_{n\to\infty}\int_{\{|d|\ge R\}}\pers\,d\mu_n =0.
\]
Moreover, by \Cref{lem:lowpers_vanish},
\[
\int_{\{\pers<\delta\}}\pers\,d\mu_n \to 0,
\]
and by \Cref{lem:highpers_vanish},
\[
\limsup_{n\to\infty}\int_{\{\pers\ge M\}}\pers\,d\mu_n = 0.
\]

Set
\[
K:=K_{R,\delta,M}:=\{p\in X:\ |d(p)|\le R,\ \delta\le \pers(p)\le M\}.
\]
Then \(K\) is compact, \(\supp\mu\subset K\), and
\begin{equation}\label{eq:outside_core_zero_rewrite}
\limsup_{n\to\infty}\int_{K^c}\pers\,d\mu_n =0.
\end{equation}

\smallskip
\noindent\textbf{Step 2: control convergence on the fixed compact core.}
Set
\[
\nu_n:=\mu_n|_K.
\]
Then \(\nu_n\) and \(\mu\) are both supported in the fixed compact set \(K\).

By \Cref{lem:sphere_trunc} and \eqref{eq:outside_core_zero_rewrite},
\[
\limsup_{n\to\infty}\|S(\mu_n)-S(\nu_n)\|_\infty
\le
\sqrt2\,\limsup_{n\to\infty}\int_{K^c}\pers\,d\mu_n
=0.
\]
Since \(\|S(\mu_n)-S(\mu)\|_\infty\to0\), it follows that
\begin{equation}\label{eq:compact_sphere_defect_zero_rewrite}
\|S(\nu_n)-S(\mu)\|_\infty\to 0.
\end{equation}

Now apply \Cref{prop:compact_core_holder_siegel} on the fixed compact set \(K\).
Since
\[
\pers(\nu_n)\le \pers(\mu_n),
\]
and \Cref{lem:pers_eval} gives \(\pers(\mu_n)\to\pers(\mu)\), there exists
\(C_{\pers}<\infty\) such that
\[
\sup_n\big(\pers(\nu_n)+\pers(\mu)\big)\le C_{\pers}.
\]
Hence, for all \(n\),
\[
\POT_1(\nu_n,\mu)
\le
C_K\,\big(\pers(\nu_n)+\pers(\mu)\big)^{3/5}\,\|S(\nu_n)-S(\mu)\|_\infty^{2/5}
\le
C_K\,C_{\pers}^{3/5}\,\|S(\nu_n)-S(\mu)\|_\infty^{2/5}.
\]
Using \eqref{eq:compact_sphere_defect_zero_rewrite}, we conclude that
\begin{equation}\label{eq:compact_pot_zero_rewrite}
\POT_1(\nu_n,\mu)\to 0.
\end{equation}

\smallskip
\noindent\textbf{Step 3: return to the full measures.}
By the triangle inequality,
\[
\POT_1(\mu_n,\mu)\le \POT_1(\mu_n,\nu_n)+\POT_1(\nu_n,\mu).
\]
For the first term, match \(\nu_n\) to itself and leave
\(\mu_n-\nu_n=\mu_n|_{K^c}\) unmatched. By \Cref{rmk:pot_to_zero},
\[
\POT_1(\mu_n,\nu_n)\le \POT_1(\mu_n|_{K^c},0)=\int_{K^c}\pers\,d\mu_n.
\]
Therefore, by \eqref{eq:outside_core_zero_rewrite},
\[
\limsup_{n\to\infty}\POT_1(\mu_n,\nu_n)\le
\limsup_{n\to\infty}\int_{K^c}\pers\,d\mu_n=0,
\]
hence
\begin{equation}\label{eq:full_to_core_zero_rewrite}
\POT_1(\mu_n,\nu_n)\to0.
\end{equation}

Combining \eqref{eq:compact_pot_zero_rewrite} and \eqref{eq:full_to_core_zero_rewrite},
we obtain
\[
\POT_1(\mu_n,\mu)\to0.
\]
\end{proof}

\begin{corollary}[Counting Measures]\label{cor:counting}
Assume $\mu_n=\sum_{i=1}^{N_n}\delta_{p_{n,i}}$ and $\mu=\sum_{i=1}^{N}\delta_{p_{i}}$.
If $\|S(\mu_n)-S(\mu)\|_\infty\to0$, then $\POT_1(\mu_n,\mu)\to0$.
\end{corollary}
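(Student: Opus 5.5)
The plan is to reduce the statement to \Cref{thm:Sunif_to_POT}. The only hypothesis of that theorem not written explicitly in the corollary is that the target has compact support in $X$, and the corollary's target $\mu=\sum_{i=1}^N\delta_{p_i}$ is a finite counting measure. So the proof amounts to checking that $\mu$ and all $\mu_n$ lie in $\mathcal M$ and that $\supp\mu$ is a compact subset of $X$.

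First, each $\mu_n$ and $\mu$ is a finite sum of Dirac masses at points of $X$. It is therefore a finite positive Borel measure with $\int\|p\|_2\,d\mu=\sum_i\|p_i\|_2<\infty$, so it is integrable in the sense of \Cref{def:integrable}, and $\mu,\mu_n\in\mathcal M$. Implicitly, the $N_n$ are finite; if infinitely many points were allowed, one would instead need the sum to satisfy the integrability condition. This is harmless because the statement only makes sense for $\mu_n\in\mathcal M$.

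Second, $\supp\mu=\{p_1,\dots,p_N\}$ is a finite subset of $X$, hence compact. It stays at positive distance from $\Delta$, since $\min_i\pers(p_i)>0$, which is exactly what Step 1 of the proof of \Cref{thm:Sunif_to_POT} uses. If $N=0$, then $\mu=0$, which is also covered by the theorem through its $\mu=0$ case via \Cref{lem:pers_eval} and \Cref{rmk:pot_to_zero}.

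With these checks, \Cref{thm:Sunif_to_POT} applied to $\mu$ and $\{\mu_n\}$ gives $\POT_1(\mu_n,\mu)\to0$ directly. There is no real obstacle. The only point needing care is confirming that nothing in the theorem requires the approximating $\mu_n$ to be compactly supported or of bounded cardinality, and it does not: the vanishing lemmas and the truncation to the core $K$ handle arbitrary $\mu_n\in\mathcal M$. Optionally, one can remark via \Cref{thm:POT_conv} that the conclusion also yields vague convergence and convergence of total persistence.
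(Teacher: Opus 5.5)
Your proposal is correct and matches the paper, which states this corollary without a separate proof as an immediate specialization of \Cref{thm:Sunif_to_POT}. A finite counting measure on $X$ is integrable and has finite, hence compact, support in $X$; the case $N=0$ falls under the theorem's $\mu=0$ branch.
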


\section{Hilbert Space Results}
\label{sec:hilbert}

In addition to the bi-continuity statement proved above, we collect convergence results that are particularly relevant in applications.

In practice, persistence spheres are used as elements of the Hilbert space $L^2(\Ss^2)$ to exploit inner products, PCA-type decompositions, and gradient-based learning pipelines. However, our main stability and inverse-continuity results are naturally formulated in the uniform norm on $\Ss^2$. The aim of this section is to relate these two viewpoints. We first show that, under a uniform Lipschitz bound on the sphere, $L^2$ and uniform convergence are equivalent. We then introduce a growth-controlled class of measures for which this regularity follows from explicit moment bounds on the associated augmented measures.

\subsection{Uniform vs $L^2$ Convergence on the Sphere}

We begin with a general comparison between the uniform and \(L^2\) topologies on
\(\Ss^2\). Uniform convergence always implies \(L^2\) convergence, while the converse
holds for equi-Lipschitz families. Applied to persistence spheres, this shows that once
a uniform Lipschitz bound is available, the Hilbert-space viewpoint in \(L^2(\Ss^2)\)
is fully compatible with the sup-norm framework used in the stability and inverse-continuity
results above.

\begin{prop}[Uniform vs.\ $L^2$]\label{cor:Linfty_L2}
Let $h_n,h\in C(\Ss^2)$. Then
\[
\|h_n-h\|_{L^2(\Ss^2)} \le |\Ss^2|^{1/2}\,\|h_n-h\|_\infty = (4\pi)^{1/2}\,\|h_n-h\|_\infty,
\]
so $\|h_n-h\|_\infty\to0$ implies $\|h_n-h\|_{L^2(\Ss^2)}\to0$.

Conversely, assume that $g_n:=h_n-h$ is $L$-Lipschitz on $\Ss^2$ (for the Euclidean metric inherited from $\R^3$),
with a constant $L$ independent of $n$. Then for every $n$,
\begin{equation}\label{eq:Linfty_from_L2_explicit}
\|g_n\|_\infty
\le
2\,\max\Big\{\pi^{-1/4}\,L^{1/2}\,\|g_n\|_{L^2(\Ss^2)}^{1/2},\ \pi^{-1/2}\,\|g_n\|_{L^2(\Ss^2)}\Big\}.
\end{equation}
In particular, under a uniform Lipschitz bound, $L^2$ and uniform convergence are equivalent.
\end{prop}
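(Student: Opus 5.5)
The first inequality is immediate. Since $|h_n-h|\le\|h_n-h\|_\infty$ pointwise, integrating the square over $\Ss^2$ gives $\|h_n-h\|_{L^2}^2\le|\Ss^2|\,\|h_n-h\|_\infty^2$, and $|\Ss^2|=4\pi$. Uniform convergence therefore implies $L^2$ convergence.

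For the converse we use a standard "Lipschitz bump" argument. Write $g=g_n$ and $a:=\|g\|_\infty$. Because $g$ is continuous on the compact sphere, $a$ is attained at some $v^*$, and we may assume $a>0$. The Lipschitz bound gives $|g(v)|\ge a-L\|v-v^*\|$. In particular $|g|\ge a/2$ on the spherical cap $C_\rho:=\{v\in\Ss^2:\|v-v^*\|\le\rho\}$ with $\rho:=a/(2L)$. Hence
\[
\|g\|_{L^2}^2\ \ge\ \tfrac{a^2}{4}\,|C_\rho\cap\Ss^2|.
\]
By Archimedes' formula, a cap of chordal radius $\rho\le 2$ has area exactly $\pi\rho^2$: its height is $\rho^2/2$, and the area is $2\pi\cdot\rho^2/2$. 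If $\rho>2$, the cap is all of $\Ss^2$, with area $4\pi$.

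We now split into two cases.

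\textbf{Case $a\le 4L$.} Here $\rho\le2$, so
\[
\|g\|_{L^2}^2\ \ge\ \frac{a^2}{4}\cdot\pi\frac{a^2}{4L^2}\ =\ \frac{\pi a^4}{16L^2},
\]
which gives $a\le 2\pi^{-1/4}L^{1/2}\|g\|_{L^2}^{1/2}$.

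\textbf{Case $a>4L$.} The whole sphere satisfies $|g|\ge a/2$, so $\|g\|_{L^2}^2\ge \pi a^2$. This gives $a\le\pi^{-1/2}\|g\|_{L^2}\le 2\pi^{-1/2}\|g\|_{L^2}$.

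In either case $a$ is bounded by the maximum in \eqref{eq:Linfty_from_L2_explicit}. The equivalence of the two convergences under a uniform Lipschitz bound then follows, since both terms of the maximum tend to zero with $\|g_n\|_{L^2}$.

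The only delicate point is getting the cap area right, with the chordal rather than the geodesic radius and exact constants, and handling the degenerate case where the cap covers the sphere. Using Archimedes' formula, as above, settles both.
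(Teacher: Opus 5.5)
Your proof is correct and follows essentially the same route as the paper: you use the Lipschitz lower bound $|g|\ge a-L\|v-v^*\|_2$ on a chordal cap of area $\pi r^2$, followed by a two-case split. The difference is cosmetic: you split at $a\le 4L$ and use the whole sphere in the second case, which gives the sharper constant $\pi^{-1/2}$, whereas the paper splits at $M\le 2L$ and uses the cap of radius $r=1$ in the second case.
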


\begin{proof}
The first inequality follows immediately from
\[
\|g\|_{L^2(\Ss^2)}\le |\Ss^2|^{1/2}\|g\|_\infty.
\]

For the converse, fix \(n\) and write \(g:=g_n\), \(M:=\|g\|_\infty\).
Choose \(x\in \Ss^2\) such that \(|g(x)|=M\), and let \(\sigma\) denote the
surface measure on \(\Ss^2\). For \(r\in(0,2)\), set
\[
B(x,r):=\{y\in \Ss^2:\|y-x\|_2\le r\}.
\]
Since \(g\) is \(L\)-Lipschitz, for every \(y\in B(x,r)\) one has
\[
|g(y)|
\ge |g(x)|-L\|y-x\|_2
\ge M-Lr.
\]
Hence
\[
|g(y)|\ge (M-Lr)_+
\qquad\forall y\in B(x,r),
\]
and therefore
\begin{equation}\label{eq:L2_ball_lower}
\|g\|_{L^2(\Ss^2)}^2
=
\int_{\Ss^2}|g|^2\,d\sigma
\ge
\int_{B(x,r)}|g(y)|^2\,d\sigma(y)
\ge
\sigma(B(x,r))\,(M-Lr)_+^2.
\end{equation}

We now compute \(\sigma(B(x,r))\). Writing \(y\) in polar angle \(\theta\) from
\(x\), so that \(\cos\theta=\langle x,y\rangle\), one has
\[
\|y-x\|_2^2
=
\|x\|_2^2+\|y\|_2^2-2\langle x,y\rangle
=
2-2\cos\theta
=
4\sin^2(\theta/2).
\]
Thus \(\|y-x\|_2\le r\) is equivalent to \(\theta\le 2\arcsin(r/2)\). The area
of a spherical cap of angular radius \(\theta\) is
\[
\sigma(B(x,r))=2\pi(1-\cos\theta).
\]
With \(\theta=2\arcsin(r/2)\), using \(\cos(2\alpha)=1-2\sin^2\alpha\), we get
\[
\cos\theta
=
\cos\bigl(2\arcsin(r/2)\bigr)
=
1-\frac{r^2}{2},
\]
hence
\[
\sigma(B(x,r))
=
2\pi\Bigl(1-\Bigl(1-\frac{r^2}{2}\Bigr)\Bigr)
=
\pi r^2
\qquad\text{for all }0<r<2.
\]
Substituting into \eqref{eq:L2_ball_lower}, we obtain
\[
\|g\|_{L^2(\Ss^2)} \ge \sqrt{\pi}\,r\,(M-Lr)_+
\qquad\forall r\in(0,2).
\]

\smallskip
\noindent\textbf{Case 1: \(M\le 2L\).}
Choose \(r=M/(2L)\in(0,1]\subset(0,2)\). Then \(M-Lr=M/2\), so
\[
\|g\|_{L^2(\Ss^2)}
\ge
\sqrt{\pi}\,\frac{M}{2L}\cdot\frac{M}{2}
=
\frac{\sqrt{\pi}}{4L}\,M^2.
\]
Therefore
\[
M\le 2\,\pi^{-1/4}\,L^{1/2}\,\|g\|_{L^2(\Ss^2)}^{1/2}.
\]

\smallskip
\noindent\textbf{Case 2: \(M>2L\).}
Choose \(r=1\in(0,2)\). Then
\[
M-Lr=M-L>\frac M2,
\]
hence
\[
\|g\|_{L^2(\Ss^2)}
\ge
\sqrt{\pi}\cdot 1\cdot \frac M2,
\]
and so
\[
M\le 2\,\pi^{-1/2}\,\|g\|_{L^2(\Ss^2)}.
\]

Combining the two cases proves \eqref{eq:Linfty_from_L2_explicit}.
\end{proof}

The previous proposition shows that passing from \(L^2\) convergence to uniform convergence on \(\Ss^2\) reduces to controlling the Lipschitz constants of the sphere functions. For persistence spheres, this regularity is naturally linked to quantitative bounds on the underlying measures. We now introduce a class tailored to this purpose, and then show that on this class \(L^2\) convergence of persistence spheres upgrades to uniform convergence.

In the compact-support regime treated in \Cref{sec:compact_rates}, one can go further and obtain explicit inverse estimates in \(\POT_1\) from sup-norm discrepancies of the associated sphere functions. The results below should be viewed as a Hilbert-space counterpart to that compact-core theory.

\subsection{A growth--controlled class $\mathcal M(\mathcal A,\mathcal B)$}

As explained by \Cref{cor:Linfty_L2}, the main obstacle to upgrading \(L^2\) convergence to uniform convergence is the possible lack of a uniform Lipschitz bound on the differences \(S(\mu_n)-S(\mu)\). To obtain such control, we work with measures whose local first moment and persistence-weighted far field are quantitatively bounded. Since the argument proceeds by truncating to large bounded cores and then applying \Cref{cor:Linfty_L2} on each core, we also impose a compatibility condition ensuring that the truncation error decays fast enough relative to the growth of the local Lipschitz constants.

\begin{defi}\label{def:MA_B}
Let $\mathcal A,\mathcal B$ be positive Borel measures on $\R^2$. We say that the pair $(\mathcal A,\mathcal B)$ is compatible if:
\begin{itemize}
\item $\mathcal A$ is a Radon measure;
\item $\mathcal B$ is a finite measure;
\item
\begin{equation}\label{eq:AB_compatibility}
\frac{\mathcal A(B_R)\,\mathcal B(\R^2\setminus B_{R-1})}{R}\xrightarrow{R\to\infty} 0,
\end{equation}
where $B_R=\{p\in\R^2:\|p\|_2\le R\}$.
\end{itemize}

Given a compatible pair $(\mathcal A,\mathcal B)$, we define $\mathcal M(\mathcal A,\mathcal B)$ as the set of integrable measures $\mu$ on $X$ such that for every Borel set $U\subset\R^2$,
\begin{align}
\int_{U\cap X} \|(1,p)\|_2\,d\mu(p) &\le \mathcal A(U), \label{eq:A_control}\\
\int_{U\cap X} \|(1,p)\|_2\,\pers(p)\,d\mu(p) &\le \mathcal B(U). \label{eq:B_control}
\end{align}
\end{defi}

\begin{rmk}\label{rmk:interpretation}
Condition \eqref{eq:A_control} gives a local control of the first moment: on each bounded region, the weighted mass
$\int \|(1,p)\|_2\,d\mu$ is uniformly bounded by $\mathcal A$.
Condition \eqref{eq:B_control} controls the persistence-weighted far field:
since $\mathcal B(\R^2)<\infty$, the measure
$\|(1,p)\|_2\,\pers(p)\,\mu(dp)$ has uniformly bounded total mass.

The compatibility condition \eqref{eq:AB_compatibility} says, roughly speaking, that $\mathcal A(B_R)$ is not allowed to grow too fast compared with the rate at which the tail
$\mathcal B(\R^2\setminus B_{R-1})$ decays. 
\end{rmk}

\begin{example}\label{rmk:compatible_example}
A simple polynomial example is obtained by taking \(\mathcal A\) and \(\mathcal B\)
absolutely continuous with respect to Lebesgue measure, with densities
\[
\rho_{\mathcal A}(p)=(1+\|p\|_2)^{\alpha},
\qquad
\rho_{\mathcal B}(p)=(1+\|p\|_2)^{-\beta},
\qquad p\in\R^2,
\]
for fixed \(\alpha\ge 0\) and \(\beta>\alpha+3\). Then
\[
\mathcal A(B_R)
=
\int_{B_R}(1+\|p\|_2)^{\alpha}\,dp
=
2\pi\int_0^R (1+r)^{\alpha}r\,dr
\lesssim R^{\alpha+2},
\]
and
\[
\mathcal B(\R^2\setminus B_{R-1})
=
\int_{\|p\|_2\ge R-1}(1+\|p\|_2)^{-\beta}\,dp
=
2\pi\int_{R-1}^{\infty}(1+r)^{-\beta}r\,dr
\lesssim R^{2-\beta}.
\]
Therefore
\[
\frac{\mathcal A(B_R)\,\mathcal B(\R^2\setminus B_{R-1})}{R}
\lesssim
R^{\alpha+2}\,R^{2-\beta}\,R^{-1}
=
R^{\alpha+3-\beta}\longrightarrow 0.
\]
Hence \((\mathcal A,\mathcal B)\) is compatible.

In particular, any faster decay for \(\rho_{\mathcal B}\), for instance
\[
\rho_{\mathcal B}(p)=e^{-\|p\|_2},
\]
also yields a compatible pair.
\end{example}

We now show that $L^2$ convergence implies uniform convergence on $\mathcal M(\mathcal A,\mathcal B)$.

For $R>0$, set
\[
K_R:=\{p\in X:\ \|(1,p)\|_2\le R\},
\qquad
K_R^\Delta:=K_R\cup \pi_\Delta(K_R)\subset\overline X.
\]

The first ingredient is a uniform truncation estimate, obtained by combining the truncation lemma
\[
\|S(\eta)-S(\eta|_A)\|_\infty \le \sqrt2\int_{A^c}\pers\,d\eta
\]
with the mixed moment control \eqref{eq:B_control}.

\begin{lem}[Uniform Truncation on $\mathcal M(\mathcal A,\mathcal B)$]\label{lem:uniform_truncation}
Let $\mu\in\mathcal M(\mathcal A,\mathcal B)$ and $R\ge 1$. Then
\begin{equation}\label{eq:trunc_bound_B}
\|S(\mu)-S(\mu|_{K_R})\|_\infty
\le
\frac{\sqrt2}{R}\,\mathcal B\!\big(\R^2\setminus B_{R-1}\big),
\end{equation}
where $B_{R-1}:=\{p\in\R^2:\|p\|_2\le R-1\}$.
In particular, the right-hand side tends to $0$ as $R\to\infty$, uniformly over $\mu\in\mathcal M(\mathcal A,\mathcal B)$.
\end{lem}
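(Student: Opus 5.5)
We plan to apply \Cref{lem:sphere_trunc} with $A=K_R$, which gives at once
\[
\|S(\mu)-S(\mu|_{K_R})\|_\infty\le \sqrt2\int_{X\setminus K_R}\pers(p)\,d\mu(p),
\]
so everything reduces to bounding this tail integral of persistence by the $\mathcal B$-tail. The natural route is to insert the weight $\|(1,p)\|_2$ appearing in \eqref{eq:B_control}. On $X\setminus K_R$ we have $\|(1,p)\|_2>R$, hence $1<\|(1,p)\|_2/R$, and therefore
\[
\int_{X\setminus K_R}\pers\,d\mu\le \frac1R\int_{X\setminus K_R}\|(1,p)\|_2\,\pers(p)\,d\mu(p).
\]

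Next we locate the set $X\setminus K_R$ inside the complement of a ball, so that \eqref{eq:B_control} can be invoked with a fixed Borel set $U$. If $p\notin K_R$ then $1+\|p\|_2^2>R^2$, and since $\|(1,p)\|_2\le 1+\|p\|_2$ we get $\|p\|_2>R-1$. Thus $X\setminus K_R\subset (\R^2\setminus B_{R-1})\cap X$. Taking $U=\R^2\setminus B_{R-1}$ in \eqref{eq:B_control} and using nonnegativity of the integrand to enlarge the domain from $X\setminus K_R$ to $U\cap X$, we obtain
\[
\int_{X\setminus K_R}\|(1,p)\|_2\,\pers\,d\mu\le \mathcal B(\R^2\setminus B_{R-1}).
\]
Combining this with the previous two displays yields \eqref{eq:trunc_bound_B}. (The assumption $R\ge1$ only ensures $B_{R-1}$ is a genuine, possibly degenerate, ball.)

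For the final claim, note that the bound does not depend on $\mu$. Since $\mathcal B$ is finite, $\mathcal B(\R^2\setminus B_{R-1})\le\mathcal B(\R^2)<\infty$, so the right-hand side is at most $\sqrt2\,\mathcal B(\R^2)/R\to0$. In fact the tail itself also tends to $0$ by continuity from above, because the sets $\R^2\setminus B_{R-1}$ decrease to $\varnothing$. There is no real obstacle in this argument; the only point needing care is the elementary inclusion $X\setminus K_R\subset\{\|p\|_2>R-1\}$, which is exactly what makes the shift to $R-1$ in the statement appear.
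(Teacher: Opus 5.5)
Your proposal is correct and follows the paper's proof essentially step for step. You apply the truncation lemma with $A=K_R$, bound $\pers\le \frac1R\|(1,p)\|_2\,\pers$ on $X\setminus K_R$, use the inclusion $X\setminus K_R\subset\R^2\setminus B_{R-1}$, and invoke \eqref{eq:B_control} with $U=\R^2\setminus B_{R-1}$; your strict inequality $\|p\|_2>R-1$ on $X\setminus K_R$ is actually slightly more careful than the paper's non-strict version.
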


\begin{proof}
Since $R\ge 1$, we have the implication $\|(1,p)\|_2\ge R \Rightarrow \|p\|_2\ge R-1$, hence
\[
K_R^c\subset \R^2\setminus B_{R-1}.
\]
Moreover, since on $K_R^c$ we have $\|(1,p)\|_2\ge R$, and
\[
\pers(p)\le \frac{1}{R}\,\|(1,p)\|_2\,\pers(p).
\]
Therefore
\[
\int_{K_R^c}\pers\,d\mu
\le
\frac1R\int_{K_R^c}\|(1,p)\|_2\,\pers(p)\,d\mu(p)
\le
\frac1R\,\mathcal B(\R^2\setminus B_{R-1}),
\]
by \eqref{eq:B_control} with $U=\R^2\setminus B_{R-1}$.
The truncation lemma then gives \eqref{eq:trunc_bound_B}.
\end{proof}

Next, on the truncated core $K_R$ one has a uniform Lipschitz control in $v$, depending only on $\mathcal A(B_R)$.

\begin{lem}[Local Uniform Lipschitz Control]\label{lem:core_lip}
Fix $R\ge 1$. There exists a constant $L_R<\infty$, depending only on $\mathcal A(B_R)$, such that for every
$\mu\in\mathcal M(\mathcal A,\mathcal B)$,
\[
\Lip_{\Ss^2}\big(S(\mu|_{K_R})\big)\le L_R.
\]
One may take, for instance,
\[
L_R:=2\,\mathcal A(B_R).
\]
\end{lem}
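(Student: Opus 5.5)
The plan is to bound the Lipschitz constant of $v\mapsto S(\mu|_{K_R})(v)$ on $\Ss^2$ directly from the integral representation. Set $\eta:=\mu|_{K_R}$. By \Cref{def:PSph}, for $v,w\in\Ss^2$,
\[
S(\eta)(v)-S(\eta)(w)=\int_{K_R}\Big[\psi_v(p)-\psi_w(p)\Big]\,d\mu(p)-\int_{K_R}\Big[\psi_v(\pi_\Delta(p))-\psi_w(\pi_\Delta(p))\Big]\,d\mu(p),
\]
where $\psi_v(u)=\relu(\langle v,(1,u)\rangle)$ as in the proof of \Cref{prop:S_Linfty_POT}. We estimate the two integrals separately rather than exploiting cancellation; this costs only the factor $2$ in $L_R$.

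The key pointwise estimate follows from the $1$-Lipschitz property \eqref{eq:relu_1lip} and Cauchy--Schwarz:
\[
|\psi_v(u)-\psi_w(u)|\le |\langle v-w,(1,u)\rangle|\le \|v-w\|_2\,\|(1,u)\|_2 .
\]
For $u=p$ we keep $\|(1,p)\|_2$. For $u=\pi_\Delta(p)$ we use $\|\pi_\Delta(p)\|_2\le\|p\|_2$, which is already used in \Cref{prop:SLZT_well_defined}, to get $\|(1,\pi_\Delta(p))\|_2\le\|(1,p)\|_2$. Both integrands are therefore bounded by $\|v-w\|_2\,\|(1,p)\|_2$. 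This gives
\[
|S(\eta)(v)-S(\eta)(w)|\le 2\,\|v-w\|_2\int_{K_R}\|(1,p)\|_2\,d\mu(p).
\]

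Finally we invoke the growth control \eqref{eq:A_control}. We have $K_R\subset\{p:\|p\|_2\le R\}=B_R$, since $\|p\|_2\le\|(1,p)\|_2\le R$, so $K_R\subset B_R\cap X$. Taking $U=B_R$ in \eqref{eq:A_control} then gives $\int_{K_R}\|(1,p)\|_2\,d\mu\le\mathcal A(B_R)$. This quantity is finite because $\mathcal A$ is Radon and $B_R$ is compact. The bound is uniform over $\mu\in\mathcal M(\mathcal A,\mathcal B)$, so we may take $L_R=2\mathcal A(B_R)$.

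There is no real obstacle here. The only points requiring care are the containment $K_R\subset B_R$ and the inequality $\|(1,\pi_\Delta(p))\|_2\le\|(1,p)\|_2$, which ensures that the diagonal part of the augmentation is controlled by the same weight.
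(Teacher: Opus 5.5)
Your proposal is correct and follows essentially the same route as the paper's proof. Both arguments rest on three ingredients:
\begin{itemize}
\item the pointwise bound from the $1$-Lipschitz ReLU and Cauchy--Schwarz;
\item the inequality $\|(1,\pi_\Delta(p))\|_2\le\|(1,p)\|_2$ for the diagonal part;
\item \eqref{eq:A_control} with $U=B_R$.
\end{itemize}
The only difference is cosmetic: you bound the two parts of the augmentation separately, whereas the paper integrates once against $|\nu^{\aug}|=\nu+(\pi_\Delta)_\#\nu$.
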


\begin{proof}
Let $\nu:=\mu|_{K_R}$. Using that $\relu$ is $1$-Lipschitz and the definition of $S$,
for $v,w\in\Ss^2$,
\begin{align*}
|S(\nu)(v)-S(\nu)(w)|
&\le
\int_{\overline X}
\big|\relu(\langle v,(1,u)\rangle)-\relu(\langle w,(1,u)\rangle)\big|
\,d|\nu^{\aug}|(u)\\
&\le
\int_{\overline X}
|\langle v-w,(1,u)\rangle|
\,d|\nu^{\aug}|(u)\\
&\le
\|v-w\|_2
\int_{\overline X}\|(1,u)\|_2\,d|\nu^{\aug}|(u).
\end{align*}
Moreover, $|\nu^{\aug}|= \nu+(\pi_\Delta)_\#\nu$, and $\|(1,\pi_\Delta(p))\|_2\le \|(1,p)\|_2$, hence
\[
\int_{\overline X}\|(1,u)\|_2\,d|\nu^{\aug}|(u)
\le
2\int_{K_R}\|(1,p)\|_2\,d\mu(p)
\le
2\,\mathcal A(B_R),
\]
by \eqref{eq:A_control} with $U=B_R$.
This yields the stated Lipschitz bound.
\end{proof}

Putting the truncation and core Lipschitz bounds together yields the desired upgrade from $L^2$ to uniform convergence.

\begin{theorem}[$L^2\Rightarrow L^\infty$ on $\mathcal M(\mathcal A,\mathcal B)$]\label{thm:L2_to_Linfty_MAB}
Let $\mu_n,\mu\in\mathcal M(\mathcal A,\mathcal B)$ and assume
\[
\|S(\mu_n)-S(\mu)\|_{L^2(\Ss^2)}\longrightarrow 0.
\]
Then
\[
\|S(\mu_n)-S(\mu)\|_\infty\longrightarrow 0.
\]
\end{theorem}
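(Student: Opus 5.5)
We plan a truncate-and-interpolate argument. Fix $R\ge 1$ and split
\[
S(\mu_n)-S(\mu)=\bigl(S(\mu_n)-S(\mu_n|_{K_R})\bigr)+\bigl(S(\mu_n|_{K_R})-S(\mu|_{K_R})\bigr)+\bigl(S(\mu|_{K_R})-S(\mu)\bigr).
\]
By \Cref{lem:uniform_truncation}, the first and third terms are bounded in sup norm by $\tau_R:=\frac{\sqrt2}{R}\mathcal B(\R^2\setminus B_{R-1})$, uniformly in $n$. Set
\[
g_{n,R}:=S(\mu_n|_{K_R})-S(\mu|_{K_R}).
\]
By \Cref{lem:core_lip}, $g_{n,R}$ is Lipschitz on $\Ss^2$ with constant at most $2L_R=4\mathcal A(B_R)$, independent of $n$.

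Next we transfer the $L^2$ hypothesis to the core. Using $\|\cdot\|_{L^2}\le(4\pi)^{1/2}\|\cdot\|_\infty$ from \Cref{cor:Linfty_L2} together with the two truncation bounds, we get
\[
\|g_{n,R}\|_{L^2}\le \|S(\mu_n)-S(\mu)\|_{L^2}+2(4\pi)^{1/2}\tau_R .
\]
Applying \eqref{eq:Linfty_from_L2_explicit} with $L=4\mathcal A(B_R)$ then yields
\[
\|g_{n,R}\|_\infty\le 2\max\Big\{\pi^{-1/4}(4\mathcal A(B_R))^{1/2}\big(\epsilon_n+c\,\tau_R\big)^{1/2},\ \pi^{-1/2}\big(\epsilon_n+c\,\tau_R\big)\Big\},
\]
where $\epsilon_n:=\|S(\mu_n)-S(\mu)\|_{L^2}\to0$ and $c=2(4\pi)^{1/2}$.

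Combining the three pieces, we obtain
\[
\limsup_n\|S(\mu_n)-S(\mu)\|_\infty\le 2\tau_R+2\max\Big\{\pi^{-1/4}\,2\,\mathcal A(B_R)^{1/2}(c\,\tau_R)^{1/2},\ \pi^{-1/2}c\,\tau_R\Big\}.
\]
It remains to let $R\to\infty$. The terms $\tau_R$ vanish by finiteness of $\mathcal B$, since $\mathcal B(\R^2\setminus B_{R-1})\to0$. The critical term is
\[
\mathcal A(B_R)\,\tau_R=\sqrt2\,\frac{\mathcal A(B_R)\,\mathcal B(\R^2\setminus B_{R-1})}{R},
\]
which tends to $0$ exactly by the compatibility condition \eqref{eq:AB_compatibility}. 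Its square root therefore also vanishes, so the $\limsup$ is $0$.

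The main obstacle is this last balancing step: the Lipschitz constant of the core grows with $\mathcal A(B_R)$, and the interpolation inequality pays $L^{1/2}$ times the square root of the $L^2$ error, which includes the truncation error. One must keep track that the cross term is precisely $\mathcal A(B_R)\tau_R$, so that compatibility (rather than a separate rate for $\epsilon_n$) closes the argument. One should also take care to pass to $\limsup_n$ before $R\to\infty$, so that the $\epsilon_n$ contribution, which multiplies $\mathcal A(B_R)$ for each fixed $R$, disappears first.
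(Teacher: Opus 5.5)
Your proposal is correct and follows the paper's proof essentially step for step: the same three-term truncation split via \Cref{lem:uniform_truncation}, the $2L_R=4\mathcal A(B_R)$ Lipschitz bound from \Cref{lem:core_lip}, the converse interpolation of \Cref{cor:Linfty_L2} applied after taking $\limsup_n$ at fixed $R$, and the compatibility condition to make the cross term $\mathcal A(B_R)\tau_R$ vanish. Your constants also agree with the paper's bound $2\tau_R+\max\{8\sqrt{\mathcal A(B_R)\tau_R},\,8\tau_R\}$.
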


\begin{proof}
For \(R\ge 1\), define the bounded core
\[
K_R:=\{p\in X:\ \|(1,p)\|_2\le R\},
\]
the truncated measures
\[
\nu_n:=\mu_n|_{K_R},
\qquad
\nu:=\mu|_{K_R},
\]
and the truncation defect
\[
\tau_R:=\frac{\sqrt2}{R}\,\mathcal B(\R^2\setminus B_{R-1}).
\]
By \Cref{lem:uniform_truncation}, for every \(\eta\in\mathcal M(\mathcal A,\mathcal B)\),
\begin{equation}\label{eq:L2Linfty_trunc_unif}
\|S(\eta)-S(\eta|_{K_R})\|_\infty\le \tau_R.
\end{equation}
Since \(\mathcal B\) is finite, \(\tau_R\to 0\) as \(R\to\infty\).

Fix \(R\ge 1\). By \Cref{lem:core_lip}, each of \(S(\nu_n)\) and \(S(\nu)\) is \(L_R\)-Lipschitz on \(\Ss^2\), where
\[
L_R:=2\,\mathcal A(B_R).
\]
Hence the differences
\[
h_n:=S(\nu_n)-S(\nu)
\]
are \(2L_R\)-Lipschitz on \(\Ss^2\).

We now estimate \(\|h_n\|_{L^2(\Ss^2)}\) for every fixed \(R\).
Indeed, by the triangle inequality,
\[
\|h_n\|_{L^2(\Ss^2)}
\le
\|S(\nu_n)-S(\mu_n)\|_{L^2(\Ss^2)}
+
\|S(\mu_n)-S(\mu)\|_{L^2(\Ss^2)}
+
\|S(\mu)-S(\nu)\|_{L^2(\Ss^2)}.
\]
Applying the first part of \Cref{cor:Linfty_L2} to the first and third terms, and using
\eqref{eq:L2Linfty_trunc_unif}, we obtain
\[
\|S(\nu_n)-S(\mu_n)\|_{L^2(\Ss^2)}
\le
(4\pi)^{1/2}\tau_R,
\qquad
\|S(\mu)-S(\nu)\|_{L^2(\Ss^2)}
\le
(4\pi)^{1/2}\tau_R.
\]
Therefore, using the assumed $L^2$ convergence,
\[
\limsup_{n\to\infty}\|h_n\|_{L^2(\Ss^2)}
\le
2(4\pi)^{1/2}\tau_R.
\]

Now apply the converse part of \Cref{cor:Linfty_L2} to the \(2L_R\)-Lipschitz functions \(h_n\):
\[
\|h_n\|_\infty
\le
2\,\max\Big\{
\pi^{-1/4}(2L_R)^{1/2}\|h_n\|_{L^2(\Ss^2)}^{1/2},
\ \pi^{-1/2}\|h_n\|_{L^2(\Ss^2)}
\Big\}.
\]
Passing to the limsup gives
\begin{align}
\limsup_{n\to\infty}\|h_n\|_\infty
&\le
2\,\max\Big\{
\pi^{-1/4}(2L_R)^{1/2}\bigl(2(4\pi)^{1/2}\tau_R\bigr)^{1/2},
\ \pi^{-1/2}2(4\pi)^{1/2}\tau_R
\Big\} \notag\\
&=
\max\Big\{
4\sqrt{2L_R\tau_R},
\ 8\tau_R
\Big\}. \label{eq:L2Linfty_core_limsup}
\end{align}

On the other hand, by the triangle inequality and \eqref{eq:L2Linfty_trunc_unif},
\[
\|S(\mu_n)-S(\mu)\|_\infty
\le
\|S(\mu_n)-S(\nu_n)\|_\infty
+
\|h_n\|_\infty
+
\|S(\nu)-S(\mu)\|_\infty
\le
2\tau_R+\|h_n\|_\infty.
\]
Hence, using \eqref{eq:L2Linfty_core_limsup},
\begin{equation}\label{eq:L2Linfty_final_limsup_rewrite}
\limsup_{n\to\infty}\|S(\mu_n)-S(\mu)\|_\infty
\le
2\tau_R+\max\Big\{
4\sqrt{2L_R\tau_R},
\ 8\tau_R
\Big\}.
\end{equation}

It remains to let \(R\to\infty\). Since \(\tau_R\to0\), the pure \(\tau_R\)-terms vanish.
Moreover,
\[
L_R\tau_R
=
2\,\mathcal A(B_R)\cdot \frac{\sqrt2}{R}\,\mathcal B(\R^2\setminus B_{R-1}),
\]
which tends to \(0\) by the compatibility condition \eqref{eq:AB_compatibility}. Therefore the right-hand side of
\eqref{eq:L2Linfty_final_limsup_rewrite} tends to \(0\), and thus
\[
\limsup_{n\to\infty}\|S(\mu_n)-S(\mu)\|_\infty=0.
\]
Hence
\[
\|S(\mu_n)-S(\mu)\|_\infty\to0.
\]
\end{proof}

\Cref{thm:L2_to_Linfty_MAB} ensures that, on the class $\mathcal M(\mathcal A,\mathcal B)$,
working in the Hilbert space $L^2(\Ss^2)$ is consistent with the uniform--norm theory developed for persistence spheres:
$L^2$ control of sphere errors automatically yields uniform control.

In particular, whenever the reference measure $\mu$ is compactly supported in $X$, \Cref{thm:Sunif_to_POT} can now be invoked from an $L^2$ premise.

\begin{cor}[From $L^2$ sphere convergence to $\POT_1$ convergence]\label{cor:L2_to_POT}
Let $\mu\in\mathcal M(\mathcal A,\mathcal B)$ have compact support in $X$, and let $\mu_n\in\mathcal M(\mathcal A,\mathcal B)$.
If $\|S(\mu_n)-S(\mu)\|_{L^2(\Ss^2)}\to 0$, then $\POT_1(\mu_n,\mu)\to 0$.
\end{cor}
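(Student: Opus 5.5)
This corollary is a direct composition of two results already established, so the plan is a two-step chaining argument with no new estimates. First, since both $\mu_n$ and $\mu$ lie in $\mathcal M(\mathcal A,\mathcal B)$ and $\|S(\mu_n)-S(\mu)\|_{L^2(\Ss^2)}\to0$, I would invoke \Cref{thm:L2_to_Linfty_MAB} to upgrade this to uniform convergence $\|S(\mu_n)-S(\mu)\|_\infty\to0$. Note that \Cref{thm:L2_to_Linfty_MAB} requires no compactness of $\supp\mu$. It uses only the membership in the growth-controlled class: the truncation bound of \Cref{lem:uniform_truncation} and the core Lipschitz bound of \Cref{lem:core_lip}, balanced through the compatibility condition \eqref{eq:AB_compatibility}.

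Second, with uniform convergence in hand and $\mu$ compactly supported in $X$, I would apply \Cref{thm:Sunif_to_POT} to conclude $\POT_1(\mu_n,\mu)\to0$. That theorem places no restriction on the approximating sequence beyond $\mu_n\in\mathcal M$, and this holds since $\mathcal M(\mathcal A,\mathcal B)\subset\mathcal M$ by definition (its elements are integrable measures on $X$).

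There is essentially no obstacle. The only point worth checking is that the hypotheses match exactly:
\begin{itemize}
\item the $L^2$ premise is stated for the same pair $(\mu_n,\mu)$ used in \Cref{thm:L2_to_Linfty_MAB};
\item the compact-support assumption is needed only in the second step, where it is available;
\item the trivial case $\mu=0$ is already covered inside \Cref{thm:Sunif_to_POT}.
\end{itemize}
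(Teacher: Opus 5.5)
Your proposal is correct and is exactly the paper's argument: the corollary is obtained by first upgrading $L^2$ convergence to uniform convergence via \Cref{thm:L2_to_Linfty_MAB} (using only membership in $\mathcal M(\mathcal A,\mathcal B)$), and then applying \Cref{thm:Sunif_to_POT} at the compactly supported target $\mu$. Your hypothesis checks, including $\mathcal M(\mathcal A,\mathcal B)\subset\mathcal M$, are the right ones.
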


\begin{defi}\label{def:McAB}
Let $\mathcal A,\mathcal B$ be as in \Cref{def:MA_B}. We define
\[
\mathcal M_c(\mathcal A,\mathcal B)\subset \mathcal M(\mathcal A,\mathcal B),
\]
as the set of measures in $\mathcal M(\mathcal A,\mathcal B)$ whose support is compact in $X$.
\end{defi}

\begin{corollary}[Bi-Continuous Embedding]\label{cor:McAB_bicont_L2}
Fix $\mathcal A,\mathcal B$ as in \Cref{def:MA_B}. The persistence-sphere map
\[
S:\big(\mathcal M_c(\mathcal A,\mathcal B),\ \POT_1\big)\ \longrightarrow\ \big(L^2(\Ss^2),\ \|\cdot\|_{L^2}\big),
\qquad
\mu\longmapsto S(\mu),
\]
is injective and bi-continuous onto its image. More precisely:
\begin{enumerate}
\item for all $\mu,\nu\in\mathcal M_c(\mathcal A,\mathcal B)$,
\[
\|S(\mu)-S(\nu)\|_{L^2(\Ss^2)}
\le (4\pi)^{1/2}\|S(\mu)-S(\nu)\|_\infty
\le 4\sqrt{2\pi}\ \POT_1(\mu,\nu).
\]
\item if $\mu_n,\mu\in\mathcal M_c(\mathcal A,\mathcal B)$ and
$\|S(\mu_n)-S(\mu)\|_{L^2(\Ss^2)}\to0$, then $\POT_1(\mu_n,\mu)\to0$.
\end{enumerate}
\end{corollary}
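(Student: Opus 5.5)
The corollary follows by assembling results already proved, so the plan is to check that each hypothesis is met. Injectivity of $S$ on $\mathcal M_c(\mathcal A,\mathcal B)\subset\mathcal M$ is \Cref{prop:PS_injective}. A map that is injective, continuous, and has sequentially continuous inverse on its image is bi-continuous onto its image. All spaces involved are metric, so sequential continuity is enough. It therefore suffices to prove items (1) and (2).

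For item (1), I will chain two earlier estimates. The first part of \Cref{cor:Linfty_L2} gives
\[
\|S(\mu)-S(\nu)\|_{L^2(\Ss^2)}\le(4\pi)^{1/2}\|S(\mu)-S(\nu)\|_\infty .
\]
This applies because persistence spheres are continuous on $\Ss^2$: the integrand $\relu(\langle v,(1,u)\rangle)$ is continuous in $v$ and dominated by $\|(1,u)\|_2$, which is $|\mu^{\aug}|$-integrable by \Cref{prop:SLZT_well_defined}, so dominated convergence applies. Then \Cref{prop:S_Linfty_POT} bounds the sup norm by $2\sqrt2\,\POT_1(\mu,\nu)$. Since $(4\pi)^{1/2}\cdot 2\sqrt2=4\sqrt{2\pi}$, this gives the stated constant and forward (Lipschitz) continuity.

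For item (2), let $\mu_n,\mu\in\mathcal M_c(\mathcal A,\mathcal B)$ with $L^2$ convergence of the spheres. Since every element of $\mathcal M_c(\mathcal A,\mathcal B)$ lies in $\mathcal M(\mathcal A,\mathcal B)$, \Cref{thm:L2_to_Linfty_MAB} upgrades the $L^2$ convergence to $\|S(\mu_n)-S(\mu)\|_\infty\to0$. The target $\mu$ has compact support in $X$, so \Cref{thm:Sunif_to_POT} yields $\POT_1(\mu_n,\mu)\to0$. Equivalently, this is \Cref{cor:L2_to_POT} applied directly. The sequence $\mu_n$ only needs to lie in $\mathcal M(\mathcal A,\mathcal B)$, so compact support of the $\mu_n$ is not even used.

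There is no real obstacle here. The only points needing care are the continuity of $S(\mu)$ on $\Ss^2$, which the $L^2$ comparison needs, and the observation that \Cref{thm:L2_to_Linfty_MAB} requires both measures to lie in the same compatible class $\mathcal M(\mathcal A,\mathcal B)$, which the hypothesis guarantees.
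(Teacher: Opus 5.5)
Your proposal is correct and follows the paper's intended assembly: injectivity from \Cref{prop:PS_injective}, item (1) by chaining the first part of \Cref{cor:Linfty_L2} with \Cref{prop:S_Linfty_POT}, and item (2) as \Cref{cor:L2_to_POT}, i.e.\ \Cref{thm:L2_to_Linfty_MAB} followed by \Cref{thm:Sunif_to_POT}. The paper leaves two points implicit that you spell out correctly: the dominated-convergence check that $S(\mu)\in C(\Ss^2)$, and the fact that the compact support of the $\mu_n$ is never used.
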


\begin{rmk}[Positioning of persistence spheres]\label{rmk:positioning}
The results above describe how
convergence of the representation determines $\POT_1$ convergence at every
compactly supported target and viceversa: this holds in the uniform norm by
\Cref{thm:Sunif_to_POT}, and in the $L^2(\Ss^2)$ norm on
$\mathcal M_c(\mathcal A,\mathcal B)$ by \Cref{cor:McAB_bicont_L2}.
In particular, the class \(\mathcal M_c(\mathcal A,\mathcal B)\) from \Cref{def:McAB}, in contrast to most Hilbert-embedding results
\citep{carriere2017sliced,mitra2024geometric,bate2024bi}, imposes no bound on total mass. Hence, for
counting measures, the number of points may diverge, provided the additional mass concentrates near
the diagonal, allowing unbounded cardinality and noisy settings. To refine the comparison, further details on the role of diagonal augmentation can be found in Appendix~\ref{sec:augment_discussion}, while
Appendix~\ref{sec:deforming_geometry} gives concrete examples of the different
geometric biases introduced by commonly used vectorizations.
\end{rmk}

\section{Unsupervised Simulations}
\label{sec:unsupervised}

In this section we investigate how the qualitative geometric mechanisms discussed in
\Cref{sec:deforming_geometry} manifest themselves in practice. We focus on unsupervised
settings, where the geometry induced by the chosen summary has a more direct impact on the
analysis than in supervised tasks, where part of the bias may be absorbed by the downstream
learner.

A second point concerns optional preprocessing. Although persistence spheres do not require
any parameter to be well defined or stable, one may still reweight or smooth a diagram as a
separate preprocessing step, for instance to mitigate noise. More generally, this possibility is
available for all measure-based representations, and may be useful in unsupervised settings
where there is no predictive model to automatically downweight irrelevant topological
information. For this reason, besides comparing the summaries themselves, we also examine
how optional reweighting choices affect the resulting unsupervised behavior.

\subsubsection{FDA}
\label{sec:fda_benchmark}

We first consider an unsupervised simulation based on a standard functional data analysis
(FDA) generative model \citep{book_fda}, adapted from \citet{pegoraro2025persistence}. The
purpose is to construct a regime in which a strong bias toward high-persistence features is
actually advantageous: the class signal is primarily carried by the largest oscillations of the
underlying smooth functions, while noise mostly produces smaller-scale topological clutter.
Representative samples from the two FDA setups are shown in \Cref{fig:FDA}. 

\paragraph{Generative model.}
We first construct two smooth random functions \(f_1,f_2:[0,1]\to\R\) by cubic-spline
interpolation of random values on a regular grid. Let \(0=t_1<\dots<t_m=1\) be a uniform grid
with \(m=40\) points. For each class \(c\in\{1,2\}\), we sample independently
\[
u_j^{(c)} \sim \mathrm{Unif}([-50,50]), \qquad j=1,\dots,m,
\]
and define \(f_c\) as the cubic spline interpolant of
\(\{(t_j,u_j^{(c)})\}_{j=1}^m\).

Given a noise level \(\sigma\in\{10,15,20\}\), we generate \(N=50\) noisy realizations from each
class as follows. For each replicate \(i=1,\dots,N\) and class \(c\in\{1,2\}\), we first sample an
integer
\[
n_i^{(c)} \sim \mathrm{Unif}\{200,\dots,800\},
\]
then sample locations
\[
X_{i,1}^{(c)},\dots,X_{i,n_i^{(c)}}^{(c)}
\stackrel{\mathrm{i.i.d.}}{\sim}
\mathrm{Unif}([0,1]),
\]
and finally observe noisy values
\[
Y_{i,\ell}^{(c)}
=
f_c\!\left(X_{i,\ell}^{(c)}\right)
+
Z_{i,\ell}^{(c)},
\qquad
Z_{i,\ell}^{(c)}\stackrel{\mathrm{i.i.d.}}{\sim}\mathcal N(0,(\sigma/3)^2),
\qquad
\ell=1,\dots,n_i^{(c)}.
\]
Thus the parameter \(\sigma\) corresponds to a ``\(3\sigma\)'' noise amplitude. Each noisy curve
is encoded as a \(0\)-dimensional persistence diagram, computed from the sublevel-set filtration
of the piecewise-linear interpolation of the sampled points
\(\{(X_{i,\ell}^{(c)},Y_{i,\ell}^{(c)})\}_{\ell=1}^{n_i^{(c)}}\).

The randomization of the sample size \(n_i^{(c)}\) in the interval \([200,800]\) is deliberate.
Larger values of \(n_i^{(c)}\) create more spurious local oscillations, so the amount of
noise-induced topological clutter becomes highly unbalanced across statistical units. This is
particularly unfavorable to \(\POT_1\) and to summaries that remain sensitive to low-persistence
structure. Conversely, summaries with a strong bias toward high persistence may benefit precisely
because they suppress this imbalance. Typical realizations are displayed in \Cref{fig:raw}.

For each method, we compute the pairwise distance matrix between diagrams using \(\POT_1\),
\(\mathrm{SW}\), and the distances induced by persistence spheres (PSphs), persistence landscapes
(PLs), persistence images (PIs), and persistence splines (PSpls). We then apply hierarchical
clustering with average linkage, cut the dendrogram into two clusters, and evaluate the resulting
partition via the Rand index. All reported values are means and standard deviations over \(100\)
independent runs.

\paragraph{Four scenarios.}
We consider four FDA simulations.

\smallskip
\noindent
\textbf{(i) Baseline FDA simulation.}
We use the generative model above with variable sample size
\(n_i^{(c)}\sim \mathrm{Unif}\{200,\dots,800\}\) and compare the methods in their standard
configurations. For persistence images we use persistence as weight; persistence spheres are used
without reweighting. This setup is designed to favor summaries that emphasize high-persistence
features.

\smallskip
\noindent
\textbf{(ii) FDA simulation with persistence-squared reweighting.}
We keep exactly the same generative model, but reweight the diagram measures by
\(\pers(\cdot)^2\) before applying persistence images and persistence spheres. This tests
directly whether an even stronger emphasis on high persistence improves performance. If so, it
supports the interpretation that the strong results of persistence landscapes in (i) are driven
by their bias toward high-persistence variability.

\smallskip
\noindent
\textbf{(iii) FDA simulation with step reweighting.}
Again we keep the same generative model, but now use the simple denoising-type weight
\[
w(x,y)=\frac{2}{\pi}\arctan\!\left(\frac{y-x}{\sigma}\right),
\]
which vanishes on the diagonal and rapidly saturates to \(1\) away from it. This is meant to
mimic a basic ``remove small lifetimes'' strategy. Here we deliberately use the true noise scale
\(\sigma\) from the generative model, so as not to confound the comparison with the additional
problem of estimating a threshold parameter, which is not the point of the present simulation.
Comparing (ii) and (iii) helps separate generic near-diagonal denoising from a genuine
preference for high-persistence features.

\smallskip
\noindent
\textbf{(iv) FDA simulation against high-persistence bias.}
We finally modify the generative model so as to turn the landscape bias against itself. For both
classes, when sampling the smooth function, we fix the first four control-point values to
\[
(-100,100,-100,100),
\]
while the remaining control-point values are still sampled uniformly from \([-50,50]\). Hence
both classes share two very large oscillations, so the variability introduced by those
high-persistence features is no longer discriminative. Moreover, we fix \(n=200\) for every
statistical unit, thereby removing the noise-imbalance confounding factor present in
(i)--(iii). This simulation is designed to show that, once very persistent features cease to
carry class information, a strong bias toward them can become detrimental. Typical realizations
for this modified setup are shown in \Cref{fig:confounding}. 

\paragraph{Parameter choices.}
We use \(\POT_1\) without parameters and approximate sliced Wasserstein using \(50\) random
lines, as in the default \texttt{GUDHI} implementation. Persistence spheres are evaluated on a
polar grid on \(\mathbb S^2\) with \(100\times 200\) samples (latitude \(\times\) longitude). For
persistence landscapes, we retain all landscape functions and compute distances exactly in
\(L^2\). For persistence splines, we use a \(20\times 20\) grid and \(100\) iterations with
stopping tolerance \(10^{-10}\), in line with \citet{dong2024persistence}. For persistence
images, we set the pixel size to \(1/500\) of the minimum side length of the smallest
axis-aligned rectangle supporting the diagrams, rounded to the closest power of \(10\), and take
the Gaussian bandwidth equal to \(10\) times the pixel size. In scenarios (ii) and (iii), the
stated reweightings are applied only to persistence images and persistence spheres.

\paragraph{Results.}
The results for scenarios (i)--(iii), which share the same generative process, are reported in
\Cref{tab:fda_unsup_main}; scenario (iv) is reported separately in
\Cref{tab:fda_unsup_counterbias}.

In the baseline simulation (i), persistence landscapes clearly dominate at all noise levels,
with mean Rand index \(0.984\), \(0.949\), and \(0.846\) for \(\sigma=10,15,20\),
respectively. This strongly suggests that their high-persistence bias is favorable in this
regime, where the class signal is carried mainly by the largest oscillations while variable
sample size introduces abundant, highly unbalanced low-persistence clutter. Persistence
splines also perform well at low noise, while \(\POT_1\), SW, PSph, and PI are substantially
less competitive.

Scenario (ii) confirms this interpretation. Reweighting by \(\pers^2\) markedly improves both
PI and PSph, especially PI. Thus the success of PL in scenario (i) cannot be explained only by
generic robustness to near-diagonal noise: in this FDA regime, a strong emphasis on high
persistence is itself beneficial.

Scenario (iii) separates this effect from simple thresholding. The step weight suppresses small lifetimes, but performs clearly worse than
\(\pers^2\)-reweighting, especially for moderate and large noise. This shows that removing
low-persistence points is not enough: here it is specifically advantageous to bias the geometry
toward the largest topological features.  

Scenario (iv) reverses the picture. Once the two classes share the largest oscillations and the
sample size is fixed at \(n=200\), the strong high-persistence bias of PL becomes detrimental,
and PL drops to essentially random performance. In contrast, \(\POT_1\) becomes the best method
overall, with mean Rand index \(0.975\), \(0.907\), and \(0.888\) for
\(\sigma=10,15,20\). PI, still using persistence as weight, remains strong, plausibly because
its bias toward high persistence is milder than that of PL and can still help linearize
\(\POT_1\) while damping part of the lower-persistence noise. PSph also improves relative to
scenario (i), consistently with the removal of the sample-size imbalance effect. 

Overall, these simulations show that the geometric bias induced by a topological summary can be
highly consequential in unsupervised settings. This makes it important not only to compare
empirical performance, but also to understand which persistence scales a given summary tends to
emphasize or suppress.

\begin{table}[t]
\centering
\renewcommand{\arraystretch}{1.15}
\setlength{\tabcolsep}{4.2pt}
\caption{\textbf{FDA unsupervised simulations (i)--(iii).} Mean $\pm$ standard deviation of
Rand index over $100$ independent runs. Since only persistence images (PI) and persistence
spheres (PSph) change across scenarios (i)--(iii), the values of POT$_1$, SW, PL, and PSpl are
reported only once for each noise level.}
\label{tab:fda_unsup_main}
\resizebox{\linewidth}{!}{%
\begin{tabular}{|c|c|c|c|c|c|c|c|c|c|c|}
\hline
& \multicolumn{4}{c|}{Shared across (i)--(iii)} & \multicolumn{2}{c|}{(i) Baseline} & \multicolumn{2}{c|}{(ii) $\pers^2$} & \multicolumn{2}{c|}{(iii) Step} \\
\hline
$\sigma$ & POT$_1$ & SW & PL & PSpl & PI & PSph & PI & PSph & PI & PSph \\
\hline
$10$
& $0.760 \pm 0.228$
& $0.527 \pm 0.044$
& $\mathbf{0.984 \pm 0.086}$
& $0.964 \pm 0.111$
& $0.783 \pm 0.238$
& $0.530 \pm 0.076$
& $0.958 \pm 0.129$
& $0.762 \pm 0.206$
& $0.933 \pm 0.163$
& $0.672 \pm 0.200$
\\
$15$
& $0.631 \pm 0.152$
& $0.522 \pm 0.039$
& $\mathbf{0.949 \pm 0.140}$
& $0.771 \pm 0.238$
& $0.581 \pm 0.165$
& $0.512 \pm 0.045$
& $0.867 \pm 0.202$
& $0.700 \pm 0.188$
& $0.554 \pm 0.122$
& $0.565 \pm 0.130$
\\
$20$
& $0.547 \pm 0.068$
& $0.526 \pm 0.048$
& $\mathbf{0.846 \pm 0.218}$
& $0.641 \pm 0.215$
& $0.527 \pm 0.053$
& $0.502 \pm 0.010$
& $0.812 \pm 0.214$
& $0.642 \pm 0.163$
& $0.523 \pm 0.050$
& $0.513 \pm 0.027$
\\
\hline
\end{tabular}%
}
\end{table}

\begin{table}[t]
\centering
\renewcommand{\arraystretch}{1.15}
\setlength{\tabcolsep}{4.5pt}
\caption{\textbf{FDA unsupervised simulation (iv).}
Mean $\pm$ standard deviation of Rand index over $100$ independent runs for scenario (iv), where
the two classes share the largest oscillations and the sample size is fixed to $n=200$.}
\label{tab:fda_unsup_counterbias}
\resizebox{\linewidth}{!}{%
\begin{tabular}{|l|c|c|c|c|c|c|c|}
\hline
$\sigma$ & POT$_1$ & SW & PSph & PL & PI & PSpl \\
\hline
$10$ & $\mathbf{0.975 \pm 0.107}$ & $0.616 \pm 0.172$ & $0.641 \pm 0.186$ & $0.515 \pm 0.071$ & $0.795 \pm 0.240$ & $0.496 \pm 0.001$ \\
$15$ & $\mathbf{0.907 \pm 0.191}$ & $0.625 \pm 0.181$ & $0.653 \pm 0.195$ & $0.518 \pm 0.081$ & $0.753 \pm 0.243$ & $0.496 \pm 0.003$ \\
$20$ & $\mathbf{0.888 \pm 0.205}$ & $0.650 \pm 0.191$ & $0.645 \pm 0.190$ & $0.511 \pm 0.054$ & $0.704 \pm 0.240$ & $0.496 \pm 0.002$ \\
\hline
\end{tabular}%
}
\end{table}

\begin{figure}[t]
\centering
	\begin{subfigure}[c]{0.45\textwidth}
		\centering
		\includegraphics[width=\textwidth]{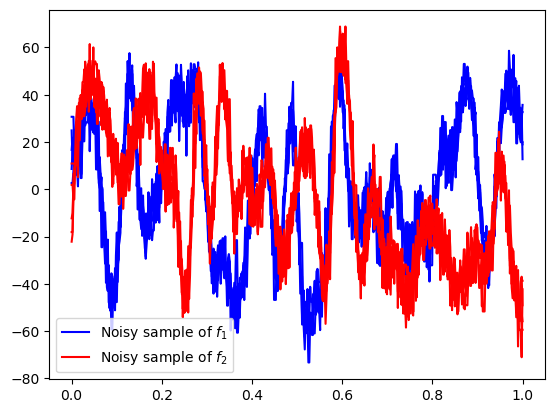}
		\captionsetup{singlelinecheck=off, margin={0.05cm, 0.05cm}}
		\caption{Noisy functions sampled from the model used in scenarios (i)--(iii).}
		\label{fig:raw}
	\end{subfigure}\hfill
	\begin{subfigure}[c]{0.45\textwidth}
		\centering
		\includegraphics[width=\textwidth]{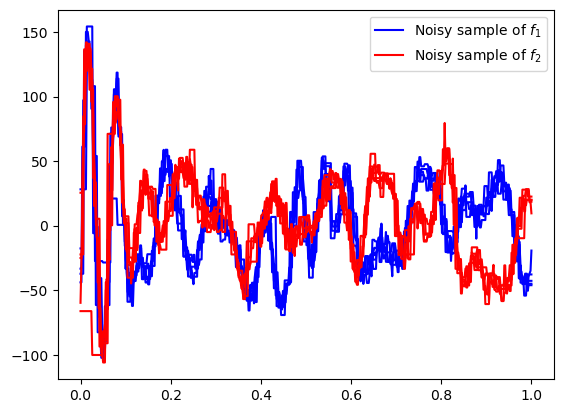}
		\captionsetup{singlelinecheck=off, margin={0.05cm, 0.05cm}}
		\caption{Noisy functions sampled from the model used in scenario (iv), with the first two large oscillations shared across the classes.}
		\label{fig:confounding}
	\end{subfigure}
    \caption{\textbf{FDA simulations.}
    Left: realizations from the baseline generative model used in scenarios (i)--(iii), where the
    discriminative signal is more likely to be carried by large oscillations while noise induces abundant
    low-persistence clutter. Right: realizations from scenario (iv), where the largest
    oscillations are shared across the two classes, so that an excessive bias toward
    high-persistence features becomes detrimental.}
    \label{fig:FDA}
\end{figure}

\subsubsection{Point Processes}\label{sec:pp_benchmark}

We next turn to a benchmark in which the relevant signal is expected to be much less concentrated
at the highest persistence scales. To evaluate whether topological summaries extracted from
persistence diagrams can detect differences in spatial interaction structure, we consider four
families of planar point processes exhibiting distinct behaviors (complete spatial randomness,
mild clustering, inhibition, and regularity) and sample point clouds on a square observation
window
\[
\mathcal W=[0,L]^2 \subset \R^2.
\]
For each family we generate \(30\) independent point clouds and aim to recover the generating
process via unsupervised clustering of their topological summaries.

Unlike the FDA simulations above, this setup is not designed to reward a strong bias toward the
most persistent topological features. For spatial point processes at comparable density, much of
the relevant information is carried by low- and medium-persistence structure, since local
interaction patterns first affect short- and intermediate-scale connectivity and loop formation.
Accordingly, summaries with too strong a high-persistence bias may become disadvantageous. This
motivates, in particular, the persistence-image choices used below: we employ weights that retain
sensitivity to lower-persistence features rather than amplifying only the largest lifetimes. 

Throughout, we construct for each model a point cloud of common target cardinality \(n\) in
\(\mathcal W\), which mitigates the confounding effect of varying sample size in persistence and
focuses the comparison on interaction structure. We then compute Vietoris--Rips persistence
diagrams in homological degrees \(0\) and \(1\), compute pairwise distances using the same family
of methods as above, and cluster the resulting samples using hierarchical clustering with average
linkage. Finally, we cut the dendrogram into four clusters and compare the obtained partition to
the ground-truth family labels. Background on spatial point processes may be found in
\citet{diggle2013spatiotemporal,illian2008spatialpp,mollerwaagepetersen2004spatialpp}.  

\paragraph{Point-process families}\label{subsec:pp_families}

\begin{itemize}
    \item[(i)] \textbf{Homogeneous Poisson (CSR).}
    As a baseline we use complete spatial randomness (CSR): conditional on a fixed sample size \(n\),
    this corresponds to i.i.d.\ points uniformly distributed on \(\mathcal W\). This model has no
    inter-point interaction and is a canonical null in spatial statistics
    \citep{diggle2013spatiotemporal,illian2008spatialpp}.

    \item[(ii)] \textbf{Thomas (Neyman--Scott) cluster process.}
    To model mild aggregation we use a Thomas process, a classical Neyman--Scott cluster model
    \citep{thomas1949ecology,mollerwaagepetersen2004spatialpp}. Parents are sampled uniformly in
    \(\mathcal W\); each parent generates a Poisson number of offspring; offspring are displaced
    from their parent by an isotropic Gaussian perturbation. The resulting pattern exhibits
    clustering at short-to-intermediate scales while remaining approximately Poisson at larger
    scales.

    \item[(iii)] \textbf{Mat\'ern type-II hard-core process.}
    To represent inhibition, we use a Mat\'ern type-II hard-core process obtained by thinning of a
    Poisson candidate set in \(\mathcal W\), equipped with i.i.d.\ marks
    \citep{matern1986spatialvariation,mollerwaagepetersen2004spatialpp}.
    Here a mark is an auxiliary random variable attached to each candidate point (in our case
    \(U\sim\mathrm{Unif}(0,1)\)), independent across points and independent of the candidate
    locations. The parameter \(r_{\mathrm{hc}}>0\) is the hard-core distance: it specifies an
    exclusion radius enforcing short-range repulsion. Concretely, given a candidate point \(y\) with
    mark \(U_y\), we retain \(y\) if its mark is the smallest among all candidates within distance
    \(r_{\mathrm{hc}}\), i.e.,
    \[
    U_y \;=\; \min\{U_z:\ z\in Y\cap B(y,r_{\mathrm{hc}})\}.
    \]
    This rule ensures that among any pair of candidates separated by less than
    \(r_{\mathrm{hc}}\), at most one can survive.

    \item[(iv)] \textbf{Jittered lattice.}
    Finally, we consider a jittered lattice on \(\mathcal W=[0,L]^2\). We partition \(\mathcal W\)
    into a regular \(k\times k\) grid of congruent square cells of side length \(L/k\) (with
    \(k^2\ge n\)), select \(n\) cells uniformly without replacement, and draw one point from each
    selected cell by applying a bounded random displacement (``jitter'') within the cell. This
    provides a controlled regularity baseline: the ``one point per cell'' constraint suppresses
    extremely small inter-point distances relative to CSR, mimicking a mild repulsive effect while
    being governed by a single interpretable jitter parameter.
\end{itemize}

\paragraph{Sampling schemes.}
For CSR and the jittered lattice design the construction yields \(|X|=n\) by definition. For the
Thomas and Mat\'ern type-II models, the underlying point process has random cardinality; we
therefore apply a simple conditioning procedure to obtain \(n\) points in \(\mathcal W\).

\begin{itemize}
\item[(i)] \textbf{CSR.}
Sample \(x_1,\dots,x_n \stackrel{\text{i.i.d.}}{\sim} \mathrm{Unif}(\mathcal W)\).

\item[(ii)] \textbf{Thomas cluster model (parameters: \(n_{\mathrm{par}},\lambda_{\mathrm{off}},\sigma\)).}
Sample \(n_{\mathrm{par}}\) parent locations
\(p_1,\dots,p_{n_{\mathrm{par}}}\stackrel{\text{i.i.d.}}{\sim}\mathrm{Unif}(\mathcal W)\).
Initialize an empty offspring set \(X=\emptyset\). While \(|X|<n\), choose an index
\(J\sim\mathrm{Unif}\{1,\dots,n_{\mathrm{par}}\}\), draw
\(K\sim\mathrm{Poisson}(\lambda_{\mathrm{off}})\), and generate candidate offspring
\[
p_J+\xi_\ell,\qquad \ell=1,\dots,K,\qquad \xi_\ell\stackrel{\text{i.i.d.}}{\sim}\mathcal N(0,\sigma^2 I_2).
\]
Candidates falling outside \(\mathcal W\) are discarded, and all remaining points are appended to
\(X\). If the same parent index \(J\) is selected multiple times, multiple independent offspring
batches are generated around \(p_J\) and accumulated in \(X\). Finally, if \(|X|>n\), we subsample
uniformly without replacement to obtain exactly \(n\) points.

\item[(iii)] \textbf{Mat\'ern type-II hard-core model (parameters: \(m,r_{\mathrm{hc}}\)).}
Generate a Poisson candidate set \(Y=\{y_1,\dots,y_M\}\subset\mathcal W\) with
\(M\sim\mathrm{Poisson}(m)\), and attach i.i.d.\ marks
\(U_{y_i}\stackrel{\text{i.i.d.}}{\sim}\mathrm{Unif}(0,1)\) independent of \(Y\).
Given a hard-core distance \(r_{\mathrm{hc}}>0\), retain a candidate \(y\in Y\) if it has the
smallest mark within its \(r_{\mathrm{hc}}\)-neighborhood, i.e.\
\(U_y=\min\{U_z:\ z\in Y\cap B(y,r_{\mathrm{hc}})\}\).
Denote by \(Y_{\mathrm{hc}}\) the retained set. If \(|Y_{\mathrm{hc}}|\ge n\) we select \(n\) points
uniformly without replacement from \(Y_{\mathrm{hc}}\). If \(|Y_{\mathrm{hc}}|< n\), we discard the
realization, update \(m\leftarrow 1.5\,m\), and resimulate until \(|Y_{\mathrm{hc}}|\ge n\).

\item[(iv)] \textbf{Jittered lattice (parameters: \(k,\rho_{\mathrm{jitt}}\)).}
Choose \(k=\lceil\sqrt{n}\rceil\) and partition \(\mathcal W\) into \(k^2\) congruent square cells
of side length \(L/k\). Select \(n\) distinct cells uniformly without replacement. From each
selected cell \(C\) draw one point by sampling a bounded random displacement around the cell
center \(c(C)\):
\[
x \;=\; c(C)+\delta,\qquad
\delta\sim \mathrm{Unif}\!\Big(\big[-\tfrac{\rho_{\mathrm{jitt}}}{2}\tfrac{L}{k},\,\tfrac{\rho_{\mathrm{jitt}}}{2}\tfrac{L}{k}\big]^2\Big),
\]
where \(\rho_{\mathrm{jitt}}\in[0,1]\) controls the jitter amplitude as a portion of the cell. Thus
\(\rho_{\mathrm{jitt}}=0\) yields deterministic cell centers, while \(\rho_{\mathrm{jitt}}=1\)
yields a uniform draw over the entire cell. By construction, this produces a point cloud with one
point per selected cell.
\end{itemize}

\paragraph{Parameter choices.}
Our goal is to construct a challenging clustering case study from degree-\(0\) and degree-\(1\)
persistence diagrams, in the sense that separation should not be dominated by extreme aggregation
or regularity. To make the interaction effects comparable across models, we scale all length
parameters using the typical spacing
\[
s \;:=\; \sqrt{\frac{|\mathcal W|}{n}} \;=\; \frac{L}{\sqrt n},
\]
which is the side length of a square of area \(|\mathcal W|/n\).
In our experiments we fix \(n=200\) and \(L=1000\). These values strike a practical balance
between (i) enough geometric complexity to produce informative topological features,
(ii) feasible computational cost for Vietoris--Rips persistence, and (iii) numerical stability, by
avoiding extremely small coordinate differences that may amplify floating-point effects.

\begin{itemize}
\item[(i)] \textbf{CSR.}
No additional parameters are required beyond the target cardinality \(n\) and the window size \(L\).

\item[(ii)] \textbf{Thomas cluster model.}
We fix the Gaussian cluster spread to
\[
\sigma \;=\; 0.9\,s,
\]
so that offspring clouds overlap at the spacing scale and clustering is intentionally mild. This
reduces the prevalence of extremely short edges and very early merges in degree-\(0\) persistent
homology compared to strongly clustered regimes (\(\sigma\ll s\)), thereby making CSR versus
clustering less trivial and leaving more discriminative signal to higher-order effects, including
loop formation and filling summarized by degree-\(1\) persistent homology. For the remaining
Thomas parameters we set
\[
n_{\mathrm{par}} \;=\; \big\lceil \sqrt{n}\,\big\rceil,
\qquad
\lambda_{\mathrm{off}} \;=\; \frac{n}{n_{\mathrm{par}}},
\]
so that the expected number of generated offspring before boundary rejection is approximately
\(n\) and neither the ``few large clusters'' nor the ``many tiny clusters'' regime dominates.

\item[(iii)] \textbf{Mat\'ern type-II hard-core model.}
We choose the hard-core distance as a moderate fraction of the spacing,
\[
r_{\mathrm{hc}} \;=\; 0.5\,s,
\]
so that inhibition is primarily visible through a depletion of very small inter-point distances,
without producing near-lattice configurations. For the Poisson candidate mean we set
\[
m \;=\; 3n,
\]
which oversamples candidates to compensate for Mat\'ern-II thinning; if a realization yields fewer
than \(n\) retained points, we increase \(m\) geometrically (by a fixed multiplicative factor of
\(1.5\)) and resimulate until at least \(n\) points are obtained.

\item[(iv)] \textbf{Jittered lattice.}
We set
\[
k \;=\; \lceil \sqrt{n}\,\rceil,
\qquad
\rho_{\mathrm{jitt}} \;=\; 0.75,
\]
where \(\rho_{\mathrm{jitt}}\in[0,1]\) controls the jitter amplitude as a proportion of the cell.
The choice \(\rho_{\mathrm{jitt}}=0.75\) avoids both extremes: nearly deterministic cell centers
(\(\rho_{\mathrm{jitt}}\approx 0\)) and nearly uniform-in-cell sampling
(\(\rho_{\mathrm{jitt}}\approx 1\)). Consequently, the ``one point per selected cell'' constraint
still suppresses the smallest inter-point distances relative to CSR, but the random displacement
blurs the grid sufficiently that the resulting persistence summaries overlap substantially with
those of CSR and of mild hard-core processes.
\end{itemize}

Finally, when building the Vietoris--Rips filtration we cap the diameter parameter as \(7.0\,s\).
To motivate this choice, note that \(s^2\) is the area available per point in an \(n\)-point
configuration on \(\mathcal W\). Equivalently, \(s\) is the grid spacing of the regular
\(l\times l\) lattice with \(l^2=n\) points filling \(\mathcal W\). Thus \(s\) provides a
canonical unit for inter-point distances at the given density. In the Vietoris--Rips complex,
edges appear at scale \(r\) between pairs of points at Euclidean distance at most \(r\); taking
\(r\) to be a moderate multiple of \(s\) therefore probes neighborhoods containing a controlled
(density-dependent) number of points. In particular, for a homogeneous configuration of intensity
\(n/|\mathcal W|\), the expected number of neighbors of a typical point within radius \(r\) is
approximately
\[
\frac{n}{|\mathcal W|}\,\pi r^2 \;=\; \pi\Big(\frac{r}{s}\Big)^2,
\]
ignoring boundary effects. Hence setting \(r=7s\) yields an expected number of neighbors of about
\(\pi\cdot 7^2\approx 154\) in the underlying proximity graph, amounting to roughly \(75\%\) of
the sample for \(n=200\). 

\paragraph{Distances and topological-summary parameters.}
For \(\POT_1\) no parameter is required. For \(\mathrm{SW}\), we approximate the
metric using \(50\) random lines, which is the default choice in the \texttt{GUDHI}
implementation. PSphs are evaluated on a polar grid on \(\mathbb S^2\) with \(100\times 200\)
samples (latitude \(\times\) longitude). For PLs, we retain all landscape functions and compute
pairwise distances exactly using the \(L^2\)-norm. For PSpls, we use a \(20\times 20\) grid and
\(100\) iterations, with a stopping tolerance \(10^{-10}\) for the iterative procedure; these
values follow the supervised case studies reported in \citet{dong2024persistence}.

For PIs, we set the pixel size to \(s/100\) and the Gaussian kernel bandwidth to \(s/10\), where
\(s=L/\sqrt n\) is the typical spacing of the point clouds. In this PP scenario, where low- and
medium-persistence features are expected to be informative, we use the bounded weight
\[
w(x,y)=\frac{2}{\pi}\arctan(y-x).
\]
We also tested the linear weight \(w(x,y)=y-x\) and observed similar results. In addition, for
\(H_0\) diagrams only, when computing persistence images we add an artificial point on the
diagonal at coordinates \((s/10,s/10)\), so that the birth coordinates are not all identically
equal to \(0\); otherwise the resulting PI would degenerate to a one-dimensional object rather
than a genuine two-dimensional image. These PI parameters were chosen by testing a small set of
candidates on smaller datasets generated from the same models and then freezing the configuration
for the clustering experiments. 

\paragraph{Results.}
\Cref{tab:pp_unsup,tab:pp_corr} report, respectively, clustering performance on the PP benchmark
and the empirical agreement of each distance with \(\POT_1\).

In terms of Rand index (\Cref{tab:pp_unsup}), \(\POT_1\), SW, and PSph perform comparably in
\(H_0\), with means \(0.859\), \(0.859\), and \(0.858\), respectively. In \(H_0\), however, PI
is the clear winner, reaching a mean Rand index of \(0.950\), well above all other methods. In
\(H_1\), SW attains the best mean performance (\(0.873\)), with \(\POT_1\) very close behind
(\(0.872\)) and with overlapping confidence intervals; PSph remains very competitive (\(0.861\)),
whereas PL and PSpl are substantially worse in both degrees. 

As in the other benchmarks, there is no particular reason to expect SW to outperform
\(\POT_1\) systematically; any small advantage is plausibly due to the numerical approximation
with \(50\) random lines together with the downstream clustering pipeline. We also stress that,
unlike the linearizations below, SW remains a distance rather than a vectorization. 

Among the distances induced by linearized representations, PSph is again the most faithful to
\(\POT_1\); see \Cref{tab:pp_corr}. Its correlation with \(\POT_1\) is \(0.935\),
substantially above PL (\(0.822\)), PI (\(0.886\)), and PSpl (\(0.527\)). This is consistent
with the clustering results: PSph stays close to \(\POT_1\) in both \(H_0\) and \(H_1\), whereas
PSpl performs poorly both in clustering and in correlation, suggesting that its induced geometry
is much less aligned with that of \(\POT_1\) in this regime. 

The behavior of PL and PI is particularly informative when compared with the FDA simulations
(i)--(iii). There, summaries with a strong bias toward high persistence, especially PL, were
favored because the discriminative signal was largely carried by the largest oscillations. Here
the situation is different. Since low- and medium-persistence features carry relevant information
about local spatial interaction, summaries with a strong high-persistence bias can discard too
much signal. Coherently with this intuition, PL performs poorly in both homological degrees, with
mean Rand index around \(0.60\) in \(H_0\) and \(0.63\) in \(H_1\). By contrast, PI performs
extremely well in \(H_0\), plausibly because its milder and tunable bias is better matched to the
relevant persistence scales in this regime. 

Finally, the very high correlation of SW with \(\POT_1\) is expected: on persistence diagrams
with uniformly bounded cardinality, SW is bi-Lipschitz equivalent to \(\POT_1\), and should
therefore track it closely up to multiplicative distortion. More broadly, this PP study
complements the FDA one by showing that the usefulness of a given topological summary depends
strongly on which persistence scales carry the relevant information.

\begin{table}[t]
\centering
\renewcommand{\arraystretch}{1.2}
\resizebox{\linewidth}{!}{
\begin{tabular}{|l|c|c|c|c|c|c|}
\hline
 & POT$_1$ & SW & PSph & PL & PI & PSpl \\
\hline
$H_0$ &
$0.859 \pm 0.010$ &
$0.859 \pm 0.010$ &
$0.858 \pm 0.010$ &
$0.596 \pm 0.052$ &
$\mathbf{0.950 \pm 0.058}$ &
$0.283 \pm 0.015$ \\
\hline
$H_1$ &
$0.872 \pm 0.006^{\dagger}$ &
$\mathbf{0.873 \pm 0.009}$ &
$0.861 \pm 0.013$ &
$0.630 \pm 0.211$ &
$0.754 \pm 0.019$ &
$0.285 \pm 0.018$ \\
\hline
\end{tabular}
}
\caption{\textbf{PP unsupervised.} Mean $\pm$ standard deviation of Rand index over $100$ runs.
Bold denotes the best mean in each row. A dagger $^{\dagger}$ marks methods whose 95\% confidence
interval for the mean (with $n=100$ runs) overlaps with that of the best method.}
\label{tab:pp_unsup}
\end{table}

\begin{table}[t]
\centering
\renewcommand{\arraystretch}{1.2}
\begin{tabular}{|l|c|}
\hline
Method & Corr.\ with POT$_1$ \\
\hline
SW   & $0.996 \pm 0.004$ \\
PSph & $0.935 \pm 0.055$ \\
PL   & $0.822 \pm 0.071$ \\
PI   & $0.886 \pm 0.092$ \\
PSpl & $0.527 \pm 0.102$ \\
\hline
\end{tabular}
\caption{\textbf{PP correlation.} Mean $\pm$ standard deviation of the per-iteration,
per-homological-degree correlation between POT$_1$ and each competing distance, computed over
repeated runs. Since $H_0$ and $H_1$ are computed on the same data in each iteration, the two
correlations within an iteration are not independent; accordingly, we only report descriptive
statistics.}
\label{tab:pp_corr}
\end{table}

\begin{figure}[t]
\centering
\newcommand{\ppimg}[1]{\includegraphics[width=\linewidth]{#1}}

\begin{subfigure}[t]{0.24\textwidth}
  \centering
  \ppimg{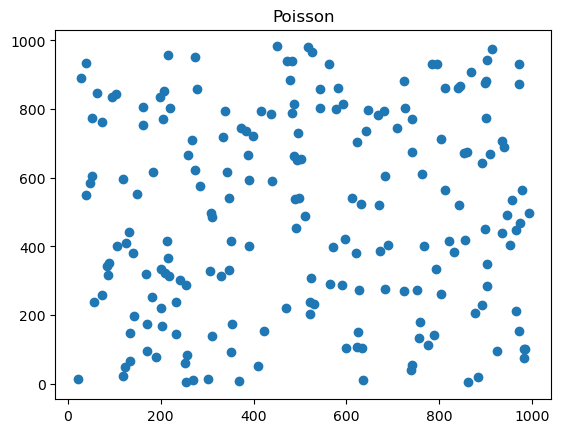}
  \caption{Poisson: sample}
\end{subfigure}\hfill
\begin{subfigure}[t]{0.24\textwidth}
  \centering
  \ppimg{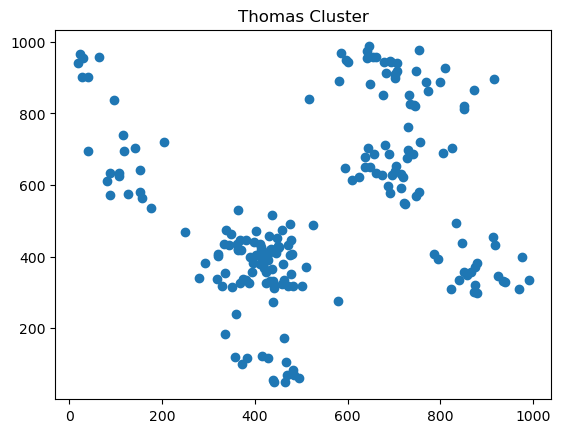}
  \caption{Thomas: sample}
\end{subfigure}\hfill
\begin{subfigure}[t]{0.24\textwidth}
  \centering
  \ppimg{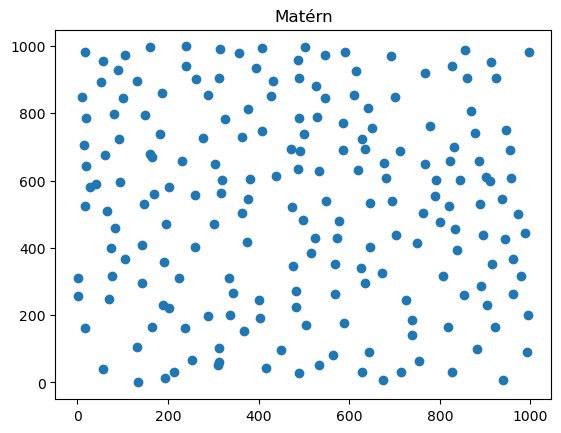}
  \caption{Mat\'ern: sample}
\end{subfigure}\hfill
\begin{subfigure}[t]{0.24\textwidth}
  \centering
  \ppimg{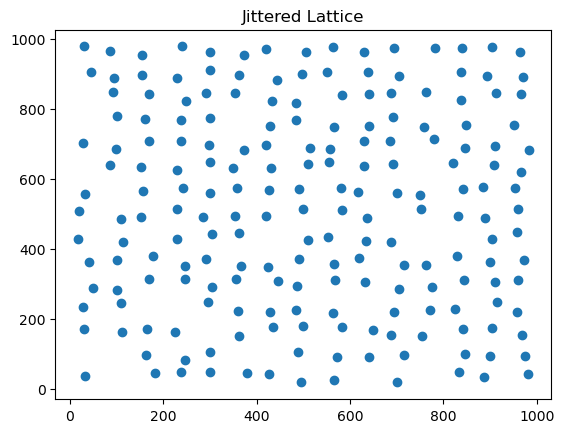}
  \caption{Lattice: sample}
\end{subfigure}

\vspace{2mm}

\begin{subfigure}[t]{0.24\textwidth}
  \centering
  \ppimg{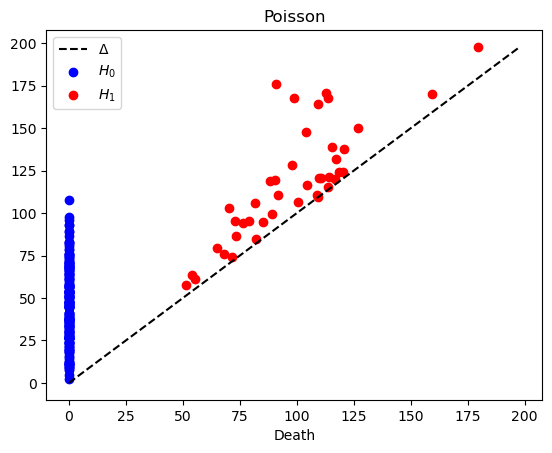}
  \caption{Poisson: diagrams}
\end{subfigure}\hfill
\begin{subfigure}[t]{0.24\textwidth}
  \centering
  \ppimg{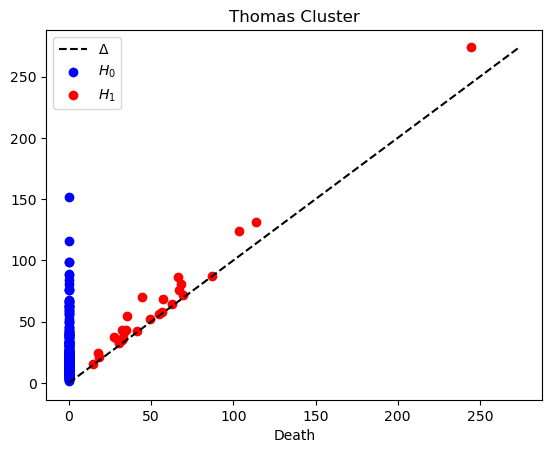}
  \caption{Thomas: diagrams}
\end{subfigure}\hfill
\begin{subfigure}[t]{0.24\textwidth}
  \centering
  \ppimg{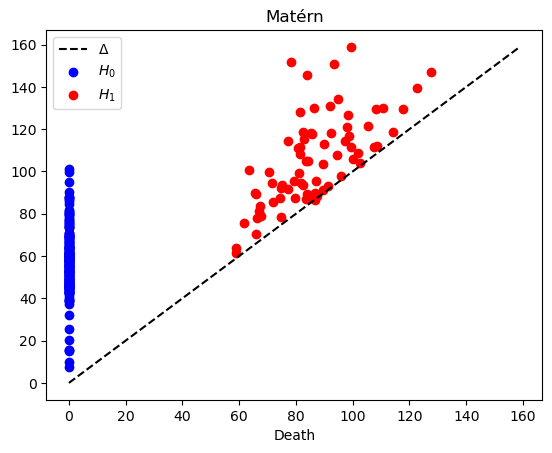}
  \caption{Mat\'ern: diagrams}
\end{subfigure}\hfill
\begin{subfigure}[t]{0.24\textwidth}
  \centering
  \ppimg{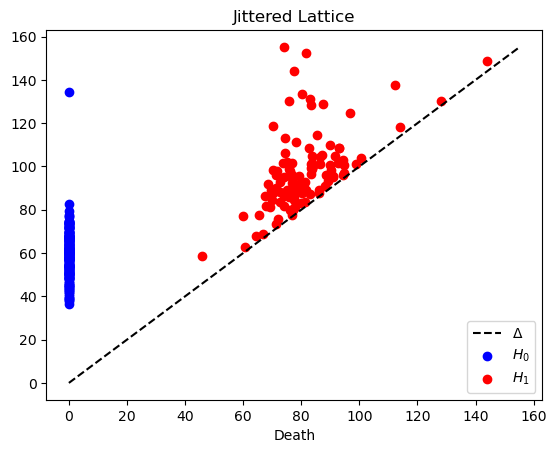}
  \caption{Lattice: diagrams}
\end{subfigure}

\vspace{-2mm}
\caption{\textbf{PP simulation.} Top row: one simulated point pattern for each point process. Bottom row: corresponding persistence diagrams in degrees $H_0$ (blue) and $H_1$ (red). Ordering: Poisson, Thomas, Mat\'ern, jittered lattice.}
\label{fig:pp_simulation}
\vspace{-2mm}
\end{figure}

\section{Supervised Case Studies}
\label{sec:supervised}

We assess PSph on a collection of supervised regression and classification problems, comparing it with PI, PL, PSpl, SWK, and, whenever the sample size is sufficiently large, PersLay. For PSph, PI, PSpl, and PL we train random-forest classifiers or regressors, whereas SWK is paired with SVMs. Performance is measured through $R^2$ in regression and accuracy in classification.

All case studies in this section are inherited from \citet{pegoraro2025persistence}, to which we refer for full details of the original benchmark design and implementation. In the present paper, we rerun only the PSph and PI pipelines: PSph is recomputed using the new definition introduced here, while PI is recomputed using improved parameter choices. The remaining baselines are reported unchanged from \citet{pegoraro2025persistence}. This allows for a direct comparison between the original weighted persistence-sphere construction, denoted here by PSph*, and the new representation PSph.

\subsubsection{Datasets}

We briefly summarize the supervised benchmarks inherited from \citet{pegoraro2025persistence}. Unless otherwise stated, we use the same train--test splits and cross-validation protocols as in that paper.

\paragraph{\virgolette{Eyeglasses} regression.}
This synthetic regression task is built from the \texttt{eyeglasses} generator in \texttt{scikit-tda} \citep{scikittda2019}. One radius is fixed at $20$, while the second is sampled from a Gaussian distribution with mean $10$ and standard deviation $2.5$, and serves as the regression target. We generate $2000$ point clouds, compute their one-dimensional Vietoris--Rips persistence diagrams, and evaluate performance over $5$ independent repetitions, each using a $70\%$--$30\%$ train--test split and threefold cross-validation on the training set.

\paragraph{Functional datasets from \texttt{scikit-fda}.}
We also consider the FDA benchmarks \virgolette{Tecator}, \virgolette{$\mathrm{NO}_x$}, and \virgolette{Growth}, following the preprocessing used in \citet{pegoraro2025persistence}. In all three cases, zero-dimensional persistence diagrams are extracted from sublevel-set filtrations of the observed curves, and we use a $70\%$--$30\%$ train--test split with threefold cross-validation. For \virgolette{Tecator}, the task is regression of fat content from derivatives of absorbance curves. For \virgolette{$\mathrm{NO}_x$}, the task is to classify weekdays versus weekends from daily emission curves. For \virgolette{Growth}, the task is to classify sex from derivatives of height trajectories.

\paragraph{Classification benchmarks from \citet{bandiziol2024persistence}.}
The datasets \virgolette{DYN SYS}, \virgolette{ENZYMES JACC}, \virgolette{POWER}, and \virgolette{SHREC14} are the same diagram-based classification benchmarks used in \citet{pegoraro2025persistence}, originally taken from \citet{bandiziol2024persistence}. We retain the same $70\%$--$30\%$ train--test split and threefold cross-validation protocol. These datasets cover a range of modalities, including point clouds, graphs, time series, and 3D shapes.

\paragraph{\virgolette{Human Poses} and \virgolette{McGill 3D Shapes}.}
The final two benchmarks are again inherited from \citet{pegoraro2025persistence}. For \virgolette{Human Poses}, persistence diagrams are obtained from sublevel sets of the height function; for \virgolette{McGill 3D Shapes}, they are obtained from an HKS-based filtration. Because each class contains only $10$ samples, both datasets are evaluated with a fixed $80\%$--$20\%$ train--test split.

\begin{figure}
\centering
	\begin{subfigure}[c]{0.25\textwidth}
    	\includegraphics[width = \textwidth]{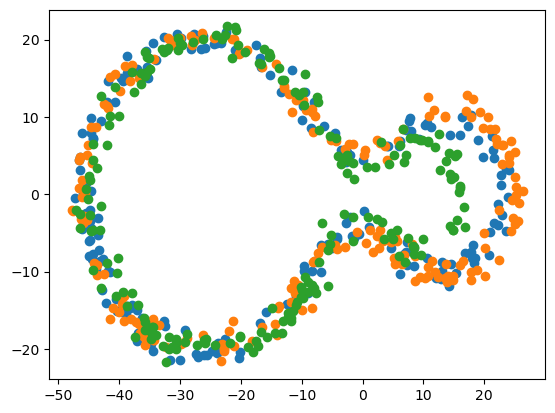}
		\captionsetup{singlelinecheck=off, margin={0.05cm, 0.05cm}}
    	\caption{Three synthetic point clouds from the \virgolette{Eyeglasses} experiment.}
    \end{subfigure}
    	\begin{subfigure}[c]{0.25\textwidth}
    	\centering
    	\includegraphics[width = \textwidth]{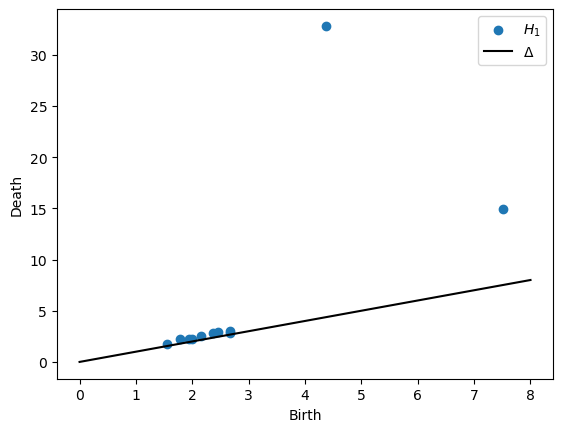}
		\captionsetup{singlelinecheck=off, margin={0.05cm, 0.05cm}}
    	\caption{A persistence diagram from the \virgolette{Eyeglasses} regression problem.}
    	\label{fig:PD}
    \end{subfigure}
    	\begin{subfigure}[c]{0.25\textwidth}
    	\centering
    	\includegraphics[width = \textwidth]{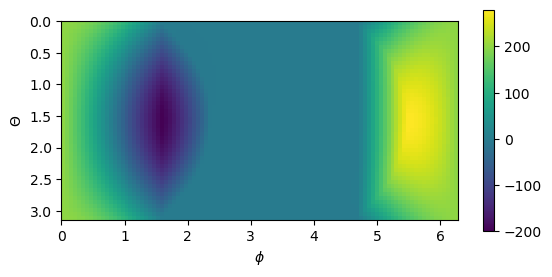}
		\captionsetup{singlelinecheck=off, margin={0.05cm, 0.05cm}}
    	\caption{The corresponding PSph representation, shown in polar coordinates.}
    \end{subfigure}

	\begin{subfigure}[c]{0.25\textwidth}
		\centering
		\includegraphics[width = \textwidth]{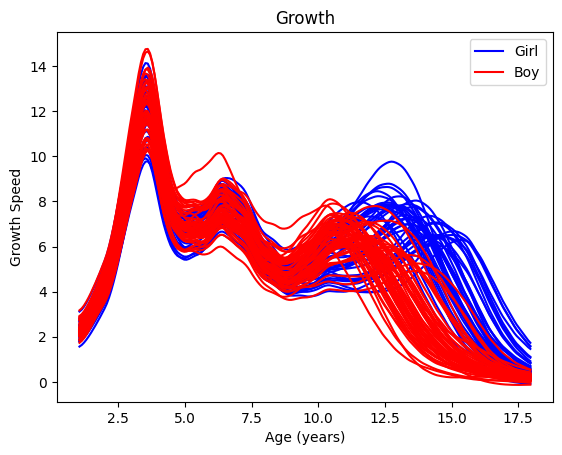}
		\captionsetup{singlelinecheck=off, margin={0.05cm, 0.05cm}}
		\caption{Derivatives of the curves in the \virgolette{Growth} dataset, colored by class.}
		\label{fig:growth}
	\end{subfigure}
	\begin{subfigure}[c]{0.25\textwidth}
		\centering
		\includegraphics[width = \textwidth]{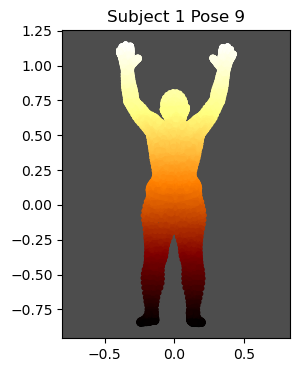}
		\captionsetup{singlelinecheck=off, margin={0.05cm, 0.05cm}}
		\caption{One example from the \virgolette{Human Poses} dataset.}
		\label{fig:human}
	\end{subfigure}
	\begin{subfigure}[c]{0.25\textwidth}
		\centering
		\includegraphics[width = \textwidth]{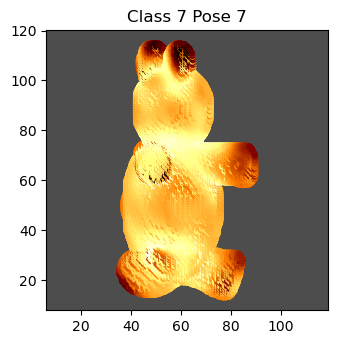}
		\captionsetup{singlelinecheck=off, margin={0.05cm, 0.05cm}}
		\caption{One example from the \virgolette{McGill 3D Shapes} dataset.}
		\label{fig:mcgill}
	\end{subfigure}
    \caption{Examples of data, persistence diagrams, and PSph representations appearing in the
    supervised experiments of \Cref{sec:supervised}.}
    \label{fig:experiments}
\end{figure}

\subsubsection{Parameter Details}
\label{sec:implementation_details}

We summarize here the hyperparameters used for the vectorization and kernel baselines; PersLay architecture details are deferred to \citet{pegoraro2025persistence}.

For SWK we use SVM pipelines with precomputed kernels and regularization parameter
\[
C \in \{10^{-3},10^{-2},10^{-1},1,10,10^{2},10^{3},10^{4}\}.
\]
For random forests, the number of trees is selected in $\{100,200\}$. All models are implemented with \texttt{scikit-learn} \citep{scikit-learn}.

\paragraph{Parameters for topological summaries.}
We now detail the hyperparameters used by the different vectorization methods. As in
\citet{pegoraro2025persistence}, for PIs, PSpl, and PLs, support or range parameters are
determined by inspecting the full dataset, independently of the train--test split. This introduces
a mild inconsistency, although in practice it could be avoided by selecting sufficiently
conservative bounds from the training sample alone. In contrast, PSphs are defined on a fixed
compact domain, so no analogous preprocessing choice is required.

\begin{itemize}
    \item \textbf{PSph:} PSphs are functions on $\mathbb{S}^2$, expressed in spherical coordinates
    and expanded in spherical harmonics \citep{muller2006spherical} using \texttt{pyshtools}
    \citep{wieczorek2018shtools}. This yields an orthonormal feature representation compatible with
    \texttt{scikit-learn}. Using a Driscoll--Healy grid \citep{driscoll1994computing} with
    $2N_\theta$ latitudinal and $4N_\theta$ longitudinal nodes, the resulting feature dimension is
    $N_\theta^2/2$. We cross-validate
    \[
    2N_\theta \in \{30,40,50,60,70\}.
    \]

    \item \textbf{PI:} using \texttt{persim} from \texttt{scikit-tda}, we enclose all diagrams in
a birth--persistence rectangle and set \texttt{pixel\_size} by dividing the shortest side of
this rectangle by a prescribed number \(N_{\mathrm{pix}}\) of pixels per side. With the
default Gaussian kernel, we set
\[
\sigma=\texttt{pixel\_size}/N_\sigma,
\qquad
N_\sigma \in \{1,0.5,0.1,0.05,0.01\},
\]
so that smaller values of \(N_\sigma\) correspond to broader kernels. We also tune the
persistence exponent in \texttt{weight\_params} over
\[
n \in \{1,2,4,8\}.
\]
The range of \(N_{\mathrm{pix}}\) depends on the dataset family:
\[
N_{\mathrm{pix}}\in\{10,100,500\}
\]
for the datasets inherited from \citet{bandiziol2024persistence} and for
\virgolette{Human Poses},
\[
N_{\mathrm{pix}}\in\{10,100\}
\]
for the \virgolette{Eyeglasses} and the FDA datasets \virgolette{Tecator}, \virgolette{Growth}, and \virgolette{NOx}, and
\[
N_{\mathrm{pix}}\in\{10,50\}
\]
for \virgolette{McGill 3D Shapes}. 

    \item \textbf{PL:} persistence landscapes are sampled on a common grid of $5000$ points and
    then concatenated, with no additional hyperparameters.

    \item \textbf{PSpl:} following \citet{dong2024persistence}, we use spline grids of size $h^2$
    with
    \[
    h \in \{5,10,20,40,50\},
    \]
    and otherwise keep the same parameter ranges as in \citet{pegoraro2025persistence}.

    \item \textbf{SWK:} we use the sliced Wasserstein kernel from \texttt{gudhi} \citep{gudhi},
    fixing the number of directions to $M=100$ and tuning the kernel bandwidth over
    \[
    \sigma \in \{10^{-5},10^{-4},10^{-3},10^{-2},10^{-1},1,10\}.
    \]
\end{itemize}

\begin{table}[t]
\centering
\renewcommand{\arraystretch}{1.2}
\resizebox{\linewidth}{!}{
\begin{tabular}{llllllll}
\hline
\multicolumn{1}{|l|}{} &
\multicolumn{1}{l|}{PSph} &
\multicolumn{1}{l|}{PSph*} &
\multicolumn{1}{l|}{PI} &
\multicolumn{1}{l|}{PL} &
\multicolumn{1}{l|}{PSpl} &
\multicolumn{1}{l|}{PersLay} &
\multicolumn{1}{l|}{SWK} \\
\hline
\textbf{Regression} & & & & & & & \\ \hline
\multicolumn{1}{|l|}{Eyeglasses} &
\multicolumn{1}{l|}{$0.960 \pm 0.004^{\dagger}$} &
\multicolumn{1}{l|}{$0.966 \pm 0.003^{\dagger}$} &
\multicolumn{1}{l|}{$0.969 \pm 0.006^{\dagger}$} &
\multicolumn{1}{l|}{$0.955 \pm 0.018^{\dagger}$} &
\multicolumn{1}{l|}{$\mathbf{0.971 \pm 0.011^{\dagger}}$} &
\multicolumn{1}{l|}{$0.248 \pm 0.031$} &
\multicolumn{1}{l|}{$\mathbf{0.971 \pm 0.003^{\dagger}}$} \\ \hline
\multicolumn{1}{|l|}{Tecator} &
\multicolumn{1}{l|}{$\mathbf{0.973 \pm 0.007^{\dagger}}$} &
\multicolumn{1}{l|}{$0.969 \pm 0.009^{\dagger}$} &
\multicolumn{1}{l|}{$0.940 \pm 0.018$} &
\multicolumn{1}{l|}{$0.954 \pm 0.011$} &
\multicolumn{1}{l|}{$\mathbf{0.973 \pm 0.009^{\dagger}}$} &
\multicolumn{1}{l|}{$0.895 \pm 0.029$} &
\multicolumn{1}{l|}{$0.953 \pm 0.010$} \\ \hline
\textbf{Classification} & & & & & & & \\ \hline
\multicolumn{1}{|l|}{Growth} &
\multicolumn{1}{l|}{$\mathbf{0.900 \pm 0.047^{\dagger}}$} &
\multicolumn{1}{l|}{$0.850 \pm 0.052^{\dagger}$} &
\multicolumn{1}{l|}{$0.836 \pm 0.056$} &
\multicolumn{1}{l|}{$0.768 \pm 0.060$} &
\multicolumn{1}{l|}{$0.836 \pm 0.043$} &
\multicolumn{1}{l|}{$0.807 \pm 0.043$} &
\multicolumn{1}{l|}{$0.768 \pm 0.058$} \\ \hline
\multicolumn{1}{|l|}{NOx} &
\multicolumn{1}{l|}{$0.863 \pm 0.044^{\dagger}$} &
\multicolumn{1}{l|}{$\mathbf{0.869 \pm 0.041^{\dagger}}$} &
\multicolumn{1}{l|}{$0.803 \pm 0.043$} &
\multicolumn{1}{l|}{$0.789 \pm 0.062$} &
\multicolumn{1}{l|}{$0.860 \pm 0.058^{\dagger}$} &
\multicolumn{1}{l|}{$0.717 \pm 0.078$} &
\multicolumn{1}{l|}{$0.840 \pm 0.055^{\dagger}$} \\ \hline
\multicolumn{1}{|l|}{DYN\_SYS} &
\multicolumn{1}{l|}{$0.809 \pm 0.020$} &
\multicolumn{1}{l|}{$0.829 \pm 0.028^{\dagger}$} &
\multicolumn{1}{l|}{$0.823 \pm 0.028^{\dagger}$} &
\multicolumn{1}{l|}{$\mathbf{0.840 \pm 0.024^{\dagger}}$} &
\multicolumn{1}{l|}{$0.791 \pm 0.029$} &
\multicolumn{1}{l|}{$0.696 \pm 0.044$} &
\multicolumn{1}{l|}{$0.828 \pm 0.028^{\dagger}$} \\ \hline
\multicolumn{1}{|l|}{ENZYMES\_JACC} &
\multicolumn{1}{l|}{$\mathbf{0.391 \pm 0.025^{\dagger}}$} &
\multicolumn{1}{l|}{$0.349 \pm 0.036$} &
\multicolumn{1}{l|}{$0.325 \pm 0.061$} &
\multicolumn{1}{l|}{$0.377 \pm 0.032^{\dagger}$} &
\multicolumn{1}{l|}{$0.362 \pm 0.029^{\dagger}$} &
\multicolumn{1}{l|}{$0.243 \pm 0.023$} &
\multicolumn{1}{l|}{$0.283 \pm 0.055$} \\ \hline
\multicolumn{1}{|l|}{POWER} &
\multicolumn{1}{l|}{$\mathbf{0.771 \pm 0.017^{\dagger}}$} &
\multicolumn{1}{l|}{$0.769 \pm 0.021^{\dagger}$} &
\multicolumn{1}{l|}{$0.740 \pm 0.017$} &
\multicolumn{1}{l|}{$0.756 \pm 0.018^{\dagger}$} &
\multicolumn{1}{l|}{$0.734 \pm 0.021$} &
\multicolumn{1}{l|}{$0.725 \pm 0.038$} &
\multicolumn{1}{l|}{$0.767 \pm 0.150^{\dagger}$} \\ \hline
\multicolumn{1}{|l|}{SHREC14} &
\multicolumn{1}{l|}{$0.920 \pm 0.020^{\dagger}$} &
\multicolumn{1}{l|}{$0.931 \pm 0.022^{\dagger}$} &
\multicolumn{1}{l|}{$0.938 \pm 0.019^{\dagger}$} &
\multicolumn{1}{l|}{$\mathbf{0.943 \pm 0.024^{\dagger}}$} &
\multicolumn{1}{l|}{$0.942 \pm 0.015^{\dagger}$} &
\multicolumn{1}{l|}{$0.879 \pm 0.018$} &
\multicolumn{1}{l|}{$0.886 \pm 0.092^{\dagger}$} \\ \hline
\multicolumn{1}{|l|}{Human Poses} &
\multicolumn{1}{l|}{$0.540 \pm 0.073$} &
\multicolumn{1}{l|}{$\mathbf{0.640 \pm 0.077^{\dagger}}$} &
\multicolumn{1}{l|}{$0.515 \pm 0.084$} &
\multicolumn{1}{l|}{$0.405 \pm 0.106$} &
\multicolumn{1}{l|}{$0.560 \pm 0.094^{\dagger}$} &
\multicolumn{1}{l|}{-} &
\multicolumn{1}{l|}{$0.345 \pm 0.082$} \\ \hline
\multicolumn{1}{|l|}{McGill 3D Shapes} &
\multicolumn{1}{l|}{$\mathbf{0.689 \pm 0.075^{\dagger}}$} &
\multicolumn{1}{l|}{$0.544 \pm 0.085$} &
\multicolumn{1}{l|}{$0.667 \pm 0.056^{\dagger}$} &
\multicolumn{1}{l|}{$0.678 \pm 0.102^{\dagger}$} &
\multicolumn{1}{l|}{$0.656 \pm 0.108^{\dagger}$} &
\multicolumn{1}{l|}{-} &
\multicolumn{1}{l|}{$0.567 \pm 0.130^{\dagger}$} \\ \hline
\end{tabular}
}
\caption{\textbf{Supervised case studies.} We report average $R^2$ for regression and average
accuracy for classification, across $5$ runs for \virgolette{Eyeglasses} and $10$ runs for the
remaining tasks. Values are reported as mean $\pm$ standard deviation. PSph* denotes the weighted
persistence-sphere construction introduced in \citet{pegoraro2025persistence}. All benchmarks are
taken from \citet{pegoraro2025persistence}; in the present paper, only the PSph and PI columns are
recomputed, while the remaining columns are reported from that work. Bold entries indicate the
best-performing method in each row. A dagger $^{\dagger}$ marks methods whose 95\% confidence
interval, computed using the appropriate number of repetitions for the corresponding row, overlaps
with that of the best method.}
\label{table:results}
\end{table}

\subsection{Results}
\label{sec:results}

The supervised results in \Cref{table:results} show that PSph remains highly competitive across a broad range of tasks. Relative to the weighted construction PSph* from \citet{pegoraro2025persistence}, performance is mostly consistent and in several datasets slightly improved. In particular, PSph attains the best mean on \virgolette{Tecator}, \virgolette{Growth}, \virgolette{POWER}, and \virgolette{McGill 3D Shapes}, and remains within overlapping confidence intervals of the best method on several others.

The differences between PSph and PSph* are dataset-dependent. PSph improves markedly on \virgolette{McGill 3D Shapes}, and also slightly on \virgolette{Tecator}, \virgolette{Growth}, \virgolette{ENZYMES JACC}, and \virgolette{POWER}. On the other hand, PSph* remains stronger on \virgolette{Human Poses}, and has a mild advantage on \virgolette{DYN SYS}, \virgolette{NOx}, and \virgolette{SHREC14}. A plausible explanation for the gap on \virgolette{Human Poses} is the very small sample size: in such regimes, explicit reweighting may act as a useful denoising bias before the downstream learning stage. This does not preclude combining reweighted diagrams with the new PSph representation; it only means that such a choice is no longer built into the representation itself.

PIs are competitive on several datasets. These include \virgolette{Eyeglasses}, \virgolette{DYN SYS}, \virgolette{SHREC14}, and \virgolette{McGill 3D Shapes}, although their performance is less uniform than that of PSph-type summaries.

More broadly, the results confirm the main supervised finding already observed in \citet{pegoraro2025persistence}: PSph-type representations are consistently among the strongest performers across tasks. Persistence splines also perform very well in most supervised settings, whereas PersLay was likely penalized by the relatively small sample sizes of many benchmarks. Finally, none of the considered methods is uniformly poor, which reinforces the broader message of the paper: different summaries encode different geometric biases, and their effectiveness depends on the regime in which they are deployed.

\section{Discussion}
\label{sec:discussion}

In this paper we introduced a refined definition of persistence spheres for integrable measures on the upper half-plane, proved stability with respect to $\POT_1$, established continuity of the inverse on the image, and showed how these results extend naturally to a Hilbert-space setting relevant for applications. Conceptually, the key novelty is that the representation incorporates the deletion-to-diagonal mechanism of partial optimal transport directly through signed diagonal augmentation, rather than through ad hoc persistence-dependent reweighting. Empirically, the revised construction also leads to improved performance across the supervised benchmarks considered here.

These results suggest several directions for further work. At a conceptual level, the signed diagonal augmentation used here appears relevant beyond persistence spheres themselves. As discussed in \Cref{sec:augmentation_strategy}, it provides a general way of reconciling linear summaries of persistence diagrams with the deletion-to-diagonal mechanism of $\POT_1$, without imposing ad hoc vanishing weights near the diagonal. A natural question is therefore to understand how far this principle extends: for which classes of integral summaries, kernel embeddings, or smoothed diagram representations can augmentation yield not only stability, but also some form of inverse continuity or local metric faithfulness? In particular, it would be interesting to identify feature families rich enough to replicate, beyond the ReLU ridge setting considered here, the duality-and-approximation mechanism underlying the local inverse bounds of \Cref{prop:compact_core_holder_siegel}.

A second direction concerns statistical modeling directly on persistence-valued outputs. For instance, one may consider a parametric or nonparametric family of predictors \(h_\theta(x)\) taking values in a class of persistence measures, and estimate \(\theta\) by minimizing an empirical risk of the form
\[
\frac1n\sum_{i=1}^n \ell\!\left(S(h_\theta(x_i)),\,S(\mu_i)\right),
\]
where \(\mu_i\) are observed persistence measures and \(\ell\) is an \(L^2(\Ss^2)\), uniform, or kernel-induced discrepancy. Because the representation is stable and bi-continuous on suitable classes, such procedures would allow one to optimize in a linear/Hilbert space while still controlling the induced error in $\POT_1$ on the underlying persistence objects.

A third direction concerns inversion. Given a function \(f\in C(\Ss^2)\), for instance produced by a regression model or by averaging in sphere space, can one recover a persistence measure \(\widehat\mu\) such that \(S(\widehat\mu)\approx f\)? This matters mainly for interpretation: passing back from a learned or averaged sphere representation to a diagram would allow one to read the result again in terms of persistence pairs rather than only as a function on \(\Ss^2\). Even in the discrete case this raises nontrivial questions, including uniqueness, stability under approximation error, and effective reconstruction from finitely sampled values of \(f\).

A natural first approach would be to optimize over discrete candidate measures
\[
\widehat\mu=\sum_{j=1}^m w_j\delta_{p_j},
\]
and fit the locations \(p_j\) and weights \(w_j\) so that \(S(\widehat\mu)\) matches the observed function. By definition,
\[
S(\widehat\mu)(v)
=
\sum_{j=1}^m w_j\Big[
\relu\!\big(\langle v,(1,p_j)\rangle\big)
-
\relu\!\big(\langle v,(1,\pi_\Delta(p_j))\rangle\big)
\Big],
\]
so the inversion problem can be viewed as training a shallow ReLU model with a highly structured architecture, whose parameters are constrained by the geometry of persistence diagrams. This suggests exploiting the large body of optimization methods and software developed for neural networks. In this way, inversion would provide not only a computational tool, but also a
concrete mechanism for translating learned or averaged sphere representations back into diagrams.

More broadly, the present work points to a larger problem in topological machine learning: understanding which linearizations of persistence are merely stable, which are geometrically faithful, and which structural modifications, such as diagonal augmentation, can help bridge the gap between the two.

\appendix
\renewcommand{\theHsection}{appendix.\Alph{section}}
\section{Sobolev and ReLU Variation-Space Details}\label{app:sobolev_relu}

This appendix collects the analytic notation and the Sobolev-to-ReLU estimate
used in the proof of \Cref{prop:compact_core_holder_siegel}. We first specify
the Sobolev restriction norms, and then compare the closed-hull variation norm
of \citet{mao2024approximation} with the exact signed-measure representation
needed to integrate against arbitrary finite signed measures.

We first fix the Sobolev convention needed below. Following
\citet[eqs.~(1.5) and (1.8)]{mao2024approximation}, for $s\ge0$ we set
\begin{equation}\label{eq:whole_space_sobolev}
W^s(L^2(\R^2))
:=
\left\{H\in L^2(\R^2):
\int_{\R^2}(1+\|\xi\|_2)^{2s}|\widehat H(\xi)|^2\,d\xi<\infty
\right\},
\end{equation}
with norm
\[
\|H\|_{W^s(L^2(\R^2))}^2
:=
\int_{\R^2}(1+\|\xi\|_2)^{2s}|\widehat H(\xi)|^2\,d\xi,
\]
where, as in \citet[eq.~(1.7)]{mao2024approximation},
\[
\widehat H(\xi):=\int_{\R^2}e^{i\xi\cdot x}H(x)\,dx,
\qquad H\in L^1(\R^2)\cap L^2(\R^2).
\]
For general $H\in L^2(\R^2)$,
$\widehat H$ denotes the $L^2$ limit of the Fourier transforms of any
$L^1\cap L^2$ sequence converging to $H$ in $L^2$; this is well-defined by
Plancherel's theorem. For $R>0$, let
\[
\mathring B_R:=\{u\in\R^2:\|u\|_2<R\}
\]
denote the interior of the ball $B_R$ defined in \Cref{sec:POT}, and define
the restriction space
\begin{equation}\label{eq:restriction_sobolev_space}
W^s(L^2(B_R))
:=
\left\{h\in L^2(\mathring B_R):
H|_{\mathring B_R}=h\ \text{a.e. for some }H\in W^s(L^2(\R^2))
\right\}
\end{equation}
with norm
\begin{equation}\label{eq:restriction_sobolev_norm}
\|h\|_{W^s(L^2(B_R))}
:=
\inf_{\substack{H\in W^s(L^2(\R^2))\\
H|_{\mathring B_R}=h\ \text{a.e.}}}
\|H\|_{W^s(L^2(\R^2))}.
\end{equation}
For integer $s\ge0$, the whole-space Fourier norm above is equivalent to the
standard weak-derivative norm. On a ball, the restriction space
$W^s(L^2(B_R))$ agrees with the usual weak-derivative space
$W^{s,2}(B_R)$, whose norm is
\begin{equation}\label{eq:integer_sobolev_norm}
\|h\|_{W^{s,2}(B_R)}^2
:=
\sum_{|\alpha|\le s}\|D^\alpha h\|_{L^2(B_R)}^2.
\end{equation}
Moreover, the restriction spaces on a ball satisfy the standard complex
interpolation inequality: for some $C_{\mathrm{interp},R}<\infty$,
\begin{equation}\label{eq:ball_sobolev_interpolation}
\|h\|_{W^{5/2}(L^2(B_R))}
\le C_{\mathrm{interp},R}
\|h\|_{W^{2,2}(B_R)}^{1/2}
\|h\|_{W^{3,2}(B_R)}^{1/2},
\qquad h\in W^{3,2}(B_R),
\end{equation}
where the two integer-order norms on the right are those in
\eqref{eq:integer_sobolev_norm}. Here is the precise extension argument.
The whole-space estimate follows from
\citet[Theorem~6.4.5(7)]{bergh2012interpolation}. Since $\mathring B_R$ is a
bounded Lipschitz domain, Rychkov's universal extension theorem
\citep{rychkov1999restrictions}, specialized to the
$L^2$-Sobolev scale, provides a linear extension operator, independent of
$s$,
\[
\mathcal E_R:W^s(L^2(B_R))\longrightarrow W^s(L^2(\R^2)),
\qquad s\in\R,
\]
whose restriction to $\mathring B_R$ is the identity and which is bounded for
each $s$. In particular, the same operator $\mathcal E_R$ is bounded at
$s=2$ and $s=3$: after absorbing the equivalence constants between the
restriction and weak-derivative norms, there are constants
$C_{\mathcal E,2,R},C_{\mathcal E,3,R}<\infty$ such that
\[
\|\mathcal E_Rh\|_{W^j(L^2(\R^2))}
\le C_{\mathcal E,j,R}\|h\|_{W^{j,2}(B_R)},
\qquad j=2,3.
\]
Therefore, for $h\in W^{3,2}(B_R)$,
\[
\begin{aligned}
\|h\|_{W^{5/2}(L^2(B_R))}
&\le \|\mathcal E_Rh\|_{W^{5/2}(L^2(\R^2))}\\
&\le C_R
\|\mathcal E_Rh\|_{W^2(L^2(\R^2))}^{1/2}
\|\mathcal E_Rh\|_{W^3(L^2(\R^2))}^{1/2}\\
&\le C_R C_{\mathcal E,2,R}^{1/2}C_{\mathcal E,3,R}^{1/2}
\|h\|_{W^{2,2}(B_R)}^{1/2}
\|h\|_{W^{3,2}(B_R)}^{1/2}\\
&=: C_{\mathrm{interp},R}
\|h\|_{W^{2,2}(B_R)}^{1/2}
\|h\|_{W^{3,2}(B_R)}^{1/2}.
\end{aligned}
\]
This proves \eqref{eq:ball_sobolev_interpolation}. We will use
\eqref{eq:integer_sobolev_norm} and \eqref{eq:ball_sobolev_interpolation} below.

\begin{lem}[Sobolev-to-ReLU control on a ball]\label{lem:sobolev_relu_control}
For $R>0$, let
\[
\Theta_R:=\Ss^1\times[-R,R],
\qquad
\Phi_{\omega,b}(u):=\relu(\omega\cdot u+b),
\]
and write
\[
\mathcal P_R:=\{\Phi_{\omega,b}:(\omega,b)\in\Theta_R\}\subset C(B_R).
\]
Let $\mathfrak M(\Theta_R)$ be the space of finite signed Borel measures on
$\Theta_R$, define
\[
T_R\beta(u):=\int_{\Theta_R}\Phi_{\omega,b}(u)\,d\beta(\omega,b),
\]
which belongs to $C(B_R)$ by joint continuity of the integrand, and set
\[
\mathcal V_R:=T_R\bigl(\mathfrak M(\Theta_R)\bigr)\subset C(B_R).
\]
For $h\in\mathcal V_R$, define the ReLU variation norm by
\begin{equation}\label{eq:variation_measure_norm}
\|h\|_{\mathcal V_R}
:=
\inf_{\substack{\beta\in\mathfrak M(\Theta_R)\\T_R\beta=h}}
\|\beta\|_{\rm TV},
\end{equation}
where $T_R\beta=h$ means equality at every point of $B_R$. Then every
$h\in W^{5/2}(L^2(B_R))$, identified with its unique continuous
representative on $B_R$, belongs to $\mathcal V_R$, and there is a constant
$A_R<\infty$, depending only on $R$, such that
\begin{equation}\label{eq:Sob_to_var_rewrite}
\|h\|_{\mathcal V_R}
\le A_R\|h\|_{W^{5/2}(L^2(B_R))}
\qquad\text{for every }h\in W^{5/2}(L^2(B_R)).
\end{equation}
Consequently, if $\eta$ is a finite signed Borel measure supported in $B_R$,
then
\begin{equation}\label{eq:sobolev_relu_integral_control}
\left|\int_{B_R}h\,d\eta\right|
\le
A_R\|h\|_{W^{5/2}(L^2(B_R))}
\sup_{(\omega,b)\in\Theta_R}
\left|\int_{B_R}\Phi_{\omega,b}(u)\,d\eta(u)\right|.
\end{equation}
Moreover,
\begin{equation}\label{eq:relu_parameters_to_sphere}
\sup_{(\omega,b)\in\Theta_R}
\left|\int_{B_R}\Phi_{\omega,b}(u)\,d\eta(u)\right|
\le
\sqrt{1+R^2}\,
\sup_{v\in\Ss^2}
\left|\int_{B_R}\relu\!\big(\langle v,(1,u)\rangle\big)\,d\eta(u)\right|,
\end{equation}
and therefore
\begin{equation}\label{eq:sobolev_sphere_integral_control}
\left|\int_{B_R}h\,d\eta\right|
\le
\sqrt{1+R^2}\,A_R\|h\|_{W^{5/2}(L^2(B_R))}
\sup_{v\in\Ss^2}
\left|\int_{B_R}\relu\!\big(\langle v,(1,u)\rangle\big)\,d\eta(u)\right|.
\end{equation}
\end{lem}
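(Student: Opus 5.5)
The plan is to split the lemma into its three assertions and prove them in order.

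\textbf{The Sobolev-to-variation embedding \eqref{eq:Sob_to_var_rewrite}.} I would first handle the unit ball and then transfer to $B_R$ by scaling.

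On the unit ball I would invoke the embedding theorem of \citet{mao2024approximation}. For $d=2$ and $s=(d+3)/2=5/2$, it states that $W^{5/2}(L^2(B_1))$ embeds continuously into the variation space of the ReLU dictionary $\{\relu(\omega\cdot u+b):\omega\in\Ss^1,|b|\le 1\}$, with some constant $A_0$. The one point that needs care is that Mao--Siegel define their norm through the closed convex hull, i.e.\ as a gauge, whereas \eqref{eq:variation_measure_norm} is an infimum over exact representing measures. I would bridge the two as follows.
\begin{itemize}
\item If $\|h\|_{\text{hull}}<1$, then $h$ is a uniform limit (or an $L^2$ limit, upgraded using continuity) of finite combinations $\sum_j c_j\Phi_{\omega_j,b_j}$ with $\sum_j|c_j|\le 1$.
\item The corresponding discrete measures $\beta_n$ have total variation at most $1$ on the compact set $\Theta_1$.
\item By Banach--Alaoglu, a subsequence of $\beta_n$ converges weak-$*$ to some $\beta$ with $\|\beta\|_{\rm TV}\le 1$.
\item Since $(\omega,b)\mapsto\Phi_{\omega,b}(u)$ is continuous for each fixed $u$, we get $T_1\beta_n(u)\to T_1\beta(u)$ pointwise, so $T_1\beta=h$ on $B_1$.
\end{itemize}
This gives $\|h\|_{\mathcal V_1}\le\|h\|_{\text{hull}}$. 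I expect this identification, together with checking the exact form of Mao--Siegel's dictionary and of their norm conventions, to be the main technical obstacle.

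For the transfer to $B_R$, I would set $\tilde h(x):=h(Rx)$. The identity $\relu(\omega\cdot Rx+b)=R\,\relu(\omega\cdot x+b/R)$ maps $\Theta_R$ to $\Theta_1$ at the cost of a factor $R$ in the total variation. The dilation $x\mapsto Rx$ changes the fractional Sobolev restriction norm by some constant $C_{{\rm sc},R}$, which I would verify on the whole-space Fourier side, where $\widehat{H(R\cdot)}(\xi)=R^{-2}\widehat H(\xi/R)$, and then pass to restrictions via the infimum. Combining the two gives the constant $A_R$ (of the form $A_0C_{{\rm sc},R}/R$ recorded in \Cref{rmk:CK_explicit}). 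Continuity of the representative $h$ follows from the Sobolev embedding $W^{5/2}\subset C$ in dimension two.

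\textbf{The integral bound \eqref{eq:sobolev_relu_integral_control}.} Take any representing $\beta$ with $T_R\beta=h$. Since $\Phi$ is bounded on $B_R\times\Theta_R$ and both $\eta$ and $\beta$ are finite, Fubini applies, and
\[
\int h\,d\eta=\int_{\Theta_R}\Big(\int\Phi_{\omega,b}\,d\eta\Big)\,d\beta .
\]
Bounding the inner integral by its supremum over $\Theta_R$ gives a factor $\|\beta\|_{\rm TV}$. Taking the infimum over $\beta$ and applying \eqref{eq:Sob_to_var_rewrite} yields the estimate.

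\textbf{The passage to the sphere \eqref{eq:relu_parameters_to_sphere}.} For $(\omega,b)\in\Theta_R$, set $w:=(b,\omega)\in\R^3$. Then $\|w\|_2=\sqrt{1+b^2}\le\sqrt{1+R^2}$, and by positive homogeneity of $\relu$,
\[
\Phi_{\omega,b}(u)=\relu(\langle w,(1,u)\rangle)=\|w\|_2\,\relu(\langle v,(1,u)\rangle),\qquad v:=w/\|w\|_2\in\Ss^2 .
\]
Taking suprema gives \eqref{eq:relu_parameters_to_sphere}, and chaining it with the previous bound gives \eqref{eq:sobolev_sphere_integral_control}.
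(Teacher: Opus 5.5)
Your proposal is correct and follows essentially the same route as the paper. Both use the Mao--Siegel embedding on $B_1$, then a Banach--Alaoglu compactness argument to pass from the closed-hull gauge to exact representing measures. Both then use Fourier scaling together with positive homogeneity of $\relu$ to reach $B_R$ (giving $A_R=A_0C_{{\rm sc},R}/R$), and finish with Fubini and the normalization $v=(b,\omega)/\sqrt{1+b^2}$.

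The only difference is in the bridge step. You prove just the inequality $\|h\|_{\mathcal V_1}\le\|h\|_{\text{hull}}$ via pointwise convergence of $T_1\beta_n$, which suffices: dominated convergence handles the $L^2$-closure case, and separability of $C(\Theta_1)$ gives the sequential extraction. The paper instead proves the full identification of both the uniform and the $L^2$ closed hulls with $T_R$ of the total-variation unit ball, using uniform convergence.
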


\begin{proof}
Throughout the proof, $d$ denotes the Euclidean metric inherited by
$\Theta_R\subset\R^3$.
For $R=1$, $\mathcal P_1$ is exactly the ReLU dictionary used by
\citet{mao2024approximation}. As in that paper, define the unit ball of
$K_1(\mathcal P_1)$ to be
$\overline{\operatorname{aconv}(\mathcal P_1)}^{\,L^2(B_1)}$, where
\[
\operatorname{aconv}(\mathcal P_1)
:=
\left\{
\sum_{j=1}^N a_j\phi_j:
N\in\N,\ \phi_j\in\mathcal P_1,\
\sum_{j=1}^N|a_j|\le1
\right\}
\]
is the finite absolutely convex hull. The corresponding variation-space norm
on $K_1(\mathcal P_1)$ is
\[
\|g\|_{K_1(\mathcal P_1)}
:=
\inf\left\{a>0:
g\in a\,\overline{\operatorname{aconv}(\mathcal P_1)}^{\,L^2(B_1)}
\right\}.
\]
We now compare this closed-hull norm with the exact signed-measure norm
\eqref{eq:variation_measure_norm}. Extending the terminology and notation of
\Cref{def:weak_vague} to finite signed measures on the compact space
$\Theta_R$, we write
\begin{equation}\label{eq:parameter_measure_weak}
\beta_n\xrightarrow{w}\beta
\quad\Longleftrightarrow\quad
\int_{\Theta_R}\psi\,d\beta_n\longrightarrow
\int_{\Theta_R}\psi\,d\beta
\quad\text{for every }\psi\in C(\Theta_R).
\end{equation}
This is weak convergence in the measure-theoretic terminology used in
\Cref{def:weak_vague}; every continuous function on $\Theta_R$ is bounded.
By the Riesz representation theorem,
$\mathfrak M(\Theta_R)=C(\Theta_R)^*$, and this same convergence is the
functional-analytic weak-* topology
$\sigma(\mathfrak M(\Theta_R),C(\Theta_R))$.
Consequently, the total-variation unit ball is weak-* compact by
Banach--Alaoglu. Since $\Theta_R$ is compact and metrizable,
$C(\Theta_R)$ is separable, so the weak-* topology is metrizable on this unit
ball. Thus every sequence in the unit ball has a subsequence
$\beta_{n_j}\xrightarrow{w}\beta$ with $\|\beta\|_{\rm TV}\le1$.

\smallskip
\noindent\textbf{Claim 1.} For every $R>0$,
\begin{equation}\label{eq:uniform_hull_measure_ball}
T_R\{\beta\in\mathfrak M(\Theta_R):\|\beta\|_{\rm TV}\le1\}
=
\overline{\operatorname{aconv}(\mathcal P_R)}^{\,C(B_R)},
\end{equation}
where the closure is taken in the uniform norm on $C(B_R)$.

We first record that weak convergence on a total-variation bounded sequence
is carried by $T_R$ to uniform convergence. For $u\in B_R$, define
\[
\Psi_u(\omega,b):=\Phi_{\omega,b}(u),
\qquad (\omega,b)\in\Theta_R.
\]
Joint continuity of the integrand on the compact set
$\Theta_R\times B_R$ implies that
$u\mapsto\Psi_u$ is continuous from $B_R$ into $C(\Theta_R)$ in the uniform
norm. Hence $\{\Psi_u:u\in B_R\}$ is compact in $C(\Theta_R)$. Suppose that
$\beta_n\xrightarrow{w}\beta$ and
$\sup_n\|\beta_n\|_{\rm TV}<\infty$, and set
$L:=\sup_n\|\beta_n-\beta\|_{\rm TV}<\infty$. Given $\epsilon>0$, the open
balls in $C(\Theta_R)$ of radius $\epsilon/(2(L+1))$ centered at the elements
of $\{\Psi_u:u\in B_R\}$ cover this compact set. A finite subcover therefore
provides $u_1,\ldots,u_N\in B_R$ such that, for every $u\in B_R$, there is an
index $j$ for which
\[
\|\Psi_u-\Psi_{u_j}\|_{C(\Theta_R)}<\frac{\epsilon}{2(L+1)}.
\]
By \eqref{eq:parameter_measure_weak}, for all sufficiently large $n$,
\[
\max_{1\le j\le N}
\left|\int_{\Theta_R}\Psi_{u_j}\,d(\beta_n-\beta)\right|<\frac{\epsilon}{2}.
\]
For such $u$ and $j$,
\[
\left|\int_{\Theta_R}\Psi_u\,d(\beta_n-\beta)\right|
\le
\left|\int_{\Theta_R}\Psi_{u_j}\,d(\beta_n-\beta)\right|
+
\|\Psi_u-\Psi_{u_j}\|_{C(\Theta_R)}
\|\beta_n-\beta\|_{\rm TV}
<
\frac{\epsilon}{2}+\frac{\epsilon L}{2(L+1)}
<\epsilon.
\]
Taking the supremum over $u\in B_R$ proves
\begin{equation}\label{eq:weak_to_uniform}
\beta_n\xrightarrow{w}\beta,\qquad
\sup_n\|\beta_n\|_{\rm TV}<\infty
\quad\Longrightarrow\quad
T_R\beta_n\longrightarrow T_R\beta\ \text{in }C(B_R).
\end{equation}

We now prove the two inclusions in \eqref{eq:uniform_hull_measure_ball}.
Every $g\in\operatorname{aconv}(\mathcal P_R)$ has the form
\[
g=\sum_{j=1}^N a_j\Phi_{\omega_j,b_j},
\qquad \sum_{j=1}^N|a_j|\le1.
\]
Thus $g=T_R\beta$ for the signed atomic measure
$\beta:=\sum_{j=1}^N a_j\delta_{(\omega_j,b_j)}$, which satisfies
$\|\beta\|_{\rm TV}=\sum_j|a_j|\le1$. Now let
$g_n\in\operatorname{aconv}(\mathcal P_R)$ converge uniformly to $g$, and
choose corresponding signed atomic measures $\beta_n$ such that
$g_n=T_R\beta_n$ and $\|\beta_n\|_{\rm TV}\le1$. By the compactness
conclusion above, a subsequence
satisfies $\beta_{n_j}\xrightarrow{w}\beta$ for some
$\|\beta\|_{\rm TV}\le1$. By \eqref{eq:weak_to_uniform},
$T_R\beta_{n_j}\to T_R\beta$ uniformly. Hence $g=T_R\beta$, proving
\[
\overline{\operatorname{aconv}(\mathcal P_R)}^{\,C(B_R)}
\subset
T_R\{\beta:\|\beta\|_{\rm TV}\le1\}.
\]

For the reverse inclusion, fix $\beta\in\mathfrak M(\Theta_R)$ with
$\|\beta\|_{\rm TV}\le1$. For each $n$, choose a finite Borel partition
$\{E_{n,j}\}_{j=1}^{N_n}$ of $\Theta_R$ into nonempty sets of diameter at most
$1/n$, choose $\theta_{n,j}\in E_{n,j}$, and define
\[
\beta_n:=\sum_{j=1}^{N_n}\beta(E_{n,j})\delta_{\theta_{n,j}}.
\]
Then
\[
\|\beta_n\|_{\rm TV}
=\sum_{j=1}^{N_n}|\beta(E_{n,j})|
\le\sum_{j=1}^{N_n}|\beta|(E_{n,j})
=\|\beta\|_{\rm TV}\le1.
\]
For every $\psi\in C(\Theta_R)$, uniform continuity gives
\[
\begin{aligned}
\left|\int_{\Theta_R}\psi\,d\beta_n-
\int_{\Theta_R}\psi\,d\beta\right|
&\le
\sum_{j=1}^{N_n}\int_{E_{n,j}}
|\psi(\theta_{n,j})-\psi(\theta)|\,d|\beta|(\theta)\\
&\le
\left(
\sup_{\substack{\theta,\theta'\in\Theta_R\\
d(\theta,\theta')\le1/n}}
|\psi(\theta)-\psi(\theta')|
\right)\|\beta\|_{\rm TV}
\longrightarrow0.
\end{aligned}
\]
Thus $\beta_n\xrightarrow{w}\beta$. Since
$T_R\beta_n\in\operatorname{aconv}(\mathcal P_R)$,
\eqref{eq:weak_to_uniform} shows that $T_R\beta$ belongs to the uniform
closure of $\operatorname{aconv}(\mathcal P_R)$. This proves the reverse
inclusion and hence \eqref{eq:uniform_hull_measure_ball}.

\smallskip
\noindent\textbf{Claim 2.} The corresponding identity in $L^2(B_R)$ is
\begin{equation}\label{eq:L2_hull_measure_ball}
\overline{\operatorname{aconv}(\mathcal P_R)}^{\,L^2(B_R)}
=
\{T_R\beta:\beta\in\mathfrak M(\Theta_R),\ \|\beta\|_{\rm TV}\le1\},
\end{equation}
where the continuous functions on the right are viewed as $L^2(B_R)$
equivalence classes. To prove the inclusion from right to left, let
$T_R\beta$ belong to the right-hand side. By
\eqref{eq:uniform_hull_measure_ball}, there are
$g_n\in\operatorname{aconv}(\mathcal P_R)$ such that
$g_n\to T_R\beta$ uniformly. Since $B_R$ has finite Lebesgue measure,
\[
\|g_n-T_R\beta\|_{L^2(B_R)}
\le |B_R|^{1/2}\|g_n-T_R\beta\|_{C(B_R)}
\longrightarrow0.
\]
Thus $T_R\beta$ belongs to the $L^2(B_R)$ closed hull. Conversely, suppose that
$g_n=T_R\beta_n\in\operatorname{aconv}(\mathcal P_R)$ converges to $g$ in
$L^2(B_R)$, where the $\beta_n$ are atomic and
$\|\beta_n\|_{\rm TV}\le1$. As above, a subsequence converges weakly to some
$\beta$ with $\|\beta\|_{\rm TV}\le1$, and
\eqref{eq:weak_to_uniform} gives
$T_R\beta_{n_j}\to T_R\beta$ uniformly, hence also in $L^2(B_R)$. Uniqueness
of the $L^2$ limit yields $g=T_R\beta$ almost everywhere. Thus every element
of the $L^2$ closed hull has a continuous representative in $\mathcal V_R$;
this representative is unique because two continuous functions on $B_R$ that
agree almost everywhere agree everywhere. This proves
\eqref{eq:L2_hull_measure_ball}.

For every $a>0$, linearity of $T_R$ and
\eqref{eq:L2_hull_measure_ball} give
\[
a\,\overline{\operatorname{aconv}(\mathcal P_R)}^{\,L^2(B_R)}
=
\{T_R\beta:\beta\in\mathfrak M(\Theta_R),\ \|\beta\|_{\rm TV}\le a\}.
\]
Consequently, for every $g\in\mathcal V_R$,
\[
\inf\left\{a>0:
g\in a\,\overline{\operatorname{aconv}(\mathcal P_R)}^{\,L^2(B_R)}
\right\}
=
\inf_{\substack{\beta\in\mathfrak M(\Theta_R)\\T_R\beta=g}}
\|\beta\|_{\rm TV}
=\|g\|_{\mathcal V_R}.
\]
In particular, for $R=1$ the $K_1(\mathcal P_1)$ norm of
\citet{mao2024approximation} agrees with the exact signed-measure norm
\eqref{eq:variation_measure_norm}.

The Sobolev norm used in Theorem~1 of \citet{mao2024approximation} is equivalent
to the restriction norm \eqref{eq:restriction_sobolev_norm}. After absorbing the
fixed norm-equivalence constant, their result, specialized to $(d,k)=(2,1)$,
states that
\begin{equation}\label{eq:mao_unit}
\|g\|_{K_1(\mathcal P_1)}
\le A_0\|g\|_{W^{5/2}(L^2(B_1))}
\qquad\text{for every }g\in W^{5/2}(L^2(B_1))
\end{equation}
for a constant $A_0<\infty$. At this stage, $g$ is an $L^2(B_1)$
equivalence class. By \eqref{eq:L2_hull_measure_ball}, every element of
$K_1(\mathcal P_1)$ agrees almost everywhere with $T_1\beta$ for some finite
signed measure $\beta$, and $T_1\beta$ is continuous on $B_1$. This continuous
representative is unique, because two continuous functions on $B_1$ that
agree almost everywhere agree everywhere. Thus
\eqref{eq:L2_hull_measure_ball} and the equality of the two norms proved above
turn \eqref{eq:mao_unit} into the same inequality for the exact representation
norm $\|\cdot\|_{\mathcal V_1}$. In particular, the class of representing
measures in \eqref{eq:variation_measure_norm} is nonempty for every
$g\in W^{5/2}(L^2(B_1))$.

It remains to pass from $B_1$ to $B_R$. For
$h\in W^{5/2}(L^2(B_R))$, set $\widetilde h(x):=h(Rx)$. If
$H\in W^{5/2}(L^2(\R^2))$ extends $h$, then $\widetilde H(x):=H(Rx)$ extends
$\widetilde h$. Since
$\widehat{\widetilde H}(\xi)=R^{-2}\widehat H(\xi/R)$, the change of variables
$\xi=R\zeta$ in \eqref{eq:whole_space_sobolev} gives
\[
\begin{aligned}
\|\widetilde H\|_{W^s(L^2(\R^2))}^2
&=
R^{-2}\int_{\R^2}(1+R\|\zeta\|_2)^{2s}
|\widehat H(\zeta)|^2\,d\zeta\\
&\le
R^{-2}\max\{1,R\}^{2s}
\|H\|_{W^s(L^2(\R^2))}^2.
\end{aligned}
\]
Taking square roots yields
\[
\|\widetilde H\|_{W^s(L^2(\R^2))}
\le R^{-1}\max\{1,R\}^{s}\|H\|_{W^s(L^2(\R^2))},
\qquad s\ge0.
\]
Taking the infimum over all extensions $H$ and setting
$C_{{\rm sc},R}:=R^{-1}\max\{1,R\}^{5/2}$ yields
\begin{equation}\label{eq:scale_frac}
\|\widetilde h\|_{W^{5/2}(L^2(B_1))}
\le C_{{\rm sc},R}\|h\|_{W^{5/2}(L^2(B_R))}.
\end{equation}
If $\alpha$ represents $\widetilde h$ on $B_1$, let
\[
T_R^{\rm par}(\omega,b):=(\omega,Rb),
\qquad
\beta:=\frac1R(T_R^{\rm par})_\#\alpha.
\]
Positive homogeneity of ReLU gives, for every $u\in B_R$,
\[
h(u)
=\widetilde h(u/R)
=\int_{\Theta_R}\relu(\omega\cdot u+b)\,d\beta(\omega,b),
\qquad
\|\beta\|_{\rm TV}=\frac1R\|\alpha\|_{\rm TV}.
\]
Combining this with \eqref{eq:mao_unit} and \eqref{eq:scale_frac} proves
\eqref{eq:Sob_to_var_rewrite} with
\[
A_R:=\frac{A_0C_{{\rm sc},R}}{R}.
\]

Finally, let $\beta\in\mathfrak M(\Theta_R)$ represent $h$. The integrand
$((\omega,b),u)\mapsto\Phi_{\omega,b}(u)$ is continuous, hence bounded, on
the compact set $\Theta_R\times B_R$. Since $|\beta|$ and $|\eta|$ are finite,
it is integrable with respect to $|\beta|\otimes|\eta|$, so Fubini's theorem
gives
\[
\begin{aligned}
\int_{B_R}h(u)\,d\eta(u)
&=
\int_{B_R}\int_{\Theta_R}
\Phi_{\omega,b}(u)\,d\beta(\omega,b)\,d\eta(u)\\
&=
\int_{\Theta_R}\left(\int_{B_R}
\Phi_{\omega,b}(u)\,d\eta(u)\right)d\beta(\omega,b).
\end{aligned}
\]
Consequently,
\[
\left|\int_{B_R}h(u)\,d\eta(u)\right|
\le
\|\beta\|_{\rm TV}
\sup_{(\omega,b)\in\Theta_R}
\left|\int_{B_R}\Phi_{\omega,b}(u)\,d\eta(u)\right|.
\]
Taking the infimum over all representing measures $\beta$ yields the same
bound with $\|h\|_{\mathcal V_R}$ in place of $\|\beta\|_{\rm TV}$. Combining
this bound with \eqref{eq:Sob_to_var_rewrite} proves
\eqref{eq:sobolev_relu_integral_control}.

To pass from the affine ReLU parameters to sphere directions, fix
$(\omega,b)\in\Theta_R$ and set
\[
v:=\frac{(b,\omega)}{\sqrt{b^2+\|\omega\|_2^2}}
=\frac{(b,\omega)}{\sqrt{b^2+1}}\in\Ss^2.
\]
Positive homogeneity of ReLU gives, for every $u\in\R^2$,
\[
\Phi_{\omega,b}(u)
=\relu(\omega\cdot u+b)
=\sqrt{b^2+1}\,\relu\!\big(\langle v,(1,u)\rangle\big).
\]
Since $|b|\le R$, taking suprema proves
\eqref{eq:relu_parameters_to_sphere}; combining it with
\eqref{eq:sobolev_relu_integral_control} proves
\eqref{eq:sobolev_sphere_integral_control}.
\end{proof}

\section{Discussion: Linear Summaries, Augmentation, and Related Embeddings}
\label{sec:augment_discussion}

\subsection{Augmentation Strategy}\label{sec:augmentation_strategy}

This appendix first records how signed diagonal augmentation makes linear summaries of persistence diagrams compatible with the deletion-to-diagonal mechanism of $\POT_1$, without prescribing ad hoc vanishing weights near the diagonal.

Let $\Phi$ be a linear integral summary of the form
\[
\Phi(\tilde\mu)=\int_{\overline X}\varphi(\cdot,p)\,d\tilde\mu(p),
\]
defined for positive Borel measures $\tilde\mu$ on $\overline X$, with values in a normed linear space $(\mathsf F,\|\cdot\|_{\mathsf F})$. Many standard constructions fit this template, including persistence images/surfaces and several kernel-based mean embeddings, provided the feature map extends naturally from \(X\) to \(\overline X=X\cup\Delta\).

For a measure $\mu$ on $X$, we consider its signed diagonal augmentation
\[
\mu^{\aug}=\mu-(\pi_\Delta)_\#\mu .
\]
Then, by linearity and the cross-augmentation identity \Cref{eq:aug_cross_identity},
\[
\Phi(\mu^{\aug})-\Phi(\nu^{\aug})
=
\Phi(\mu\oplus_\Delta\nu)-\Phi(\nu\oplus_\Delta\mu).
\]
Thus differences of signed augmented features can be rewritten as differences of the same feature map evaluated on positive cross-augmented measures. This is the key structural point: once deletions are encoded at the level of measures, comparisons reduce to ordinary optimal transport on $\overline X$.

Assume now that the feature family is uniformly Lipschitz in the measure variable, namely that there exists $L<\infty$ such that
\[
\|\varphi(\cdot,p)-\varphi(\cdot,q)\|_{\mathsf F}\le L\,\|p-q\|_\infty
\qquad\forall p,q\in\overline X.
\]
Then $\Phi$ is $L$-Lipschitz with respect to $\OT_1$ on positive measures of equal mass. Indeed, if $\tilde\mu,\tilde\nu$ are such measures and $\Gamma\in\Pi(\tilde\mu,\tilde\nu)$ is any coupling, then
\begin{align*}
\|\Phi(\tilde\mu)-\Phi(\tilde\nu)\|_{\mathsf F}
&=
\left\|
\int_{\overline X\times\overline X}
\bigl(\varphi(\cdot,p)-\varphi(\cdot,q)\bigr)\,d\Gamma(p,q)
\right\|_{\mathsf F}\\
&\le
\int_{\overline X\times\overline X}
\|\varphi(\cdot,p)-\varphi(\cdot,q)\|_{\mathsf F}\,d\Gamma(p,q)\\
&\le
L\int_{\overline X\times\overline X}\|p-q\|_\infty\,d\Gamma(p,q).
\end{align*}
Taking the infimum over $\Gamma$ gives
\[
\|\Phi(\tilde\mu)-\Phi(\tilde\nu)\|_{\mathsf F}\le L\,\OT_1(\tilde\mu,\tilde\nu).
\]
Applying this with $\tilde\mu=\mu\oplus_\Delta\nu$ and $\tilde\nu=\nu\oplus_\Delta\mu$, and using \Cref{prop:POT_vs_OT_aug}, yields
\[
\|\Phi(\mu^{\aug})-\Phi(\nu^{\aug})\|_{\mathsf F}
\le
L\,\OT_1(\mu\oplus_\Delta\nu,\nu\oplus_\Delta\mu)
\le
2L\,\POT_1(\mu,\nu).
\]
Hence, once the deletion mechanism is built into the representation through diagonal augmentation, no additional diagonal weighting is needed to obtain $\POT_1$-stability for linear integral summaries. Note that this also matches the general role of $\POT_1$ for linear representations emphasized in \citet{skraba2020wasserstein}.

Stability, however, is only one side of the problem. To obtain inverse continuity, the feature family $\{\varphi(\cdot,p)\}_{p\in\overline X}$ must also be rich enough to approximate $1$-Lipschitz test functions on compact subsets of $\overline X$. When such an approximation property is available, the same duality-and-approximation mechanism used in \Cref{prop:compact_core_holder_siegel} can in principle be adapted to derive local inverse bounds on compact cores. Persistence spheres realize this strategy through ReLU ridge features, for which recent approximation theory provides explicit rates. In this sense, the augmentation principle is more general than persistence spheres themselves: it suggests a systematic route for designing parameter-free, $\POT_1$-coherent linear summaries of persistence diagrams.

\subsection{Comparison with classical summaries of persistence diagrams}
\label{sec:comparison_classical_summaries}

We compare persistence spheres with persistence images, persistence landscapes, sliced Wasserstein
kernels, and persistence splines
\citep{bubenik2015statistical,adams2017persistence,carriere2017sliced,dong2024persistence}.
These constructions differ both in how they interact with the transport geometry of diagrams and in
the restrictions under which Hilbert-space embedding results are available.

\paragraph{Persistence images.}
Persistence images \citep{adams2017persistence} are obtained by first associating to a persistence
diagram a persistence surface, namely a weighted sum of smoothing kernels centered at the
diagram points, and then discretizing this surface on a fixed grid. The persistence surface is a
function-valued summary, whereas the discretized image is a vector in a finite-dimensional Euclidean
space, hence in a Hilbert space. Under suitable regularity assumptions on the weighting function, in
particular its vanishing on the diagonal, and on the smoothing kernel, Adams et al.\ prove
\(\POT_1\)-type stability for both levels of the construction: the persistence surface is stable in
sup norm, and the persistence image is stable after discretization \citep{adams2017persistence}.

The Hilbert-valued object is the discretized image rather than the persistence surface itself, so the
result is a family of finite-dimensional Euclidean embeddings indexed by the chosen grid. Kernel
smoothing and the weighting scheme also affect the induced geometry, as illustrated in
\Cref{sec:deforming_geometry}.

\paragraph{Persistence landscapes.}
Persistence landscapes \citep{bubenik2015statistical} associate to each diagram a sequence of tent-like
functions and naturally take values in \(L^p(\mathbb N\times\mathbb R)\); for \(p=2\) this is a
Hilbert space. The original theory proves, among other results, sup-norm stability with respect to the
bottleneck distance. For finite \(q\), the available bounds use persistence-weighted transport
quantities or additional assumptions such as bounded total persistence, rather than a global
Lipschitz statement for \(D\mapsto\lambda(D)\) from arbitrary persistence diagrams endowed with
\(W_p\) (\(1\le p<\infty\)) into \(L^q\) for finite \(q\).
Indeed, \citet{skraba2020wasserstein} show that for finite \(p,q\), this map is in general not even
H\"older continuous from diagrams endowed with \(W_p\) to landscapes endowed with the \(L^q\) norm.
The elementary example in \Cref{sec:deforming_geometry} exhibits the corresponding amplification of
high-persistence variability.

\paragraph{Sliced Wasserstein kernels.}
Sliced Wasserstein kernels \citep{carriere2017sliced} induce an implicit embedding into a reproducing
kernel Hilbert space through sliced optimal transport. On classes of diagrams with uniformly bounded
cardinality, the sliced Wasserstein distance is quantitatively comparable to \(\POT_1\), and the
kernel distance admits two-sided control through continuous comparison functions. Their metric control
therefore relies on bounded cardinality, whereas ours uses control of tail-to-diagonal mass.

\paragraph{Comparison of regimes.}
The class \(\mathcal M_c(\mathcal A,\mathcal B)\) from \Cref{def:McAB} differs from the
bounded-cardinality classes used in most Hilbert-embedding results
\citep{carriere2017sliced,mitra2024geometric,bate2024bi}. It imposes no bound on total mass. For
counting measures, the number of points may diverge, provided the additional mass concentrates near
the diagonal in a quantified way through the mixed moment bound
\[
\int_X \|(1,p)\|_2\,\pers(p)\,d\mu(p)\le \mathcal B(\mathbb R^2),
\]
together with the local first-moment control encoded by \(\mathcal A\) and the compatibility condition
linking the growth of \(\mathcal A(B_R)\) to the decay of
\(\mathcal B(\mathbb R^2\setminus B_{R-1})\). This allows increasing cardinality in noisy settings,
while a cardinality bound alone does not prevent even a one-point diagram from drifting arbitrarily far
in the diagonal direction. Thus the mechanism behind \Cref{cor:McAB_bicont_L2} is a
\(\POT_1\)-adapted control of tail-to-diagonal mass rather than a hard cap on the number of points.

\section{Deforming the Wasserstein Geometry}
\label{sec:deforming_geometry}

This appendix uses elementary examples to illustrate how different summaries reshape pairwise
$\POT_1$ distances after mapping diagrams into a linear or Hilbert space. These examples are not a
formal distortion analysis; they isolate geometric biases that may matter in applications. We first
compare the revised persistence-sphere definition
with the original weighted version from \citet{pegoraro2025persistence}, isolating the role of
signed diagonal augmentation. We then consider persistence landscapes and Gaussian persistence
images, for which high-persistence amplification and kernel-dependent saturation can be computed
explicitly.

\subsection{Comparison with the Original Persistence Spheres}
\label{sec:compare_old}

We start with a minimal synthetic example aimed at isolating the effect of the signed diagonal
augmentation in the revised definition. The example serves two purposes: it illustrates the improved
alignment with the \(\POT_1\) geometry under diagonal translations, and it provides an empirical
check of the decay behavior predicted by
\Cref{cor:counting_shift_pointwise_limit,lem:uniform_decay}.

For \(k\ge 0\), let
\[
p_k:=(0,1)+(k,k),
\qquad
q_k:=p_k+\frac{1}{\sqrt{2}}(1,1),
\]
and define the one-point diagrams
\[
D_k:=\{p_k\},
\qquad
D_k':=\{q_k\}.
\]
Equivalently, in measure form, let \(\mu_k:=\delta_{p_k}\) and \(\nu_k:=\delta_{q_k}\). We monitor
two quantities:
\begin{enumerate}
    \item the diagonal-shift discrepancy
    \[
    \|S(\mu_k)-S(\nu_k)\|_{L^2(\Ss^2)},
    \]
    measuring how strongly a pure \((1,1)\)-offset is detected as both points drift along the
    diagonal; and
    \item the deletion cost
    \[
    \|S(\mu_k)-S(0)\|_{L^2(\Ss^2)}=\|S(\mu_k)\|_{L^2(\Ss^2)},
    \]
    measuring how the cost of sending \(p_k\) to the diagonal is encoded by the representation.
\end{enumerate}

We compare these curves with the original weighted construction of
\citet{pegoraro2025persistence}, using the weighting scheme
\[
\lambda(p):=\frac{y-x}{2\|(1,p)\|_2},
\qquad
\omega_K(p)=\frac{2}{\pi}\arctan\!\left(\frac{\lambda(p)}{K}\right),
\]
for different values of \(K\). This is a natural benchmark: among the weighting schemes considered
in \cite{pegoraro2025persistence}, \(\omega_K\) gave the best practical behavior, and in
retrospect this is not surprising, since it qualitatively mimics the diagonal-deletion attenuation
that the new definition now encodes intrinsically.

The results are shown in \Cref{fig:vs_old}. In \Cref{fig:decay}, all methods exhibit decay of
\(\|S(\mu_k)-S(\nu_k)\|_{L^2}\) as \(k\to\infty\), but for different reasons. For the updated
persistence spheres, the decay is intrinsic and reflects
\Cref{cor:counting_shift_pointwise_limit}: far along the diagonal, differences in the
\(d\)-coordinate are progressively washed out, and the comparison reduces pointwise in \(v\) to the
total persistence weighted by \(t(v)\). For the older construction, instead, the decay is enforced
by the reweighting itself: for every fixed \(K>0\), the factor \(\omega_K\) tends to \(0\) along
each diagonal line
\[
\ell_q:=\left\{\,q+s\frac{1}{\sqrt{2}}(1,1):\ s\in\R\,\right\},
\]
so points translated in the direction \(\frac{1}{\sqrt{2}}(1,1)\) are eventually progressively
downweighted.

The deletion plot in \Cref{fig:deletion} makes the main difference even clearer. In the revised
definition, \Cref{rmk:pot_to_zero} shows that comparison with \(0\) in the sup norm depends only on
persistence mass, hence is insensitive to diagonal translations. Since
\(\pers(p_k)=\pers(p_0)\) for all \(k\), the deletion cost in the new persistence spheres is
therefore essentially constant in \(k\); the
small variability visible for small \(k\) in the figure comes from averaging the directional factor
\(t(v)\) through the \(L^2\)-norm. In the weighted construction of
\cite{pegoraro2025persistence}, the same qualitative behavior is obtained only indirectly. As
\(p_k\) moves away from the origin, the geometric contribution first grows, and only later is this
compensated by the decay of \(\omega_K(p_k)\), producing for both values of \(K\) a more
pronounced transient regime and a stabilization that depends on \(K\).

Overall, \Cref{fig:vs_old} shows that the augmentation-based formulation induces a milder
deformation of the \(\POT_1\) geometry than the original weighted definition and that the updated
persistence spheres recover the same desirable behavior in a parameter-free and geometrically
intrinsic way.

\begin{figure}
\centering
	\begin{subfigure}[c]{0.45\textwidth}
		\centering
		\includegraphics[width = \textwidth]{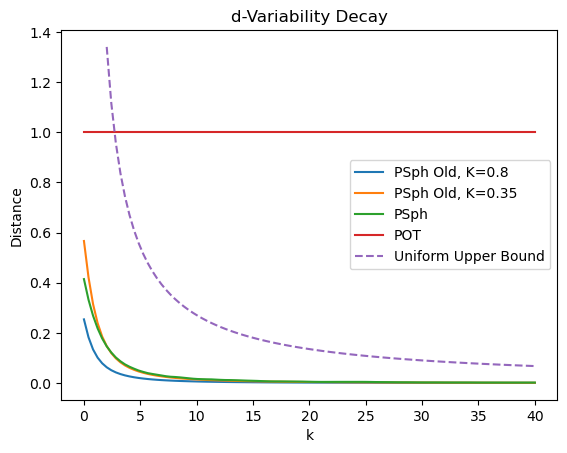}
		\captionsetup{singlelinecheck=off, margin={0.05cm, 0.05cm}}
		\caption{\textbf{Diagonal-shift discrepancy.} The quantity
        $\|S(\mu_k)-S(\nu_k)\|_{L^2(\Ss^2)}$ as a function of $k$, for the updated persistence
        spheres and for the definition of \cite{pegoraro2025persistence} using $\omega_K$ with
		$K\in\{0.35,0.8\}$. For reference we also plot $\POT_1(\mu_k,\nu_k)$ and the uniform upper
        bound from \Cref{lem:uniform_decay}: the latter controls
        $\|S(\mu_k)-S(\nu_k)\|_\infty$, whereas the plotted curves average the discrepancy over
		$\Ss^2$ via the $L^2$ norm.}
		\label{fig:decay}
	\end{subfigure}\hfill
	\begin{subfigure}[c]{0.45\textwidth}
		\centering
		\includegraphics[width = \textwidth]{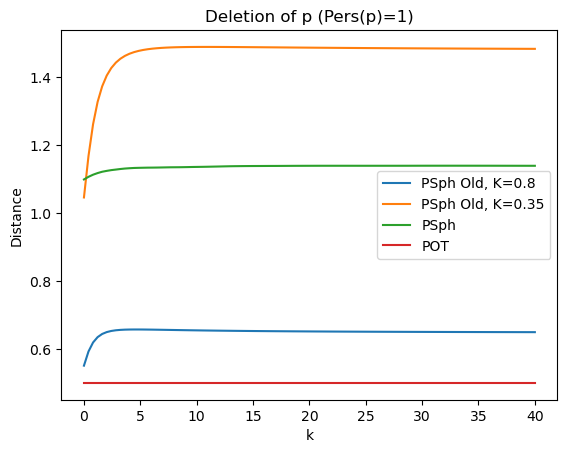}
		\captionsetup{singlelinecheck=off, margin={0.05cm, 0.05cm}}
		\caption{\textbf{Deletion cost.} The quantity
        $\|S(\mu_k)\|_{L^2(\Ss^2)}=\|S(\mu_k)-S(0)\|_{L^2(\Ss^2)}$ as a function of $k$,
        compared across the updated persistence spheres, the construction of
        \cite{pegoraro2025persistence} with $\omega_K$ ($K\in\{0.35,0.8\}$), and the
         deletion cost $\POT_1(\mu_k,0)=\pers(p_0)$, constant in $k$.}
		\label{fig:deletion}
	\end{subfigure}
    \caption{
    One-point diagrams drifting along $(1,1)$. Left: attenuation of a pure diagonal offset
    between $D_k$ and $D_k'$. Right: behavior of the deletion-to-diagonal cost for $D_k$. The
    updated definition matches the $\POT_1$ geometry by construction, whereas in
    \cite{pegoraro2025persistence} both effects are mediated by the decay of the reweighting
    $\omega_K$ along diagonal lines.}
    \label{fig:vs_old}
\end{figure}

\subsection{Persistence Landscapes}

Let
\[
p=(1,2),\qquad q=(1,3),\qquad
u=\frac{1}{\sqrt{2}}(-1,1),
\qquad p_k=p+k u,\qquad q_k=q+k u,
\]
and define the one-point diagrams
\[
D_k:=\{p_k\},
\qquad D_k':=\{q_k\}.
\]
For a one-point diagram, the persistence landscape is a single tent centered at
the midpoint of the persistence pair, with height equal to its persistence. Write
\(h_k:=\pers(p_k)=\frac12+\frac{k}{\sqrt2}\). The exact $L^2$ distance from
$\PL(D_k)$ to the zero landscape is
\[
\|\PL(D_k)\|_{L^2(\R)}
=\sqrt{\frac23}\,h_k^{3/2}.
\]
A direct integration also gives
\[
\|\PL(D_k)-\PL(D_k')\|_{L^2(\R)}
=\sqrt{h_k}
=\sqrt{\frac12+\frac{k}{\sqrt2}}.
\]
Thus the pairwise landscape distance grows like $k^{1/2}$, whereas the deletion
distance grows like $k^{3/2}$. In \Cref{fig:vs_PL}, the orange deletion curve
uses $w=(1,1.00001)$ as a near-diagonal proxy for the empty diagram; its
landscape contribution is negligible.

These computations make explicit the main geometric bias of persistence landscapes: they amplify
variability at high persistence. In particular, moving a point in a direction that increases
persistence can have a much larger effect on the landscape norm than one would expect from the
underlying transport displacement alone.

This behavior is consistent with the general instability results of \cite{skraba2020wasserstein}:
for every \(q\in[1,\infty)\), the persistence-landscape map from persistence diagrams endowed with
\(W_p\) to \(L^q\) is not H\"older continuous. Their counterexample uses two one-point diagrams,
\[
D=\{(0,a)\},
\qquad
D'=\{(0,a-r)\},
\]
with \(r\ll a\), for which \(W_p(D,D')=r\) while
\[
\|\PL(D)-\PL(D')\|_{L^q}
\gtrsim
r\,(a/2-r)^{1/q}.
\]
This cannot be bounded uniformly by \(Cr^\alpha\), and formalizes the same high-persistence
amplification seen in the elementary example above.

\subsection{Persistence Images}

Persistence images deform the \(\POT_1\) geometry through two distinct mechanisms:
kernel-induced saturation and persistence-dependent reweighting. The first effect is already
visible at the level of a single Gaussian atom, before any additional weighting is introduced.

Strictly speaking, persistence images are defined by first forming a persistence
surface on the birth-persistence plane and then integrating that surface over the boxes of a chosen grid,
typically on a relevant subdomain. Thus the exact construction involves both truncation and
discretization. Since our goal here is only to isolate the underlying geometric mechanism, we
work instead on the whole space \(\mathbb R^2\), ignoring boundary and discretization effects.
This keeps the computations explicit while still capturing the relevant qualitative behavior,
especially when the chosen window is large enough that the Gaussian tails outside it are
negligible. See \citet{adams2017persistence} for the original persistence-surface and
persistence-image construction.  
Let \(g_p^\sigma\) denote the isotropic Gaussian centered at \(p\) with bandwidth \(\sigma\),
\[
g_p^\sigma(z)
=
\frac{1}{2\pi\sigma^2}\exp\!\left(-\frac{\|z-p\|^2}{2\sigma^2}\right),
\qquad z\in\mathbb{R}^2.
\]
For arbitrary \(p,q\in X\), a direct computation gives
\[
\|g_p^\sigma-g_q^\sigma\|_{L^2(\mathbb{R}^2)}^2
=
\|g_p^\sigma\|_{L^2}^2+\|g_q^\sigma\|_{L^2}^2
-2\langle g_p^\sigma,g_q^\sigma\rangle,
\]
with
\[
\|g_p^\sigma\|_{L^2(\mathbb{R}^2)}^2
=
\frac{1}{4\pi\sigma^2},
\qquad
\langle g_p^\sigma,g_q^\sigma\rangle
=
\frac{1}{4\pi\sigma^2}\exp\!\left(-\frac{\|p-q\|^2}{4\sigma^2}\right).
\]
Hence
\[
\|g_p^\sigma-g_q^\sigma\|_{L^2(\mathbb{R}^2)}^2
=
\frac{1}{2\pi\sigma^2}
\left(1-\exp\!\left(-\frac{\|p-q\|^2}{4\sigma^2}\right)\right),
\]
or equivalently
\[
\|g_p^\sigma-g_q^\sigma\|_{L^2(\mathbb{R}^2)}
=
\frac{1}{\sqrt{2\pi}\sigma}
\left(1-\exp\!\left(-\frac{\|p-q\|^2}{4\sigma^2}\right)\right)^{1/2}.
\]
Thus the image-space distance is an increasing function of \(\|p-q\|\), but it does not grow
indefinitely: once the overlap between the two Gaussians becomes negligible, it saturates at
\(1/(\sqrt{2\pi}\sigma)\).

To relate this to \(\POT_1\), consider now a moving one-point diagram
\[
D_k:=\{p_k\},
\qquad
p_k:=p+k\,v,
\qquad
D':=\{q\},
\qquad k\ge 0,
\]
where \(v\) is a unit vector compatible with the upper-half-plane constraint. In
\Cref{fig:vs_PI} we take
\[
p=(100,201),
\qquad
q=(100,200),
\qquad
v=\frac{1}{\sqrt{2}}(1,1).
\]
The calculation above deliberately concerns unweighted continuous Gaussian atoms, in order to
isolate kernel-induced saturation independently of weighting, truncation, and discretization. The
same qualitative conclusion holds for weights that are constant, or approach a common constant,
away from the diagonal; if that constant is \(1\), the ceiling above is recovered.

Figure~\ref{fig:vs_PI} uses discretized, persistence-weighted images. Since \(p_k\) moves parallel
to the diagonal, its persistence weight is constant in \(k\), so weighting changes the numerical
value of the plateau but not the saturation mechanism: as the overlap with the fixed atom vanishes,
the persistence-image distance approaches a \(k\)-independent level, while the corresponding
\(\POT_1\) distance continues to grow until deletion becomes preferable.

An even stronger distortion appears when one compares a point directly with the zero diagram.
Without persistence-based weighting, deletion to zero is represented by the Gaussian mass
\(\|g_{p_k}^\sigma\|_{L^2(\mathbb{R}^2)}=1/(2\sqrt{\pi}\sigma)\), up to the
truncation and discretization effects of the actual image representation. In
other words, once the Gaussian is sufficiently far from the diagonal, its deletion cost is
essentially set by the kernel normalization alone, rather than by the persistence of the
point. This can severely undershoot the true transport cost: high-persistence points are
deleted at almost the same image cost as lower-persistence ones.

A persistence-dependent weighting can partially compensate for this effect, but only by
introducing a second deformation of the geometry. If one weights each atom by a function of
persistence, then high-persistence points receive larger amplitudes, so deletion costs can be
made to grow more in line with \(\POT_1\). But this comes at the price of an explicit
persistence bias, analogous in spirit to the high-persistence emphasis observed for persistence
landscapes. On the other hand, a weight that stays essentially flat away from the diagonal
avoids that bias, but then the deletion cost remains tied to the Gaussian mass and can
dramatically underestimate transport to the diagonal.

Thus the deformation induced by persistence images is not just the saturation of pairwise
distances. More fundamentally, the representation struggles to encode deletion costs at the
correct scale. Without persistence-based weighting, deletion can be severely underestimated;
with such weighting, one recovers a more realistic scale only by imposing a strong
persistence bias.

\begin{figure}
\centering
	\begin{subfigure}[c]{0.45\textwidth}
		\centering
		\includegraphics[width = \textwidth]{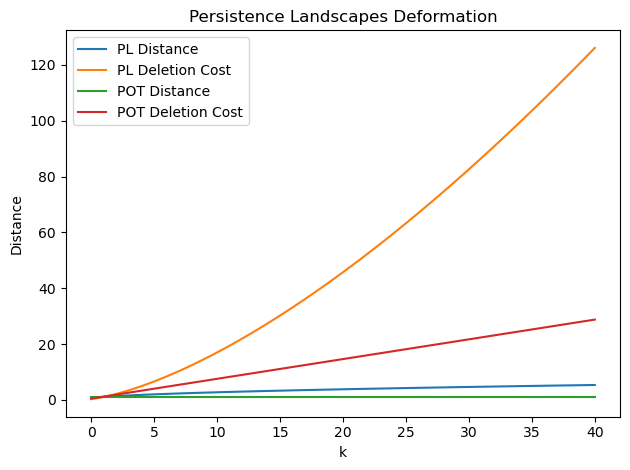}
		\captionsetup{singlelinecheck=off, margin={0.05cm, 0.05cm}}
		 \caption{\textbf{Persistence landscapes.} The blue pairwise $L^2$
landscape distance grows like $k^{1/2}$, while the orange near-diagonal
deletion proxy grows like $k^{3/2}$. The green curve is the constant pairwise
$\POT_1$ distance, while the red $\POT_1$ deletion proxy
grows linearly in $k$.}
		\label{fig:vs_PL}
	\end{subfigure}\hfill
	\begin{subfigure}[c]{0.45\textwidth}
		\centering
		\includegraphics[width = \textwidth]{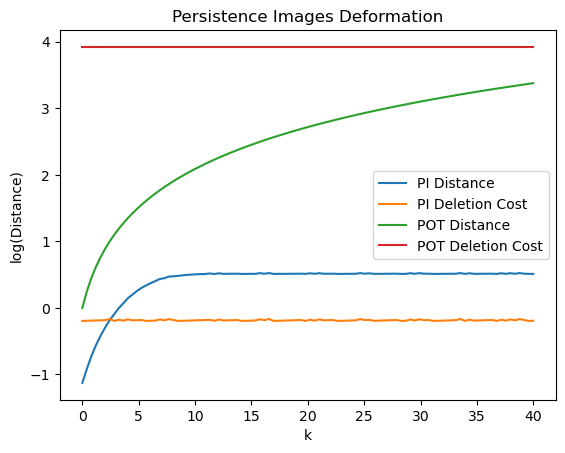}
		\captionsetup{singlelinecheck=off, margin={0.05cm, 0.05cm}}
		\caption{\textbf{Persistence images.} One-point toy example illustrating the deformation induced by Gaussian persistence images. As \(D_k=\{p_k\}\) moves in the diagonal direction, the \(L^2\) distance between the corresponding Gaussian atom and that of \(D'=\{q\}\) increases at first but then saturates at a kernel-dependent ceiling. Thus, for fixed bandwidth \(\sigma\), the persistence-image distance stabilizes much earlier than the corresponding \(\POT_1\) scale when the deletion cost is large. The vertical axis, representing the relevant distances, is displayed on a logarithmic scale.}
		\label{fig:vs_PI}
	\end{subfigure}
    \caption{
    Two elementary mechanisms by which standard topological summaries deform the \(\POT_1\)
geometry. Persistence landscapes amplify variability at high persistence, while Gaussian
persistence images induce a kernel-dependent saturation of pairwise distances. In the latter case,
for fixed bandwidth \(\sigma\), the discrepancy with \(\POT_1\) becomes more pronounced at high
persistence, since the persistence-image distance saturates at a fixed kernel scale whereas the
\(\POT_1\) flattening occurs only at the much larger deletion scale determined by the sum of the
persistences.}
    \label{fig:vs_PLPI}
\end{figure}

\subsection{Take-home message}

These examples illustrate a general point: all topological summaries deform the \(\POT_1\)
geometry, and each therefore introduces its own geometric bias into downstream analyses. From this
viewpoint, having a broader repertoire of summaries is valuable, especially when some of them, as
in the case of persistence spheres, come with explicit guarantees on their relation to the original
diagram geometry. At the same time, understanding the specific deformation mechanism of each summary
helps interpret empirical results, since part of the observed variability may come not only from the
data, but also from the way the chosen representation emphasizes, suppresses, or reshapes different
types of variation.

\section*{The Use of Large Language Models}

Large language models were occasionally used to refine the exposition and to
provide an additional check on selected calculations. The author takes full
responsibility for the content of the manuscript.

\section*{Code}

At the time of submission, the code associated with this work is not yet ready to be released in the form of a public GitHub repository or a Python package. However, the current implementation is available upon request to the author.

\acks{We thank Nicolas Chenavier, Christophe Biscio, and Nicola Rares Franco for the helpful discussions.}


\newpage

\vskip 0.2in
\bibliography{references}

\end{document}